\newcommand{\argmax}{\mathrm{argmax}}
\newcommand{\dom}{\mathrm{dom}}
\newcommand{\favg}{f_\mathrm{avg}}
\newcommand{\supp}{\mathrm{supp}}
\newcommand{\bfzero}{\mathbf{0}}
\newcommand{\non}{\mathrm{non}}
\newcommand{\gap}{\mathsf{GAP}}
\newtheorem{thm}{Theorem}
\newtheorem{lem}{Lemma}
\newtheorem{cor}{Corollary}
\newtheorem{prop}{Proposition}
\theoremstyle{definition}
\newtheorem{defn}{Definition}
\newtheorem{rem}{Remark}
\newtheorem{exmp}{Example}
\icmltitlerunning{Beyond Adaptive Submodularity: Approximation Guarantees of Greedy Policy with Adaptive Submodularity Ratio}
\begin{document}

\twocolumn[
\icmltitle{Beyond Adaptive Submodularity: Approximation Guarantees of\\Greedy Policy with Adaptive Submodularity Ratio}




\begin{icmlauthorlist}
\icmlauthor{Kaito Fujii}{utokyo}
\icmlauthor{Shinsaku Sakaue}{nttcs}
\end{icmlauthorlist}

\icmlaffiliation{utokyo}{University of Tokyo}
\icmlaffiliation{nttcs}{NTT Communication Science Laboratories}

\icmlcorrespondingauthor{Kaito Fujii}{kaito\_fujii@mist.i.u-tokyo.ac.jp}

\icmlkeywords{Machine Learning, ICML}

\vskip 0.3in
]



\printAffiliationsAndNotice{}  

\begin{abstract}
	We propose a new concept named \textit{adaptive submodularity ratio} to study the greedy policy for sequential decision making. 
	While the greedy policy is known to perform well for a wide variety of adaptive stochastic optimization problems in practice, its theoretical properties have been analyzed 
	only for a limited class of problems. 
	We narrow the gap between theory and practice 
	by using adaptive submodularity ratio, which enables us to prove 
	approximation guarantees of the greedy policy 
	for a substantially wider class of problems. 
	Examples of newly analyzed problems include important applications such as adaptive influence maximization and adaptive feature selection. 
	Our adaptive submodularity ratio also provides bounds of \textit{adaptivity gaps}. Experiments confirm that the greedy policy performs well with the applications being considered compared to standard heuristics.
\end{abstract}

\section{Introduction}\label{sec:intro}
	Sequential decision making plays a crucial role in machine learning. 
	In various scenarios, we must design an effective policy that repeatedly decides the next action to be taken by using the feedback obtained so far.
	The greedy policy is a simple but empirically effective approach to sequential decision making. 
	At each step, it myopically makes a decision that seems the most beneficial among feasible choices. 
	
	\textit{Adaptive submodularity} \cite{GK11} is a well-established framework for analyzing greedy algorithms for sequential decision making.
	It extends \textit{submodularity}, which is a diminishing returns property of set functions, to the setting of adaptive decision making.
	This framework has successfully provided theoretical guarantees for greedy algorithms for active learning \cite{GKR10}, recommendation \cite{GKWEM13}, and touch-based localization in robotics \cite{JCKKBS14}.
	
	However, adaptive submodularity is not omnipotent. 
	While the greedy policy works well for various sequential decision making problems, 
	many of these problems do not have adaptive submodularity. 
	In fact, even if an objective function is submodular in the non-adaptive setting, its adaptive version does not always have adaptive submodularity.
	\textit{Adaptive influence maximization} is one such example. 
	In this problem, a decision maker aims at spreading information about a product by selecting several advertisements. 
	She repeatedly alternates between selecting an advertisement and observing its effect.
	The objective function of this problem is known to have adaptive submodularity in the independent cascade model \cite{GK11}, but not in a more general diffusion model called the \textit{triggering model} \cite{KKT03}, which is extensively studied as an important class of diffusion models \cite{LKGFVG07,TXS14}. 
	Note that this objective function satisfies submodularity in the non-adaptive setting, while it does not satisfy adaptive submodularity in the adaptive setting. 
	Examples of other problems lacking 
	adaptive submodularity 
	appear in many applications 
	such as feature selection and active learning. 
	Therefore, we are waiting for an analysis framework that goes beyond 
	adaptive submodularity. 
	
	\begin{table*}\label{table:our_results}
		\caption{Summary of our theoretical results about adaptive bipartite influence maximization and adaptive feature selection. We show lower bounds for the adaptive submodularity ratios, the approximation ratios of the adaptive greedy algorithm, and the adaptivity gaps. Let $\lambda_{\min, \ell} = \min_{\phi} \min_{S \subseteq V \colon |S| \le \ell} \lambda_{\min} (\bfA(\phi)_S^\top \bfA(\phi)_S)$ and $\lambda_{\max, \ell} = \max_{\phi} \max_{S \subseteq V \colon |S| \le \ell} \lambda_{\max} (\bfA(\phi)_S^\top \bfA(\phi)_S)$. Parameters $q$ and $d$ are determined by the diffusion model and the underlying graph structure. The results of \citep{GK11} are indicated by $\dagger$.}
		\vskip 0.15in
		\centering
		\begin{tabular}{lccc}
			\toprule
			Problem & Adaptive submodularity ratio & Adaptive greedy & Adaptivity gaps\\
			\midrule
			Linear threshold & $(k+1) / 2k$ & $1-\exp(-(k+1) /2k)$ & $(k+1) / 2k$\\
			Independent cascade & $1^\dagger$ & $1 - 1 / \rme^\dagger$ & $(1 - q)^{\min\{d, k\} - 1}$\\
			Triggering & $(k+1) / 2k$ & $1-\exp(- (k+1) / 2k)$ & \\
			Feature selection & $\lambda_{\min, k + \ell}$ & $1 - \exp(- \lambda_{\min, k + \ell})$ & $\lambda_{\min, k} / \lambda_{\max, k}$\\
			\bottomrule
		\end{tabular}
	\end{table*}
	
	In the non-adaptive setting,  
	{\it submodularity ratio} 
	\cite{Das2011} is a prevalent tool for handling non-submodular functions \cite{KEDNG17,EDFK17}.  
	Intuitively, it is a parameter of monotone set functions that measures their distance to submodular functions.
	An adaptive variant of submodularity ratio 
	would be a promising approach to handling 
	functions that lack adaptive submodularity, 
	but how to define it is quite non-trivial since 
	there is a large discrepancy between 
	the non-adaptive and adaptive settings 
	as exemplified above. 
	In particular, success in 
	defining 
	an adaptive version of submodularity ratio 
	involves meeting the following two requirements: 
	it must yield an approximation guarantee of the greedy policy, 
	and it must be bounded in various important applications 
	such as the adaptive influence maximization 
	and adaptive feature selection. 
	Previous works \cite{Kusner14,YGO17} 
	tried to define similar notions, 
	but none of them meet the requirements.

	\paragraph{Our Contribution.}
	We propose an analysis framework, {\it adaptive submodularity ratio}, that meets the aforementioned requirements. 
	An advantage of our proposal is that it has the potential to yield various theoretical results as in Table~\ref{table:our_results}. 
	Below we summarize our main contributions. 
	\begin{itemize}
		\item We propose the definition of the adaptive submodularity ratio and, by using it, we prove an approximation guarantee of the adaptive greedy algorithm. 
		\item We give a bound on the \textit{adaptivity gap}\footnote{The adaptivity gap is a different concept from \textit{adaptive complexity} \cite{BS18}.}, 
		which represents the superiority of adaptive policies 
		over non-adaptive policies, through the lens of the adaptive submodularity ratio.
		\item We provide lower-bounds of adaptive submodularity ratio for two important applications: 
		adaptive influence maximization on bipartite graphs in the triggering model and adaptive feature selection. 
		Regarding the former one, 
		we show that our result is tight. 
		\item Experiments confirm that the greedy policy 
		performs well for the considered applications. 
	\end{itemize}
	
	\paragraph{Organization.} 
	The rest of this paper is organized as follows.
	\Cref{sec:pre} provides the basic concepts and definitions.
	In \Cref{sec:ratio}, we formally define the adaptive submodularity ratio, which is the key concept of this study.
	In \Cref{sec:greedy,sec:gap}, we provide bounds on the approximation ratio of the adaptive greedy algorithm and adaptivity gaps, respectively, by using the adaptive submodularity ratio.
	In \Cref{sec:infmax,sec:feature}, we apply the frameworks developed in \Cref{sec:greedy,sec:gap} to two applications: adaptive influence maximization and adaptive feature selection.
	In \Cref{sec:experiment}, we experimentally check the performance of the adaptive greedy algorithm in several applications.
	In \Cref{sec:related} we review related work. 
	
	\section{Preliminaries}\label{sec:pre}
	\paragraph{Adaptive Stochastic Optimization.}
	Adaptive stochastic optimization is a general framework for handling problems of sequentially selecting elements, 
	where we can observe the states of only the selected elements. 
	Let $V$ be the ground set consisting of a finite number of elements.
	Suppose every element $v \in V$ is assigned to some state in $\calY$, which is the set of all possible states.
	We let $\phi \colon V \to \calY$ be a map 
	that 
	associates each element, $v\in V$, 
	with a state, $\phi(v) \in \calY$. 
	We consider the Bayesian setting where $\phi$ is generated from a known prior distribution $p(\phi)$.
	Let $\Phi$ be a random variable representing the randomness of the realization $\phi$.
	
	A decision maker can select one element $v \in V$ at each step.
	After selecting $v$, she can observe the state $\phi(v)$ of $v$.
	She repeatedly selects an element and then observes its state. 
	The important point is that she can 
	utilize the information about the states observed so far for selecting the next element. 
	We denote by $\psi = \{(v_1, \phi(v_1)),\dots, (v_\ell, \phi(v_\ell))\}$ the partial realization observed so far, 
	where $\{v_1,\dots,v_\ell\}$ is the set of selected elements.
	The decision maker's strategy can be described as a \textit{policy tree}, or simply \textit{policy}.
	A policy is a decision tree that determines the element to 
	be selected next.
	Formally, 
	a policy $\pi$ is a partial map that returns an element $v \in V$ to be selected next given partial realization $\psi$ observed so far.
	
	The goal of the decision maker is to maximize the expected value of the objective function $f \colon 2^V \times \calY^V\to\bbR$. 
	The objective function value $f(S, \phi)$ depends on the set $S$ of selected elements and the states $\phi$ of all elements.
	At the beginning, she does not know $\phi$, 
	but she can get partial information of $\phi$ by observing 
	state $\phi(v)$ of selected $v$.
	In parallel, she must select elements to construct $S$ that has high utility under the realization $\phi$.
	Let $E(\pi, \phi)\subseteq V$ be the set selected by policy $\pi$ under realization $\phi$.
	The expected value achieved by policy $\pi$ is
	\begin{equation}
	\favg(\pi) = \bbE_\Phi[f(E(\pi, \Phi), \Phi)],
	\end{equation}
	where the expectation is taken with regard to the random variable $\Phi$ generated from $p$.

	\paragraph{Adaptive Submodularity and Adaptive Monotonicity.} Adaptive submodularity, which is an adaptive extension of submodularity, is a diminishing returns property of the expected marginal gain. The expected marginal gain of $v \in V$ when $\psi$ has been observed so far is defined as 
	\begin{align}
		&\Delta(v | \psi)\\
		&\coloneqq \bbE [f(\dom(\psi) \cup \{v\}, \Phi) - f(\dom(\psi), \Phi) | \Phi \sim \psi], 
	\end{align} 
	where $\dom(\psi) \coloneqq \{ v \in V \mid \exists y \in \calY , ~ (v, y) \in \psi \}$.
	We write $\Phi \sim \psi$ if $\Phi$ is generated from the posterior distribution $p(\phi|\psi)$. 
	Given current realization $\psi$, 
	the expected marginal gain, $\Delta(v | \psi)$, 
	represents the expected increase in 
	the objective value 
	yielded by selecting $v$. 
	Adaptive submodularity is defined as follows:
	\begin{defn}[Adaptive submodularity \cite{GK11}]
		Let $f \colon 2^V \times \calY^V \to \bbR$ be a set function and $p$ a distribution of $\phi$.
		We say $f$ is adaptive 
		submodular with respect to $p$ if for any partial realization $\psi \subseteq \psi'$ and any element $v \in V \setminus \dom(\psi')$, it holds that 
		\begin{equation}
		\Delta(v | \psi) \ge \Delta(v | \psi').
		\end{equation}
	\end{defn}
	
	The monotonicity can also be extended to the adaptive setting as follows:
	\begin{defn}[Adaptive monotonicity \cite{GK11}]
		Let $f \colon 2^V \times \calY^V \to \bbR$ be a set function and $p$ a distribution of $\phi$. We say $f$ is adaptive monotone with respect to $p$ if for any partial realization $\psi$ and any element $v \in V \setminus \dom(\psi)$, it holds that 
		\begin{equation}
		\Delta(v | \psi) \ge 0.
		\end{equation}
	\end{defn}
	
	\paragraph{Other Notations for Adaptive Stochastic Optimization.}
	The expected marginal gain of policy $\pi$ with partial realization $\psi$ is defined as
	\begin{align}
	&\Delta(\pi | \psi) 
	\\
	&\coloneqq
	\bbE [f(\dom(\psi) \cup E(\pi, \Phi), \Phi) - f(\dom(\psi), \Phi) | \Phi \sim \psi].
	\end{align}
	Similarly, the expected marginal gain of set $S \subseteq V$ with partial realization $\psi$ is defined as
	\begin{equation}
	\Delta(S | \psi) \coloneqq \bbE [f(\dom(\psi) \cup S, \Phi) - f(\dom(\psi), \Phi) | \Phi \sim \psi].
	\end{equation}
	Let $\Pi_k \coloneqq \{\pi \mid \forall \phi, ~ |E(\pi, \phi)| \le k \}$ be the set of all policies whose heights do not exceed $k$.

\paragraph{Submodularity Ratio and Supermodularity Ratio.}
The submodularity ratio of a monotone non-negative set function $f \colon 2^V \to \bbR_{\ge 0}$ with respect to set $U \subseteq V$ and parameter $k \ge 1$ is defined to be
\begin{equation}
	\gamma_{U, k}(f) \coloneqq \min_{L \subseteq U, ~ S \colon |S| \le k} \frac{\sum_{v \in S} f(v | L)}{f(S | L)}, 
\end{equation}
where $f(v | L) \coloneqq f(L \cup \{v\}) - f(L)$ and $f(S | L) \coloneqq f(L \cup S) - f(L)$. 
If the numerator and denominator are both $0$, the submodularity ratio is considered to be 1. 
We have $\gamma_{U, k}\in[0,1]$,  
and a monotone set function $f$ is submodular if and only if $\gamma_{U, k} = 1$ for every $U \subseteq V$ and $k \ge 1$.

As an opposite concept of the submodularity ratio, the \textit{supermodularity ratio}, 
was considered in \citet{Bogunovic18}, 
which is defined as follows:
\begin{equation}
	\beta_{U, k}(f) \coloneqq \min_{L \subseteq U, ~ S \colon |S| \le k} \frac{f(S | L)}{\sum_{v \in S} f(v | L)}, 
\end{equation} 
where we regard $0/0=1$. 
We have $\beta_{U, k}\in[1/k,1]$, 
and $f$ is supermodular if and only if 
$\beta_{U, k} = 1$ for every $U \subseteq V$ and $k \ge 1$.
We omit $f$ from $\gamma_{U, k}(f)$ and $\beta_{U, k}(f)$ if it is clear from the context.

\section{Adaptive Submodularity Ratio}\label{sec:ratio}
In this section, we provide a precise definition of the adaptive submodularity ratio, which extends the submodularity ratio from the non-adaptive setting to the adaptive setting. 
We need to define it carefully so that it can yield 
an approximation guarantee of the greedy policy. 
An important point is 
to generalize subset $S$ of size at most $k$, 
used to define the submodularity ratio, 
to policy $\pi$ of height at most $k$.
\begin{defn}[Adaptive submodularity ratio]
	Suppose that $f \colon 2^V \times \calY^V \to \bbR$ is adaptive monotone w.r.t.\ a distribution $p$.
	Adaptive submodularity ratio $\gamma_{\psi, k} \in [0, 1]$ of $f$ and $p$ with respect to partial realization $\psi$ and parameter $k \in \bbZ_{\ge 0}$ is defined to be
	\begin{align}
		&\gamma_{\psi, k} (f, p) \coloneqq 
	\\ 
	&\min_{
		\psi' \subseteq \psi, ~ \pi \in \Pi_k
	}
		\frac{\sum_{v \in V} \Pr(v \in E(\pi, \Phi) | \Phi \sim \psi') \Delta(v | \psi') }{\Delta(\pi | \psi')}.
	\end{align}
	We omit $f$ and $p$ if they are clear from the context. 
	We also define $\gamma_{\ell, k} \coloneqq \min_{\psi : |\psi| \le \ell} \gamma_{\psi, k}$. 
\end{defn}
Intuitively, the adaptive submodularity ratio indicates 
the distance between $(f, p)$ and the class of adaptive submodular functions.
As with the non-adaptive setting, 
$\gamma_{\psi, k}(f, p) = 1$ implies the adaptive submodularity of $f$, 
which can formally be written as follows:  
\begin{prop}\label{prop:ratio_one}
	It holds that $\gamma_{\psi, k}(f, p) = 1$ for any partial realization $\psi$ and $k \in \bbZ_{\ge 0}$ if and only if $f$ is adaptive submodular with respect to $p$.
\end{prop}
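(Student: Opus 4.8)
The plan is to prove both implications through a single structural identity that rewrites the numerator and the denominator of the ratio as sums over the internal nodes of the policy tree, so that they differ only in the partial realization at which each element's marginal gain is evaluated. Concretely, I would fix a policy $\pi\in\Pi_k$ and a partial realization $\psi'\subseteq\psi$, and expand $\Delta(\pi\mid\psi')$ by telescoping: for each $\phi$ consistent with $\psi'$ the total gain $f(\dom(\psi')\cup E(\pi,\phi),\phi)-f(\dom(\psi'),\phi)$ splits into single-step increments along the root-to-leaf path that $\pi$ follows under $\phi$. Grouping these increments by the internal node at which they occur and applying the law of total probability yields
\begin{equation}
\Delta(\pi\mid\psi')=\sum_{\eta}\Pr(\Phi\sim\eta\mid\Phi\sim\psi')\,\Delta(\pi(\eta)\mid\eta),
\end{equation}
where $\eta$ ranges over the reachable internal nodes, i.e.\ partial realizations with $\psi'\subseteq\eta$ at which $\pi$ still selects an element. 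Regrouping by the selected element $v=\pi(\eta)$ and using the identity $\Pr(v\in E(\pi,\Phi)\mid\Phi\sim\psi')=\sum_{\eta:\pi(\eta)=v}\Pr(\Phi\sim\eta\mid\Phi\sim\psi')$, which holds because the events of selecting $v$ at distinct nodes are mutually exclusive, the numerator becomes exactly the same sum with every $\Delta(v\mid\eta)$ replaced by $\Delta(v\mid\psi')$. Since $\psi'\subseteq\eta$ and $v\notin\dom(\eta)$ at every internal node, numerator and denominator can then be compared termwise.

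For the direction $\Leftarrow$, adaptive submodularity gives $\Delta(v\mid\psi')\ge\Delta(v\mid\eta)$ for each term, so the numerator dominates the denominator and every ratio is $\ge 1$; meanwhile the height-one policy that selects a single element makes numerator and denominator coincide, so the minimum is attained and equals exactly $1$ (the degenerate $k=0$ case reduces to the $0/0=1$ convention). Hence $\gamma_{\psi,k}=1$ for all $\psi$ and $k$.

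For the direction $\Rightarrow$ I would argue by contraposition. If $f$ is not adaptive submodular, then $\Delta(v\mid\psi')<\Delta(v\mid\psi'')$ for some $\psi'\subseteq\psi''$ and $v\notin\dom(\psi'')$; writing $\psi''$ as a chain of single-observation extensions of $\psi'$ forces the violation to occur across one such extension, so there exist $\bar\psi$ and $(u,y)$ with positive posterior probability $q=\Pr(\phi(u)=y\mid\Phi\sim\bar\psi)>0$ and an element $v$ with $\Delta(v\mid\bar\psi)<\Delta(v\mid\bar\psi\cup\{(u,y)\})$. I would then take $\psi=\bar\psi\cup\{(u,y)\}$, $k=2$, $\psi'=\bar\psi$, and the height-two policy that selects $u$, then selects $v$ only on the branch $\phi(u)=y$ and stops otherwise. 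By the identity above the contribution of $u$ cancels and the numerator minus the denominator equals $q\,[\Delta(v\mid\bar\psi)-\Delta(v\mid\bar\psi\cup\{(u,y)\})]<0$; since the denominator is positive by adaptive monotonicity, this policy has ratio strictly below $1$, so $\gamma_{\psi,2}<1$, the desired contradiction.

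The main obstacle is establishing the decomposition identity rigorously: the bookkeeping of the policy tree must be verified carefully, namely that the telescoped single-step increments regroup exactly by internal node, that each node is reached with posterior weight $\Pr(\Phi\sim\eta\mid\Phi\sim\psi')$, and that the selection events for a fixed element are disjoint. Positive-probability partial realizations should be assumed throughout so that all conditional quantities are well defined. Once this identity is in place, both implications collapse to the termwise comparison of $\Delta(v\mid\psi')$ with $\Delta(v\mid\eta)$.
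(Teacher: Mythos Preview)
Your proposal is correct and follows essentially the same approach as the paper: both directions rest on the decomposition $\Delta(\pi\mid\psi')=\sum_{\eta}\Pr(\Phi\sim\eta\mid\Phi\sim\psi')\,\Delta(\pi(\eta)\mid\eta)$ (the paper phrases this via a duplication trick rather than summing over nodes $\eta$, but the content is identical), and the $\Rightarrow$ direction is handled in both by the same height-two policy that selects $u$ and then $v$ only on the branch $\phi(u)=y$. Your contrapositive framing and explicit treatment of the $k=0$ and positivity caveats are cosmetic differences, not substantive ones.
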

The proof is given in \Cref{sec:app-ratio}.

\section{Adaptive Greedy Algorithm}\label{sec:greedy}
In this section, we present a new approximation ratio guarantee for the adaptive greedy algorithm based on the adaptive submodularity ratio. 
Thanks to this result, 
once the adaptive submodularity ratio is bounded, 
we can obtain approximation guarantees of the adaptive greedy algorithm for various applications.  
The adaptive greedy algorithm is an algorithm that starts with an empty set and repeatedly selects the element with the largest expected marginal gain. The detailed description is given in \Cref{alg:adaptive-greedy}.
\citet{GK11} have shown that this algorithm achieves $(1 - 1/\rme)$-approximation to the expected objective value of an optimal policy if $f$ is adaptive submodular w.r.t.\ $p$.
Here we extend their result and show that the adaptive greedy algorithm achieves $(1 - \exp( - \gamma_{\ell, \ell} ))$-approximation, where $\ell$ is the number of selected elements. 
More precisely, 
we can bound the approximation ratio 
relative to any policy $\pi^*$ of height $k$ as follows: 
\renewcommand{\algorithmicrequire}{\textbf{Input}}
\renewcommand{\algorithmicensure}{\textbf{Output}}
\begin{algorithm}[t]
	\caption{Adaptive greedy algorithm \cite{GK11}}
	\label{alg:adaptive-greedy}
	\begin{algorithmic}[1]
		\REQUIRE{The value oracle for the expected marginal gain $\Delta( \cdot | \cdot)$ associated with $f \colon 2^V \times \calY^V$ and $p \in \triangle^{\calY^V}$, a cardinality constraint $\ell \in \bbZ_{\ge 0}$.}
		\ENSURE{$\psi_\ell$ a set of observations of size $\ell$.}
		\STATE $\psi_0 \gets \emptyset$.
		\FOR{$i = 1,\dots, \ell$}
		\STATE $v \gets \argmax_{v \in V} \Delta(v | \psi_{i-1})$.
		\STATE Observe $\phi(v)$ and let $\psi_i \gets \psi_{i-1} \cup \{(v, \phi(v))\}$.
		\ENDFOR
		\STATE \textbf{return} $\psi_\ell$.
	\end{algorithmic}
\end{algorithm}

\begin{thm}\label{thm:adaptive-greedy}
	Suppose $f \colon 2^V \times \calY^V \to \bbR_{\ge 0}$ is adaptive monotone with respect to $p$.
	Let $\pi$ be a policy representing the adaptive greedy algorithm until $\ell$ step. 
	Then, for any policy $\pi^* \in \Pi_k$, it holds that
	\begin{equation}
	\favg(\pi) \ge \left(1 - \exp\left(- \frac{\gamma_{\ell, k} \ell}{k} \right) \right) \favg(\pi^*), 
	\end{equation}
	where $\gamma_{\ell, k}$ is the adaptive submodularity ratio of $f$ w.r.t.\ $p$.
\end{thm}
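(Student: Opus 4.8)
The plan is to follow the classical recursive argument for adaptive greedy maximization, replacing the submodularity-based per-step guarantee with one driven by the adaptive submodularity ratio. Write $\pi_i$ for the policy that runs the greedy algorithm for its first $i$ steps, let $\psi_i$ denote the (random) partial realization it produces, and set $\delta_i \coloneqq \favg(\pi^*) - \favg(\pi_i)$. The goal is to show that $\delta_i$ contracts by a factor $1 - \gamma_{\ell,k}/k$ at every step, which after $\ell$ steps, together with $\delta_0 \le \favg(\pi^*)$ (valid since $f \ge 0$) and the inequality $1 - x \le \exp(-x)$, yields the claimed bound. The telescoping identity $\favg(\pi_{i+1}) - \favg(\pi_i) = \bbE_{\psi_i}[\Delta(v_{i+1} \mid \psi_i)]$, where $v_{i+1} = \argmax_{v \in V} \Delta(v \mid \psi_i)$ is the greedy choice, reduces everything to a single-step analysis.

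The heart of the argument is the per-step improvement inequality $\favg(\pi_{i+1}) - \favg(\pi_i) \ge \frac{\gamma_{\ell,k}}{k}\, \bbE_{\psi_i}[\Delta(\pi^* \mid \psi_i)]$. To establish it, I fix a partial realization $\psi_i$ reached by greedy, with $|\psi_i| = i \le \ell$. Since every realization satisfies $|E(\pi^*, \phi)| \le k$, summing the selection indicators gives $\sum_{v \in V} \Pr(v \in E(\pi^*, \Phi) \mid \Phi \sim \psi_i) \le k$, so the greedy maximum dominates the probability-weighted average:
\[
\Delta(v_{i+1} \mid \psi_i) \ge \frac{1}{k} \sum_{v \in V} \Pr(v \in E(\pi^*, \Phi) \mid \Phi \sim \psi_i)\, \Delta(v \mid \psi_i).
\]
Applying the definition of the adaptive submodularity ratio with $\pi^* \in \Pi_k$ and $\psi' = \psi_i$ bounds the right-hand sum below by $\gamma_{\psi_i, k}\, \Delta(\pi^* \mid \psi_i)$, and $\gamma_{\psi_i, k} \ge \gamma_{\ell, k}$ because $|\psi_i| \le \ell$. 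Taking the expectation over the greedy trajectory $\psi_i$ then gives the displayed per-step inequality.

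It remains to lower-bound $\bbE_{\psi_i}[\Delta(\pi^* \mid \psi_i)]$ by $\delta_i$. Expanding the definition yields $\bbE_{\psi_i}[\Delta(\pi^* \mid \psi_i)] = \bbE_\Phi[f(\dom(\psi_i) \cup E(\pi^*, \Phi), \Phi)] - \favg(\pi_i)$, so the claim reduces to showing $\bbE_\Phi[f(\dom(\psi_i) \cup E(\pi^*, \Phi), \Phi)] \ge \favg(\pi^*)$, i.e. that appending $\pi^*$'s selection on top of greedy's trajectory does not decrease the expected value below $\favg(\pi^*)$. This is the concatenation property of adaptive monotone functions established by \citet{GK11}, which follows from the non-negativity of expected marginal gains; the same property also gives $\Delta(\pi^* \mid \psi_i) \ge 0$, needed above. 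Substituting it into the per-step inequality produces $\delta_i - \delta_{i+1} \ge \frac{\gamma_{\ell,k}}{k}\delta_i$, hence $\delta_{i+1} \le (1 - \gamma_{\ell,k}/k)\,\delta_i$, and unrolling from $\delta_0$ completes the proof.

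I expect the main obstacle to be the concatenation step: one must argue carefully that the union of greedy's adaptively chosen elements with $\pi^*$'s adaptively chosen elements---both driven by the same realization $\Phi$ yet observing their own states---can be treated so that adaptive monotonicity applies and the expected value does not fall below $\favg(\pi^*)$. The probabilistic bookkeeping of conditioning on the greedy trajectory $\psi_i$ while $\pi^*$ acts on its own observations is the delicate part, whereas the remaining steps (the ratio substitution via $\sum_v \Pr(\cdot) \le k$ and solving the geometric recursion) are routine.
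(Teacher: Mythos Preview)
Your proposal is correct and follows essentially the same approach as the paper: both arguments combine the adaptive submodularity ratio with the bound $\sum_{v}\Pr(v\in E(\pi^*,\Phi)\mid\Phi\sim\psi)\le k$ to obtain the per-step inequality, invoke the concatenation property of adaptive monotone functions from \citet{GK11} (the paper's Lemma~\ref{lem:monotone}) to compare $\bbE_{\psi_i}[\Delta(\pi^*\mid\psi_i)]=\favg(\pi_i@\pi^*)-\favg(\pi_i)$ with $\favg(\pi^*)-\favg(\pi_i)$, and then unroll the geometric recursion on $\delta_i$. The step you flagged as the main obstacle is exactly where the paper appeals to Lemma~\ref{lem:monotone}, so your identification of the delicate point is on target.
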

We provide the proof in \Cref{sec:app-greedy}.

\section{Non-adaptive Policies and Adaptivity Gaps}\label{sec:gap}
We show that the adaptive submodularity ratio is also 
useful for theoretically comparing the 
performances of adaptive and non-adaptive policies. 
More precisely, we present a lower-bound of 
the \textit{adaptivity gap}, 
which represents the performance gap between 
adaptive and non-adaptive polices, 
by using the adaptive submodularity ratio. 
The adaptivity gap is defined as follows:  
\begin{defn}[{Adaptivity gaps}]
	The adaptivity gap $\gap_k(f,p)$ of an objective function $f \colon 2^V \times \calY^V \to \bbR_{\ge 0}$ and a probability distribution $p$ of $\phi \colon V \to \calY$ is defined as 
	the ratio between an optimal adaptive policy and an optimal non-adaptive policy, i.e.,
	\begin{equation}
		\gap_k(f, p) = \frac{\max_{M \colon |M| \le k} \bbE_\Phi [ f(M, \Phi) ]}{\max_{\pi^* \in \Pi_k} \favg(\pi^*)}, 
	\end{equation}
	where $k$ is the height of adaptive and non-adaptive policies.
\end{defn}

\begin{thm}\label{thm:gap}
	Let $f \colon 2^V \times \calY^V \to \bbR_{\ge 0}$ be an objective function and $p$
	a probability distribution of $\phi \colon V \to \calY$.
	Let $\gamma_{\emptyset, k}$ be the adaptive submodularity ratio of $f$ w.r.t.\ $p$.
	Let $\beta_{\emptyset, k}$ be the supermodularity ratio of the set function $\bbE_\Phi [ f(\cdot, \Phi) ]$ of non-adaptive policies. 
	We have
	\begin{equation}
	\gap_k(f, p) \ge \beta_{\emptyset, k} \gamma_{\emptyset, k}.
	\end{equation}
\end{thm}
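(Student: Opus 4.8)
The plan is to reduce the adaptivity gap to two inequalities that are already available: the adaptive submodularity ratio evaluated at the empty realization, and the supermodularity ratio of the induced set function $g(S) \coloneqq \bbE_\Phi[f(S, \Phi)]$. First I would rewrite everything at $\psi = \emptyset$ in terms of $g$. Since $\dom(\emptyset) = \emptyset$, we have $\Delta(v \mid \emptyset) = g(\{v\}) - g(\emptyset)$ and $\Delta(\pi \mid \emptyset) = \favg(\pi) - g(\emptyset)$, and by adaptive monotonicity every singleton marginal $\Delta(v \mid \emptyset)$ is non-negative. With this dictionary, the supermodularity ratio of $g$ at $L = \emptyset$ reads $g(S) - g(\emptyset) \ge \beta_{\emptyset, k} \sum_{v \in S} \Delta(v \mid \emptyset)$ for every $S$ with $|S| \le k$.

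Next, let $\pi^*$ be an optimal adaptive policy in $\Pi_k$ and set $p_v \coloneqq \Pr(v \in E(\pi^*, \Phi))$. Applying the definition of $\gamma_{\emptyset, k}$ to $\pi^*$ (taking $\psi' = \emptyset$) gives $\sum_{v \in V} p_v \Delta(v \mid \emptyset) \ge \gamma_{\emptyset, k}\, \Delta(\pi^* \mid \emptyset)$. The key structural fact is the budget bound $\sum_v p_v = \bbE_\Phi[\lvert E(\pi^*, \Phi)\rvert] \le k$, which holds because $\pi^* \in \Pi_k$ selects at most $k$ elements under every realization; combined with $p_v \le 1$, this makes $(p_v)_v$ a feasible point of the box-constrained program $\max\{\sum_v x_v \Delta(v \mid \emptyset) : \sum_v x_v \le k,\ 0 \le x_v \le 1\}$.

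The main step is then to de-randomize. Because the marginals $\Delta(v \mid \emptyset)$ are non-negative, the above program is a fractional knapsack with unit weights, so its optimum is attained integrally by the set $M^*$ consisting of the $k$ elements of largest singleton marginal. Feasibility of $(p_v)_v$ therefore yields $\sum_{v \in M^*} \Delta(v \mid \emptyset) \ge \sum_v p_v \Delta(v \mid \emptyset)$. Chaining this with the supermodularity inequality applied to $M^*$ and the adaptive-ratio inequality gives $g(M^*) - g(\emptyset) \ge \beta_{\emptyset, k}\gamma_{\emptyset, k}\bigl(\favg(\pi^*) - g(\emptyset)\bigr)$. Since $g(\emptyset) \ge 0$ and $\beta_{\emptyset, k}\gamma_{\emptyset, k} \le 1$ (both ratios lie in $[0,1]$), this rearranges to $g(M^*) \ge \beta_{\emptyset, k}\gamma_{\emptyset, k}\,\favg(\pi^*)$, and dividing by $\favg(\pi^*) = \max_{\pi \in \Pi_k}\favg(\pi)$ yields the claimed lower bound on $\gap_k(f, p)$, as $\max_{M : |M| \le k} \bbE_\Phi[f(M,\Phi)] \ge g(M^*)$.

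I expect the only delicate point to be the de-randomization in the third paragraph: one must argue that the per-element selection probabilities of an adaptive policy, which individually lie in $[0,1]$ and sum to at most $k$, cannot harvest more singleton marginal value than the best fixed set of size $k$. The non-negativity of the marginals (from adaptive monotonicity) together with the unit-weight knapsack structure is exactly what makes this comparison valid, so I would isolate it as a short self-contained LP argument rather than leave it implicit. Everything else is bookkeeping: translating the two ratio definitions into the normalized form above and chaining the resulting inequalities.
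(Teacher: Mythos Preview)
Your proposal is correct and follows essentially the same route as the paper's proof: define $M$ as the top-$k$ set by singleton marginals $\Delta(v\mid\emptyset)$, use the budget constraints $p_v\le 1$ and $\sum_v p_v\le k$ to compare $\sum_v p_v\Delta(v\mid\emptyset)$ with $\sum_{v\in M}\Delta(v\mid\emptyset)$, then chain the supermodularity and adaptive-submodularity-ratio inequalities and absorb $g(\emptyset)$ using $\beta_{\emptyset,k}\gamma_{\emptyset,k}\le 1$. Your write-up is slightly more explicit about the fractional-knapsack justification for the de-randomization step, which the paper leaves as ``due to the definition of $M$,'' but the argument is otherwise identical.
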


Therefore, 
given any non-adaptive $\alpha$-approximation algorithm, 
we can evaluate its performance relative to an optimal adaptive 
policy as follows: 
\begin{cor}\label{cor:nonadaptive-approx}
	Let $\pi_\non \in \Pi_k$ be a non-adaptive policy that achieves $\alpha$-approximation to an optimal non-adaptive policy $\pi^*_\non$. Let $\gamma_{\emptyset, k}$ be the adaptive submodularity ratio of $f$ w.r.t.\ $p$. Let $\beta_{\emptyset, k}$ be the supermodularity ratio of the non-adaptive objective function $\bbE_\Phi [ f(\cdot, \Phi) ]$. Let $\pi^*$ be an optimal adaptive policy. We have
	\begin{equation}
	\favg(\pi_\non) \ge \alpha \beta_{\emptyset, k} \gamma_{\emptyset, k} \favg(\pi^*). 
	\end{equation}
\end{cor}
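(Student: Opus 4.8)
The plan is to derive the corollary directly from Theorem~\ref{thm:gap} together with the defining property of an $\alpha$-approximation, so the entire argument amounts to chaining three elementary inequalities. First I would record the approximation guarantee itself: since $\pi_\non$ is a non-adaptive policy achieving $\alpha$-approximation to the optimal non-adaptive policy $\pi^*_\non$, the definition of approximation ratio gives $\favg(\pi_\non) \ge \alpha \favg(\pi^*_\non)$.

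The one point requiring a moment's care is the identification of an optimal non-adaptive policy with the set maximizer appearing in the numerator of the adaptivity gap. Because a non-adaptive policy chooses its elements without consulting the observed states, the set it selects is a fixed $M \subseteq V$ with $|M| \le k$ that does not depend on $\Phi$; hence the value of such a policy equals $\bbE_\Phi[f(M, \Phi)]$, and maximizing over non-adaptive policies in $\Pi_k$ is the same as maximizing $\bbE_\Phi[f(M,\Phi)]$ over all sets $M$ of size at most $k$. This yields $\favg(\pi^*_\non) = \max_{M : |M| \le k} \bbE_\Phi[f(M,\Phi)]$, which is precisely the numerator of $\gap_k(f,p)$.

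Next I would invoke Theorem~\ref{thm:gap}, namely $\gap_k(f,p) \ge \beta_{\emptyset,k}\gamma_{\emptyset,k}$, and rewrite the definition of the adaptivity gap using $\favg(\pi^*) = \max_{\pi \in \Pi_k}\favg(\pi)$ for the optimal adaptive policy. This gives
\begin{align*}
	\favg(\pi^*_\non)
	&= \max_{M : |M| \le k}\bbE_\Phi[f(M,\Phi)] \\
	&= \gap_k(f,p)\,\favg(\pi^*) \\
	&\ge \beta_{\emptyset,k}\gamma_{\emptyset,k}\,\favg(\pi^*).
\end{align*}
Chaining this with $\favg(\pi_\non) \ge \alpha \favg(\pi^*_\non)$ then produces $\favg(\pi_\non) \ge \alpha\beta_{\emptyset,k}\gamma_{\emptyset,k}\favg(\pi^*)$, as claimed.

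Since every step is a direct substitution, I do not expect a genuine obstacle: the substantive content of the corollary already resides in Theorem~\ref{thm:gap}, and the corollary merely transfers that bound through the $\alpha$-approximation factor. The only place demanding slight attention is the identification discussed above, which rests on the convention that a non-adaptive policy ignores the realization and therefore commits to a deterministic set.
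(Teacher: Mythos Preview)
Your proof is correct and mirrors the paper's own argument: the paper's proof of \Cref{cor:nonadaptive-approx} likewise records $\favg(\pi_\non) \ge \alpha \favg(\pi^*_\non)$ from the approximation guarantee, invokes \Cref{thm:gap} to obtain $\favg(\pi^*_\non) \ge \beta_{\emptyset, k} \gamma_{\emptyset, k} \favg(\pi^*)$, and chains the two. Your added remark identifying the optimal non-adaptive policy value with $\max_{M:|M|\le k}\bbE_\Phi[f(M,\Phi)]$ is a helpful clarification the paper leaves implicit.
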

Proofs are given in \Cref{sec:app-gap}.

\section{Adaptive Influence Maximization}\label{sec:infmax}
In this section, we consider adaptive influence maximization on bipartite graphs. 
We provide a bound on the adaptive submodularity ratio 
in the case of the triggering model, 
and we show that this result is tight. 
We also present bounds on the adaptivity gaps in the case of the independent cascade and linear threshold models by using the adaptive submodularity ratio.

Let $G = (V \cup U, A)$ be a directed bipartite graph with source vertices $V$, sink vertices $U$, and directed edges $A \subseteq V \times U$.
In the case of bipartite influence model \cite{Alon12}, this graph represents the relationship between advertisements $V$ and customers $U$.
We consider the problem of selecting several advertisements $S \subseteq V$ to make as much influence as possible on the customers. 
Here, each edge is determined to be alive or dead according to a certain distribution, 
and influence can be spread only through live edges.
Given vertex weights $w \colon U\to \bbR_{\ge 0}$, 
the objective function to be maximized is $f(X) = \sum_{u \in \bigcup_{v \in X} R(v)} w(u)$, 
where, for each $v\in V$, 
$R(v) \subseteq U$ represents a set of vertices that are reachable from $v$ by going through only live edges. 
In the adaptive version of influence maximization, 
at each step, we select a vertex $v \in V$ and observe the states of all outgoing edges $(v, u) \in A$,
while, 
in the non-adaptive setting, 
we select $S \subseteq V$ before observing the states of any edges. 

We consider a general diffusion model called the \textit{triggering model} \cite{KKT03}, which includes various important models such as the independent cascade model and the linear threshold model as special cases.
In the triggering model, each vertex $v \in V$ is associated with some known probability distribution over the power set of incoming edges.
According to this distribution, a subset of incoming live edges is determined. 
A vertex gets activated if and only if it is reachable from 
some selected vertex (or seed vertex) through only live edges. 
We aim to maximize the total weight of activated vertices by appropriately selecting seed vertices.
Note that this objective function is submodular 
in the non-adaptive setting.

For later use, we explain the linear threshold model, a special case of the triggering model.
In this model, 
the probability distribution on the incoming edges of each vertex 
is restricted 
so that 
each vertex has at most one live edge in any realization.
In other words, there exists $b \colon A \to \bbR_{\ge 0}$ such that, 
for each $v \in V$, 
we have $\sum_{a \in \delta_-(v)} b(a) \le 1$, 
where $\delta_-(v)$ is the full set of edges pointing to $v$, 
and $a \in A$ is alive with probability $b(a)$ 
exclusively over $\delta_-(v)$. 
In contrast to the linear threshold model, 
the triggering model accepts any distribution over the power set of $\delta_-(v)$. 

\subsection{Bound of Adaptive Submodularity Ratio}
We first present the bound of adaptive submodularity ratio. 
Here we provide a proof sketch, and the full proof is given in \Cref{sec:app-infmax-linear,sec:app-infmax-triggering}.

\begin{thm}\label{thm:bipartite_ratio}
	Let $G$ be an arbitrary directed bipartite graph and $w$ be any weight function. For any $k \in \bbZ_{\ge 0}$ and partial realization $\psi$, the adaptive submodularity ratio $\gamma_{\psi, k}$ of the objective function and the distribution of the adaptive influence maximization in the triggering model is lower-bounded as follows: 
	\begin{equation}
	\gamma_{\psi, k} \ge \frac{k+1}{2k}. 
	\end{equation}
\end{thm}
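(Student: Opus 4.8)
The plan is to reduce the statement to a single sink vertex and then to a one‑dimensional optimization over cumulative activation probabilities. First I would exploit linearity. Writing $f(X,\phi)=\sum_{u\in U}w(u)\,\mathbf{1}[u\text{ is activated by }X\text{ under }\phi]$, every quantity in the definition of $\gamma_{\psi,k}$—the marginal gains $\Delta(v\mid\psi')$, the policy gain $\Delta(\pi\mid\psi')$, and hence the numerator and denominator of the ratio—is linear in $f$ and so splits into a nonnegative $w(u)$‑weighted sum over $u$. Since $w(u)\ge 0$, it is enough to prove, for each fixed $\psi'\subseteq\psi$, each $\pi\in\Pi_k$, and each single sink $u$, that the per‑sink numerator $N_u=\sum_v\Pr(v\in E(\pi,\Phi)\mid\Phi\sim\psi')\,\Delta_u(v\mid\psi')$ satisfies $N_u\ge\frac{k+1}{2k}D_u$, where $D_u=\Delta_u(\pi\mid\psi')$ is the probability that $\pi$ activates $u$ and $\Delta_u$ is the marginal gain of the indicator objective of $u$. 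Sinks already activated under $\psi'$ contribute $0$ to both sides, so I may assume every source in $\dom(\psi')$ has a dead edge to $u$ and condition on this throughout.

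Next I would normalise the policy. Let $T_u\subseteq V$ be the random set of sources whose edge to $u$ is live. Replacing $\pi$ by the policy $\pi'$ that stops as soon as $u$ is first activated only shrinks $E(\pi',\Phi)\subseteq E(\pi,\Phi)$, hence only decreases $N_u$ while leaving $D_u$ unchanged; so it suffices to treat stop‑at‑first‑activation policies. Because the live edge sets of distinct sinks are independent in the triggering model, I would then condition on the live edges of all sinks other than $u$. Before $u$ is activated every observed edge into $u$ is dead, so this conditioning fixes the order $v_1,v_2,\dots,v_r$ (with $r\le k$) in which $\pi'$ selects $u$‑adjacent sources. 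Writing $q_i=\Pr(v_i\in T_u\mid\psi')$ for the prior live probability and $G_i=\Pr(\{v_1,\dots,v_i\}\cap T_u\neq\emptyset\mid\psi')$ for the cumulative activation probability (so $G_0=0$ and $G_r=D_u$), the event ``$v_i$ is selected'' is exactly ``$u$ not yet activated after $i-1$ steps'', giving $\Pr(v_i\in E(\pi'))=1-G_{i-1}$ and therefore $N_u=\sum_{i=1}^r q_i\,(1-G_{i-1})$.

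The decisive step—and where the triggering model departs from linear thresholds—is that the posterior live probability the policy sees at step $i$ may exceed the prior $q_i$, so I cannot compare $q_i$ against a posterior rate. Instead I would use the elementary inequality $q_i=\Pr(v_i\in T_u)\ge\Pr(v_i\in T_u,\ v_1,\dots,v_{i-1}\notin T_u)=G_i-G_{i-1}$, which replaces each prior by a first‑hit increment and makes the whole expression telescope:
\[
N_u=\sum_{i=1}^r q_i(1-G_{i-1})\ \ge\ \sum_{i=1}^r (G_i-G_{i-1})(1-G_{i-1})=D_u-\sum_{i=1}^r G_{i-1}(G_i-G_{i-1}).
\]
It then remains to maximise the left Riemann sum $\sum_{i=1}^r G_{i-1}(G_i-G_{i-1})$ over increasing sequences $0=G_0\le\cdots\le G_r=D_u$ with $r\le k$. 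This maximand is concave with a tridiagonal negative‑definite Hessian, so it is maximised at equal spacing, yielding at most $\tfrac{k-1}{2k}D_u^2$ (the extremal case is $r=k$ equal increments, which is exactly the uniform ``one live edge out of $k$'' instance witnessing tightness). Hence $N_u\ge D_u\bigl(1-\tfrac{(k-1)D_u}{2k}\bigr)\ge\tfrac{k+1}{2k}D_u$, the last step using $D_u\le 1$. Averaging back over the conditioning and summing over $u$ recovers $\gamma_{\psi,k}\ge\frac{k+1}{2k}$.

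I expect the main obstacle to be the normalisation‑and‑conditioning step: rigorously arguing that stopping at the first activation of $u$ and fixing the live edges of all other sinks legitimately collapses an arbitrary adaptive policy into a single deterministic elimination sequence, without altering $D_u$ or inflating $N_u$. Once that reduction is secured, the prior‑versus‑posterior difficulty that breaks naive adaptive submodularity is neutralised entirely by the increment bound $q_i\ge G_i-G_{i-1}$, and the remaining inequality is a routine concavity computation on the resulting Riemann sum.
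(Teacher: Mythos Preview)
Your proposal is correct and tracks the paper's argument closely: both reduce to a single sink by linearity and independence across sinks, both isolate the ``inactive path'' $v_1,\dots,v_m$ along which $u$ has not yet been activated (your stop-at-first-activation normalisation is exactly the paper's observation that $\Delta(v\mid\psi'\cup\psi_v)=0$ off that path), and both hinge on the same key inequality $q_i\ge G_i-G_{i-1}$ (in the paper's notation, $p_i\ge a_i$). The conditioning step you flag as the main obstacle is handled in the paper the same way you suggest: feedback about other sinks is independent of the edges into $u$, so fixing it turns the policy into a convex combination of deterministic elimination sequences, and it suffices to bound each one.

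The only substantive difference is the final scalar inequality. After reaching $N_u\ge\sum_i a_i(1-G_{i-1})$ with $a_i=G_i-G_{i-1}$, the paper rewrites this as $\frac{k-1}{k+1}\bigl(\mathbf{1}^\top\mathbf{a}-\frac{k}{k-1}\mathbf{a}^\top(\mathbf{1}\mathbf{1}^\top-I)\mathbf{a}\bigr)\ge 0$ and proves a standalone quadratic-form lemma via an orthonormal change of basis. You instead interpret $\sum_i G_{i-1}(G_i-G_{i-1})$ as a left Riemann sum and maximise it by concavity, obtaining $\frac{1}{2}\bigl(D_u^2-\sum_i a_i^2\bigr)\le\frac{(k-1)D_u^2}{2k}$ from $\sum_i a_i^2\ge D_u^2/k$. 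Your bound is in fact slightly sharper (by a factor $D_u\le 1$) and arguably more elementary, avoiding the spectral detour; the paper's lemma, on the other hand, is stated as a self-contained inequality that may be reusable elsewhere. Either route closes the proof.
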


\begin{proof}[Proof sketch of \Cref{thm:bipartite_ratio}]
Since the objective function and the probability distribution of edge states can be decomposed into those defined for each vertex $u \in U$, it is sufficient to consider the case where $|U| = 1$.

Our goal is to prove
\begin{equation}\label{eq:influence-goal-inequality-sketch}
\begin{aligned}
&\Delta(\pi | \psi')  
\\
&\le \frac{2k}{k+1} \sum_{v \in V} \Pr(v \in E(\pi, \Phi) | \Phi \sim \psi') \Delta(v | \psi')
\end{aligned}
\end{equation}
for any observation $\psi'$ and policy $\pi \in \Pi_k$. 
By duplicating $v \in V$ that appears multiple times in policy tree $\pi$, we can write the above inequality as 
\begin{equation}
	\sum_{v \in V}
	{\rm P}_{v,\pi}
	\left( \frac{2k}{k+1} \Delta(v | \psi') -  \Delta(v | \psi' \cup \psi_v)\right) 
	\ge 0, 
\end{equation}
where 
${\rm P}_{v,\pi}$ is a shorthand for 
$\Pr(v \in E(\pi, \Phi) | \Phi \sim \psi')$ 
and $\psi_v$ is the observation just before $v$ is selected.
We decompose the policy tree into the path wherein $u$ remains inactive and the rest, and prove the inequality for each part separately.
\end{proof}

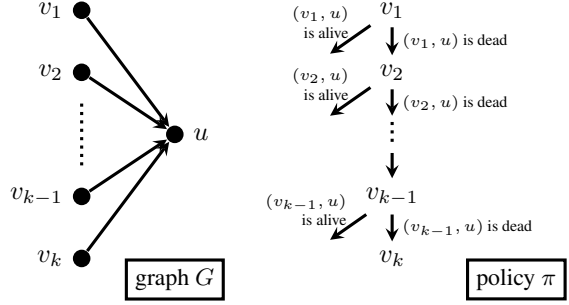
\begin{figure}[t]
\begin{center}
	\begin{tikzpicture}
		\tikzstyle{vertex}=[circle, fill, inner sep=0.01\hsize]
		\tikzstyle{arc}=[-{stealth}, line width=0.005\hsize]
		\node[vertex, label=left:$v_1$] (v1) at (0.0\hsize, 0.4\hsize) {};
		\node[vertex, label=left:$v_2$] (v2) at (0.0\hsize, 0.3\hsize) {};
		\node[vertex, label=left:$v_{k-1}$] (v4) at (0.0\hsize, 0.1\hsize) {};
		\node[vertex, label=left:$v_{k}$] (v5) at (0.0\hsize, 0.0\hsize) {};
		\node[vertex, label=right:$u$] (u1) at (0.15\hsize, 0.2\hsize) {};
		\draw[arc] (v1) -- (u1);
		\draw[arc] (v2) -- (u1);
		\draw[dotted, line width=0.005\hsize] (0.0\hsize, 0.15\hsize) -- (0.0\hsize, 0.25\hsize);
		\draw[arc] (v4) -- (u1);
		\draw[arc] (v5) -- (u1);

		\tikzstyle{policynode}=[]
		\tikzstyle{policyedge}=[-{stealth}, line width=0.005\hsize]
		\node[policynode] (pv1) at (0.5\hsize, 0.4\hsize) {$v_1$};
		\node[policynode] (pv2) at (0.5\hsize, 0.3\hsize) {$v_2$};
		\node[policynode] (pv4) at (0.5\hsize, 0.1\hsize) {$v_{k-1}$};
		\node[policynode] (pv5) at (0.5\hsize, 0.0\hsize) {$v_{k}$};
		\coordinate (pva) at (0.5\hsize, 0.23\hsize);
		\coordinate (pvb) at (0.5\hsize, 0.17\hsize);
		\coordinate (pf2) at (0.4\hsize, 0.33\hsize);
		\coordinate (pf3) at (0.4\hsize, 0.23\hsize);
		\coordinate (pf5) at (0.4\hsize, 0.03\hsize);
		\draw[arc] (pv1) to node [above, anchor=south east, align=right, inner sep=0pt] {\tiny $(v_1,u)$\\[-1ex]\tiny is alive} (pf2);
		\draw[arc] (pv2) to node [above, anchor=south east, align=right, inner sep=0pt] {\tiny $(v_2,u)$\\[-1ex]\tiny is alive} (pf3);
		\draw[arc] (pv4) to node [above, anchor=south east, align=right, inner sep=0pt] {\tiny $(v_{k-1},u)$\\[-1ex]\tiny is alive} (pf5);
		\draw[arc] (pv1) to node [right] {\tiny $(v_1,u)$ is dead} (pv2);
		\draw[arc] (pv2) to node [right] {\tiny $(v_2,u)$ is dead} (pva);
		\draw[dotted, line width=0.005\hsize] ([yshift=-0.01\hsize]pva) -- ([yshift=+0.01\hsize]pvb);
		\draw[arc] (pvb) -- (pv4);
		\draw[arc] (pv4) to node [right] {\tiny $(v_{k-1},u)$ is dead} (pv5);

		\node[anchor=north, draw=black, line width=0.005\hsize] at (0.15\hsize, 0.0\hsize) {\small graph $G$};
		\node[anchor=north, draw=black, line width=0.005\hsize] at (0.7\hsize, 0.0\hsize) {\small policy $\pi$};
	\end{tikzpicture}
	\caption{An example that implies the tightness of our bound.}\label{fig:infmax_instance1}
\end{center}
\end{figure}

We can see that the above bound is tight even for the linear threshold model by considering the following example. 
\begin{exmp}\label{exmp:infmax-star}
    Let $G$ be a bipartite directed graph with $V = \{v_1,\dots, v_k\}$, $U=\{u\}$, and $A = \{ (v_i, u) \mid i \in [k] \}$.
    Let $w$ be the vertex weight such that $w(u) = 1$.
    We consider the linear threshold model in which an edge selected out of $A$ uniformly at random is alive and the other edges are dead.
    We consider a simple policy $\pi$ that selects all vertices one by one until $u$ is activated.
	These graph and policy are illustrated in \Cref{fig:infmax_instance1}.
    Since $\pi$ finally activates $u$, the expected gain of $\pi$ is $\Delta(\pi|\emptyset) = 1$.
    The probability that $\pi$ selects each vertex is $\Pr(v_i \in E(\pi, \Phi)) = (k-i+1)/k$.
    The expected marginal gain of $v_i$ is $\Delta(v_i | \emptyset) = 1/k$.
    The adaptive submodularity ratio can be upper-bounded as
\begin{align}
    \gamma_{\emptyset, k} &\le \frac{\sum_{v \in V} \Pr(v \in E(\pi, \Phi) ) \Delta(v | \emptyset) }{\Delta(\pi | \emptyset)}\\
    &\le \sum_{i=1}^k \frac{k-i+1}{k} \cdot \frac{1}{k}\\
    &\le \frac{k+1}{2k}. 
\end{align}
    Hence the lower-bound in \Cref{thm:bipartite_ratio} is tight. 
\end{exmp}

The assumption that $G$ is bipartite, 
considered in \Cref{thm:bipartite_ratio}, 
may seem excessively strong, 
but it is actually a vital assumption. 
We show that, 
if $G$ is not a bipartite graph, 
the adaptive submodularity ratio can be arbitrarily small; 
in fact, such an example can be constructed 
with the linear threshold model on a very simple graph $G$. 
We describe the details in \Cref{sec:app-infmax-general}.

\subsection{Bound of Adaptivity Gap}
Next we provide a bound on the adaptivity gaps of bipartite influence maximization problems by using the adaptive submodularity ratio. 
First we consider the independent cascade model.
Since the adaptive submodularity holds for the independent cascade model \cite{GK11}, the adaptive submodularity ratio of its objective function is $1$ by \Cref{prop:ratio_one}.
In addition, by using a bound of the curvature \cite{Maehara17} and an inequality between the supermodularity ratio and the curvature \cite{Bogunovic18}, we obtain $\beta_{\emptyset, k}\ge (1 - q)^{\min\{k, d\}-1}$, where $q$ is an upper bound of the probability that each edge is alive and $d$ is the largest degree of the vertex in $V$.
From \Cref{thm:gap}, we obtain the following result.
\begin{prop}
	Let $f$ be the objective function and $p$ the probability distribution of bipartite influence maximization in the independent cascade model. We have 
	\begin{equation}
		\gap_k(f, p) \ge (1 - q)^{\min\{k, d\}-1}.
	\end{equation}
\end{prop}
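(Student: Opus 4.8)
The plan is to apply \Cref{thm:gap}, which states $\gap_k(f, p) \ge \beta_{\emptyset, k} \gamma_{\emptyset, k}$, and then bound the two ratios on the right-hand side separately for the independent cascade model. First I would pin down the adaptive factor. \citet{GK11} established that the objective of influence maximization in the independent cascade model is adaptive submodular with respect to the edge-realization distribution $p$. By \Cref{prop:ratio_one}, adaptive submodularity is equivalent to $\gamma_{\psi, k}(f, p) = 1$ for every partial realization $\psi$ and every $k$; in particular $\gamma_{\emptyset, k} = 1$. This disposes of the adaptive factor and reduces the task to lower-bounding the supermodularity ratio $\beta_{\emptyset, k}$ of the non-adaptive objective $g(\cdot) \coloneqq \bbE_\Phi[f(\cdot, \Phi)]$, which is a monotone submodular (weighted coverage) function.

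The substance of the argument, and the step I expect to be the main obstacle, is the bound $\beta_{\emptyset, k} \ge (1-q)^{\min\{k,d\}-1}$. Since $U = \emptyset$ forces $L = \emptyset$ in the definition of $\beta_{\emptyset, k}$, I only need $g(S) \ge (1-q)^{\min\{k,d\}-1} \sum_{v \in S} g(\{v\})$ for every $S$ with $|S| \le k$. Here the bipartite structure is essential: selecting $S \subseteq V$ activates $u \in U$ exactly when some live edge $(v,u)$ with $v \in S$ exists, so $g$ decomposes over sinks as $g(S) = \sum_{u} w(u)\bigl(1 - \prod_{v \in S}(1 - p_{vu})\bigr)$, where $p_{vu}$ is the probability that edge $(v,u)$ is alive. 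I would telescope $g(S) = \sum_i g(v_i \mid \{v_1,\dots,v_{i-1}\})$ and observe that each sink term's conditional marginal loses only a factor $\prod_{j<i}(1 - p_{v_j u})$ relative to its unconditioned marginal $p_{v_i u}$. The number of previously selected elements that can attenuate any single sink's contribution is at most $\min\{k,d\}-1$ (bounded both by the cardinality constraint $k$ and by the degree parameter $d$, and reduced by one since $v_i$ itself occupies a slot), so each such product is at least $(1-q)^{\min\{k,d\}-1}$. Following the analysis of \citet{Maehara17}, this yields the curvature bound $c \le 1 - (1-q)^{\min\{k,d\}-1}$. The delicate points are correctly identifying which previously selected elements reduce a given sink's marginal and taking the $w(u)p_{v_i u}$-weighted average over sinks cleanly, so that the worst-case ratio over all $S$ with $|S| \le k$ is exactly $(1-q)^{\min\{k,d\}-1}$.

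Finally, I would invoke the inequality of \citet{Bogunovic18} relating the supermodularity ratio to curvature, namely $\beta_{\emptyset, k} \ge 1 - c$; for monotone submodular $g$ this follows by telescoping $g(S) = \sum_i g(v_i \mid \{v_1,\dots,v_{i-1}\})$ and applying the definition of curvature to bound each marginal from below by $(1-c)\,g(\{v_i\})$. Combining $\beta_{\emptyset, k} \ge 1 - c \ge (1-q)^{\min\{k,d\}-1}$ with $\gamma_{\emptyset, k} = 1$ in \Cref{thm:gap} gives $\gap_k(f, p) \ge (1-q)^{\min\{k,d\}-1}$, as claimed. The only nonroutine ingredient is the curvature computation of the second paragraph; the surrounding steps are direct applications of \Cref{prop:ratio_one}, \Cref{thm:gap}, and the cited curvature--supermodularity inequality.
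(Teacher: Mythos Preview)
Your proposal is correct and follows exactly the paper's approach: apply \Cref{thm:gap}, use adaptive submodularity from \citet{GK11} together with \Cref{prop:ratio_one} to get $\gamma_{\emptyset,k}=1$, and bound $\beta_{\emptyset,k}$ via the curvature estimate of \citet{Maehara17} combined with the curvature--supermodularity inequality of \citet{Bogunovic18}. The paper presents this argument in the paragraph immediately preceding the proposition and simply cites \citet{Maehara17} and \citet{Bogunovic18} for the two supermodularity-ratio ingredients, whereas you sketch those computations explicitly; otherwise the proofs are identical.
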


We can derive a similar bound for the linear threshold model.
Since the expected objective function is a linear function, its supermodularity ratio is $1$.
As a special case of \Cref{thm:bipartite_ratio}, we have $\gamma_{\emptyset, k} \ge \frac{k+1}{2k}$.
Combining these bounds with \Cref{thm:gap}, we obtain the following result. 
\begin{prop}
	Let $f$ be the objective function and $p$ the probability distribution of bipartite influence maximization in the linear threshold model. We have 
	\begin{equation}
		\gap_k(f, p) \ge \frac{k+1}{2k}.
	\end{equation}
\end{prop}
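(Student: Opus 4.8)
The plan is to obtain the bound by combining \Cref{thm:gap} with the two ingredients that are already available for this model. \Cref{thm:gap} gives $\gap_k(f, p) \ge \beta_{\emptyset, k} \gamma_{\emptyset, k}$, where $\gamma_{\emptyset, k}$ is the adaptive submodularity ratio of the influence objective and $\beta_{\emptyset, k}$ is the supermodularity ratio of the non-adaptive objective $g(S) \coloneqq \bbE_\Phi[f(S, \Phi)]$. Since the linear threshold model is a special case of the triggering model, \Cref{thm:bipartite_ratio} applies directly and yields $\gamma_{\emptyset, k} \ge (k+1)/(2k)$. Hence the only substantive step is to verify that $\beta_{\emptyset, k} = 1$.

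To show $\beta_{\emptyset, k} = 1$, I would argue that $g$ is a modular set function, which forces the defining ratio of the supermodularity ratio to equal $1$ identically: a modular $g$ satisfies $g(S \mid L) = \sum_{v \in S} g(v \mid L)$ for all $L$ and $S$. Modularity follows from the structure of the linear threshold model on a bipartite graph. Each sink $u \in U$ has at most one live incoming edge in every realization, and the edge $(v, u)$ is that live edge with probability $b((v,u))$, exclusively among the edges entering $u$. Because these events are mutually exclusive, the probability that $u$ is reached by a seed set $S$ is simply $\sum_{v \in S} b((v,u))$, reading $b((v,u)) = 0$ when $(v,u) \notin A$, with no inclusion--exclusion correction. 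Weighting by $w(u)$ and summing over $u \in U$ then gives $g(S) = \sum_{v \in S} c(v)$ with $c(v) = \sum_{u} w(u)\, b((v,u))$, so $g$ is modular and therefore $\beta_{\emptyset, k} = 1$.

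With both factors in hand, the proof finishes by substitution into \Cref{thm:gap}: $\gap_k(f, p) \ge \beta_{\emptyset, k}\, \gamma_{\emptyset, k} \ge 1 \cdot (k+1)/(2k)$, as claimed. The main obstacle is the modularity verification in the middle paragraph; everything else is a mechanical combination of results already established. The crucial subtlety there is the exclusivity of live edges in the linear threshold model---this is exactly what eliminates the inclusion--exclusion terms that would appear in the general triggering model and destroy linearity---so I would take care to isolate and use that property explicitly rather than relying on any weaker submodularity-type estimate.
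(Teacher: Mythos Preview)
Your proposal is correct and follows exactly the same approach as the paper: apply \Cref{thm:gap}, invoke \Cref{thm:bipartite_ratio} for the bound $\gamma_{\emptyset,k}\ge (k+1)/(2k)$, and observe that the non-adaptive expected objective is linear so that $\beta_{\emptyset,k}=1$. The paper states the linearity in one line without justification, whereas you spell out the reason (mutual exclusivity of live incoming edges in the linear threshold model), but the argument is the same.
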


\section{Adaptive Feature Selection}\label{sec:feature}
In this section, we consider an adaptive variant of feature selection for sparse regression. 
All proofs related to this section are presented in \Cref{sec:app-feature-ratio}. 

Let us consider the following scenario. A learner has all feature vectors in advance, but they are not accurate due to sensing noise.
Here each sensor corresponds to a single feature vector.
The learner can obtain accurate feature vectors by replacing inaccurate sensors with high-quality sensors, 
but the number of high-quality sensors is limited to $k$. 
The learner selects $k$ features for observing their accurate feature vectors.

We formalize this scenario as the following problem.
At the beginning, a learner knows a response vector $\bfb \in \bbR^{m}$ and a prior distribution over the features, but does not know the features themselves.
Namely, we regard the inaccurate feature vectors obtained with noisy sensors as prior distributions on accurate feature vectors.
A random variable $\Phi$ indicates the uncertainty over the observed feature vectors.
From the noisy sensors, we can know only a prior distribution of $\Phi$ but not the true $\phi$.
Let $V = [n]$ be the set of features.
At each step, the learner can query a feature $v \in V$ and observe its feature vector $\phi(v) \in \bbR^m$.
We assume the noise of sensors are independent of each other; i.e., there exists a distribution $p_v(\phi(v))$ for each $v \in V$ and we can factorize $p$ as $p(\phi) = \prod_{v \in V} p_v(\phi(v))$.

Let $\bfA(\phi) = (\phi(1) \cdots \phi(n))$ be the realized feature matrix under realization $\phi$.
The objective function to be maximized is defined as 
$
f(S, \phi) = \| \bfb \|^2_2 - \min_{\bfw \in \bbR^S} \| \bfb - \bfA(\phi)_S \bfw \|^2_2.
$

\subsection{Bound of Adaptive Submodularity Ratio}
To bound the adaptive submodularity ratio of adaptive feature selection, we give a general lower bound of the adaptive submodularity ratio by using (non-adaptive) submodularity ratios of all realizations.
\begin{thm}\label{thm:stochastic-ratio}
	Let $f \colon 2^V \times \calY^V \to \bbR$ be adaptive monotone w.r.t.\ distribution $p(\phi)$.
	Assume the value of $f(S, \phi)$ depends only on $(\phi(v))_{v \in S}$ not on $(\phi(v))_{v \in V \setminus S}$, i.e., $f(S, \phi) = f(S, \phi')$ for all $\phi$ and $\phi'$ such that $\phi(v) = \phi(v)$ for all $v \in S$.
	We also assume $p(\phi)$ can be factorized to distributions $p_v(\phi(v))$ of states of each $v \in V$, i.e., $p(\phi) = \prod_{v \in V} p_v(\phi(v))$.
	Let $\gamma^\phi_{X, k}$ be the submodularity ratio of $f( \cdot, \phi)$ for each realization $\phi$.
	For any distribution $p_v$ of $\phi(v)$, the adaptive submodularity ratio $\gamma_{\psi, k}$ can be bounded as
	\begin{equation}
	\gamma_{\psi, k} \ge \min_{\phi \sim \psi} \gamma^\phi_{\dom(\psi), k}.
	\end{equation}
\end{thm}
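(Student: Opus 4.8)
The plan is to unfold the definition of $\gamma_{\psi, k}$ and bound the ratio
\[
\frac{\sum_{v \in V} \Pr(v \in E(\pi, \Phi) \mid \Phi \sim \psi') \Delta(v \mid \psi')}{\Delta(\pi \mid \psi')}
\]
separately for each fixed $\psi' \subseteq \psi$ and $\pi \in \Pi_k$, reducing everything to the level of a single realization via the product structure of $p$. For a realization $\phi$ consistent with $\psi'$, write $f_\phi(S \mid L) \coloneqq f(L \cup S, \phi) - f(L, \phi)$ for the non-adaptive marginal gain of $f(\cdot, \phi)$. Because $f(T, \phi)$ depends only on $(\phi(w))_{w \in T}$ and $p$ factorizes as $\prod_{v} p_v$, conditioning on $\Phi \sim \psi'$ merely fixes the coordinates in $\dom(\psi')$ and leaves the remaining coordinates independent; hence $\Delta(v \mid \psi') = \bbE_{\Phi \sim \psi'}[f_\Phi(v \mid \dom(\psi'))]$ and $\Delta(\pi \mid \psi') = \bbE_{\Phi \sim \psi'}[f_\Phi(E(\pi, \Phi) \mid \dom(\psi'))]$. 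First I would record these two identities and note that the terms with $v \in \dom(\psi')$ vanish, so only $v \notin \dom(\psi')$ matter.

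The crux, and the step I expect to be the main obstacle, is a decoupling argument that rewrites the numerator as an expectation over the random selected set. The key observation is that, for $v \notin \dom(\psi')$, the event $\{v \in E(\pi, \Phi)\}$ depends only on the states $(\Phi(w))_{w \ne v}$: whether the trajectory of $\pi$ ever reaches a node labelled $v$ is decided by the observations made strictly before $v$ is selected, none of which is $\phi(v)$ itself (and along any root-to-leaf path $v$ is selected at most once). Under the product distribution, and after conditioning on $\Phi \sim \psi'$, this makes $\mathbf{1}[v \in E(\pi, \Phi)]$ independent of $\Phi(v)$, whereas $f_\Phi(v \mid \dom(\psi'))$ is a function of $\Phi(v)$ alone. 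The expectation of the product therefore factorizes, giving
\[
\sum_{v \in V} \Pr(v \in E(\pi, \Phi) \mid \Phi \sim \psi') \Delta(v \mid \psi') = \bbE_{\Phi \sim \psi'}\!\left[\sum_{v \in E(\pi, \Phi)} f_\Phi(v \mid \dom(\psi'))\right].
\]
Getting this independence statement fully rigorous --- in particular handling policies that reuse an element across different branches and justifying the conditional independence under $\psi'$ --- is where most of the care is needed.

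Once the numerator is in this form, I would apply the non-adaptive submodularity ratio realization by realization. For each $\phi \sim \psi'$ the set $S_\phi \coloneqq E(\pi, \phi)$ has $|S_\phi| \le k$, and taking $L = \dom(\psi')$ in the definition of $\gamma^\phi_{\dom(\psi'), k}$ yields $\sum_{v \in S_\phi} f_\phi(v \mid \dom(\psi')) \ge \gamma^\phi_{\dom(\psi'), k} \, f_\phi(S_\phi \mid \dom(\psi'))$. Using that $f(\cdot, \phi)$ is monotone and non-negative for each $\phi$ (which is exactly what makes $\gamma^\phi_{\dom(\psi'), k}$ well defined), the gain $f_\phi(S_\phi \mid \dom(\psi'))$ is non-negative, so I may replace $\gamma^\phi_{\dom(\psi'), k}$ by the uniform lower bound $\min_{\phi \sim \psi'} \gamma^\phi_{\dom(\psi'), k}$ and then take $\bbE_{\Phi \sim \psi'}$ of both sides. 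Combined with the two identities above, this gives
\[
\sum_{v \in V} \Pr(v \in E(\pi, \Phi) \mid \Phi \sim \psi') \Delta(v \mid \psi') \ge \Big(\min_{\phi \sim \psi'} \gamma^\phi_{\dom(\psi'), k}\Big)\, \Delta(\pi \mid \psi').
\]

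Finally I would take the minimum over $\psi' \subseteq \psi$ and $\pi \in \Pi_k$ to obtain $\gamma_{\psi, k} \ge \min_{\psi' \subseteq \psi} \min_{\phi \sim \psi'} \gamma^\phi_{\dom(\psi'), k}$, and then identify this with the stated bound $\min_{\phi \sim \psi} \gamma^\phi_{\dom(\psi), k}$. This last step uses that the submodularity ratio is non-increasing in its ground set, so that $\gamma^\phi_{\dom(\psi'), k} \ge \gamma^\phi_{\dom(\psi), k}$ whenever $\dom(\psi') \subseteq \dom(\psi)$; reconciling the simultaneous change of ground set ($\dom(\psi')$ versus $\dom(\psi)$) with the change in the range of consistent realizations ($\phi \sim \psi'$ versus $\phi \sim \psi$) is a bookkeeping point I would treat carefully, and I note that for the applications it is enough to further lower-bound by the minimum of $\gamma^\phi_{\dom(\psi), k}$ over \emph{all} realizations $\phi$.
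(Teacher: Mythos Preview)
Your proposal is correct and follows essentially the same route as the paper: both arguments use the product structure of $p$ and the locality of $f$ to rewrite the numerator as $\bbE_{\Phi\sim\psi'}\bigl[\sum_{v\in E(\pi,\Phi)} f_\Phi(v\mid\dom(\psi'))\bigr]$ and then apply the non-adaptive submodularity ratio realization by realization. Your direct independence argument (that $\mathbf{1}[v\in E(\pi,\Phi)]$ is measurable with respect to $(\Phi(w))_{w\ne v}$ while $f_\Phi(v\mid\dom(\psi'))$ depends only on $\Phi(v)$) is exactly a repackaging of the paper's step that replaces $\Pr(\Phi(v)=y\mid\Phi\sim\psi')$ by $\Pr(\Phi(v)=y\mid\Phi\sim\psi'\cup\psi_v)$ via duplication of $v$; you are also right to flag the final $\psi'$-versus-$\psi$ bookkeeping, which the paper glosses over and which is indeed harmless for the downstream applications since one may pass to $\min_\phi \gamma^\phi_{\dom(\psi),k}$.
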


By using \Cref{thm:stochastic-ratio}
and 
the result of \cite{Das2011}, 
we obtain the following lower bound of the adaptive submodularity ratio.
\begin{cor}\label{cor:stochastic} 
	Assume each column of $\bfA(\phi)$ is normalized. 
	For any $\bfb \in \bbR^n$ and any distribution $p_v$ of each $\phi(v)$, the adaptive submodularity ratio $\gamma_{\ell, k}$ can be bounded as
	\begin{equation}
	\gamma_{\ell, k} \ge \min_{\phi} \min_{S \subseteq V : |S| \le k + \ell} \lambda_{\min}(\bfA(\phi)_S^\top \bfA(\phi)_S), 
	\end{equation}
	where $\lambda_{\min}(\cdot)$ represents the smallest eigenvalue.
\end{cor}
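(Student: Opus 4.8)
The plan is to combine \Cref{thm:stochastic-ratio} with the classical eigenvalue bound on the submodularity ratio of the $R^2$ (variance-reduction) objective due to \citet{Das2011}, applied realization by realization. First I would verify that the feature-selection objective $f(S, \phi) = \| \bfb \|_2^2 - \min_{\bfw \in \bbR^S} \| \bfb - \bfA(\phi)_S \bfw \|_2^2$ meets the three hypotheses of \Cref{thm:stochastic-ratio}. Adaptive monotonicity holds because enlarging $S$ enlarges the subspace over which the residual is minimized, so $f(\cdot, \phi)$ is monotone for every fixed $\phi$ and hence $f$ is adaptive monotone. The requirement that $f(S, \phi)$ depend only on $(\phi(v))_{v \in S}$ is immediate, since $\bfA(\phi)_S$ is assembled solely from the columns indexed by $S$. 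Finally, the factorization $p(\phi) = \prod_{v \in V} p_v(\phi(v))$ is part of the problem setup.

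Granting these, \Cref{thm:stochastic-ratio} gives $\gamma_{\psi, k} \ge \min_{\phi \sim \psi} \gamma^\phi_{\dom(\psi), k}$, where $\gamma^\phi_{X, k}$ is the non-adaptive submodularity ratio of $f(\cdot, \phi)$. Taking the minimum over all partial realizations with $|\psi| \le \ell$, and noting that $\dom(\psi)$ is then a subset of $V$ of size at most $\ell$, I would bound $\gamma_{\ell, k} \ge \min_\phi \min_{X \subseteq V : |X| \le \ell} \gamma^\phi_{X, k}$.

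Next I would invoke the result of \citet{Das2011}, which shows that for the $R^2$ objective with normalized columns the submodularity ratio with respect to a base set $X$ and parameter $k$ is lower-bounded by the smallest eigenvalue of the Gram submatrix over any index set of size at most $|X| + k$, i.e.\ $\gamma^\phi_{X, k} \ge \min_{S : |S| \le |X| + k} \lambda_{\min}(\bfA(\phi)_S^\top \bfA(\phi)_S)$. Substituting $|X| \le \ell$ turns the cardinality constraint into $|S| \le k + \ell$, and combining with the previous display yields $\gamma_{\ell, k} \ge \min_\phi \min_{S : |S| \le k + \ell} \lambda_{\min}(\bfA(\phi)_S^\top \bfA(\phi)_S)$, which is the claim.

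The routine parts are the monotonicity check and the bookkeeping of the cardinality bound $|X| + k \le k + \ell$. The step that requires care is invoking \citet{Das2011} in exactly the right form: their bound is stated for a single fixed design matrix, so I must confirm that applying it for each fixed $\phi$ is legitimate and that the column-normalization hypothesis assumed here matches the one underlying their eigenvalue bound. The main obstacle is therefore not a hard calculation but rather quoting the external bound with the correct index set and checking that the per-realization application composes cleanly with \Cref{thm:stochastic-ratio}.
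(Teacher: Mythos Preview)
Your proposal is correct and mirrors the paper's own proof essentially step for step: verify the hypotheses of \Cref{thm:stochastic-ratio} (locality of $f(S,\phi)$ in $(\phi(v))_{v\in S}$, adaptive monotonicity via pointwise monotonicity of $f(\cdot,\phi)$, and the product form of $p$), apply that theorem to get $\gamma_{\psi,k}\ge\min_{\phi\sim\psi}\gamma^\phi_{\dom(\psi),k}$, then invoke the Das--Kempe eigenvalue bound on each $\gamma^\phi_{X,k}$ and take the minimum over $|\psi|\le\ell$. The only cosmetic difference is the order in which you take the minimum over $\psi$ versus apply the Das--Kempe bound, which does not affect the argument.
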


\subsection{Bound of Adaptivity Gap}
We can also obtain a bound on the adaptivity gap of adaptive feature selection as follows:
\begin{prop}\label{prop:feature-gap}
	Let $f(S, \phi) = \| \bfb \|^2_2 - \min_{\bfw \in \bbR^S} \| \bfb - \bfA(\phi)_S \bfw \|^2_2$ and suppose that $p(\phi)$ can be factorized as $p(\phi) = \prod_{v \in V} p_v(\phi(v))$. We have
	\begin{equation}
		\gap_k \ge \frac{\min_{\phi} \min_{S \subseteq V \colon |S| \le k} \lambda_{\min}(\bfA(\phi)_S^\top \bfA(\phi)_S)}{ \max_{\phi} \max_{S \subseteq V \colon |S| \le k} \lambda_{\max}(\bfA(\phi)_S^\top \bfA(\phi)_S)}.
	\end{equation}
\end{prop}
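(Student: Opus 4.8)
The plan is to read off the statement as an instance of \Cref{thm:gap}, which gives $\gap_k(f,p) \ge \beta_{\emptyset, k}\gamma_{\emptyset, k}$, where $\gamma_{\emptyset,k}$ is the adaptive submodularity ratio of $(f,p)$ and $\beta_{\emptyset,k}$ is the supermodularity ratio of the non-adaptive objective $g(\cdot) \coloneqq \bbE_\Phi[f(\cdot, \Phi)]$. The numerator of the claimed bound is supplied immediately by \Cref{cor:stochastic} with $\ell = 0$, since $\gamma_{\emptyset, k} = \gamma_{0,k} \ge \min_{\phi} \min_{S \colon |S| \le k} \lambda_{\min}(\bfA(\phi)_S^\top \bfA(\phi)_S)$. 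Hence it remains only to prove $\beta_{\emptyset, k} \ge 1 \big/ \max_{\phi} \max_{S \colon |S| \le k} \lambda_{\max}(\bfA(\phi)_S^\top \bfA(\phi)_S)$, after which multiplying the two bounds yields the proposition.

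The heart of the argument is a per-realization inequality for the $R^2$-type objective $f(\cdot,\phi)$. First I would record closed forms via a standard least-squares computation: writing $\Sigma_S(\phi) \coloneqq \bfA(\phi)_S^\top \bfA(\phi)_S$ and $\mathbf{y} \coloneqq \bfA(\phi)_S^\top \bfb$, we have $f(S,\phi) = \bfb^\top \bfA(\phi)_S \Sigma_S(\phi)^{-1} \bfA(\phi)_S^\top \bfb = \mathbf{y}^\top \Sigma_S(\phi)^{-1} \mathbf{y}$, while for a single feature $f(v,\phi) = (\mathbf{a}_v^\top \bfb)^2$ (using that the columns $\mathbf{a}_v$ of $\bfA(\phi)$ are normalized), so that $\sum_{v \in S} f(v,\phi) = \bfb^\top \bfA(\phi)_S \bfA(\phi)_S^\top \bfb = \mathbf{y}^\top \mathbf{y}$. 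Bounding the inverse Gram matrix by its smallest eigenvalue, $\mathbf{y}^\top \Sigma_S(\phi)^{-1} \mathbf{y} \ge \lambda_{\min}(\Sigma_S(\phi)^{-1})\,\mathbf{y}^\top \mathbf{y} = \mathbf{y}^\top \mathbf{y} \big/ \lambda_{\max}(\Sigma_S(\phi))$, gives the pointwise bound $f(S,\phi) \ge \sum_{v \in S} f(v,\phi) \big/ \lambda_{\max}(\Sigma_S(\phi))$ for every $\phi$ and every $S$ with $|S| \le k$.

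Next I would pass from realizations to the non-adaptive objective. Setting $M \coloneqq \max_{\phi} \max_{S \colon |S| \le k} \lambda_{\max}(\Sigma_S(\phi))$, the previous step yields $f(S,\phi) \ge \frac{1}{M}\sum_{v \in S} f(v,\phi)$ pointwise in $\phi$; since this inequality is linear in $f$, taking expectations over $\Phi$ preserves it, so $g(S) \ge \frac{1}{M}\sum_{v \in S} g(v)$. Because $f(\emptyset,\phi)=0$ gives $g(\emptyset)=0$, the defining minimization of the supermodularity ratio reduces to $\beta_{\emptyset,k} = \min_{S \colon |S|\le k} g(S)\big/\sum_{v\in S} g(v) \ge 1/M$. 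Combining this with the bound on $\gamma_{\emptyset,k}$ through \Cref{thm:gap} produces exactly the stated ratio $\min_{\phi}\min_{S}\lambda_{\min}\big/\max_{\phi}\max_{S}\lambda_{\max}$.

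The main obstacle is the per-realization step: one must identify the correct closed forms of $f(S,\phi)$ and $f(v,\phi)$ and recognize that the ratio $f(S,\phi)\big/\sum_{v\in S}f(v,\phi)$ is governed precisely by the top eigenvalue of the Gram matrix, rather than by more delicate spectral quantities. Once that pointwise inequality is established, the expectation step and the appeals to \Cref{thm:gap} and \Cref{cor:stochastic} are routine. One minor point to watch is the column-normalization convention inherited from \Cref{cor:stochastic}: it is what makes $\sum_{v\in S} f(v,\phi)$ equal to $\mathbf{y}^\top\mathbf{y}$, and without it the denominator would instead involve a diagonally rescaled Gram matrix.
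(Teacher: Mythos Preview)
Your proposal is correct and follows essentially the same route as the paper: apply \Cref{thm:gap}, invoke \Cref{cor:stochastic} for the $\gamma_{\emptyset,k}$ factor, and bound $\beta_{\emptyset,k}$ by writing $f(S,\phi)$ in the closed form $\mathbf{y}^\top \Sigma_S(\phi)^{+}\mathbf{y}$ and using the eigenvalue inequality $\mathbf{y}^\top \Sigma_S(\phi)^{+}\mathbf{y} \ge \|\mathbf{y}\|_2^2/\lambda_{\max}(\Sigma_S(\phi))$ before taking expectations. The only cosmetic difference is that the paper keeps the pseudoinverse throughout rather than assuming $\Sigma_S(\phi)$ is invertible, which is harmless here since $\mathbf{y} = \bfA(\phi)_S^\top \bfb$ always lies in the range of $\Sigma_S(\phi)$.
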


\begin{rem}
	These results on the adaptive submodularity ratio and adaptivity gap can be extended to more general loss functions with restricted strong concavity and restricted smoothness as in \citet{Elenberg18}.
\end{rem}

\begin{figure*}
	\centering
	\subfigure[infmax, synth., linear threshold]{
		\includegraphics[width=0.29\textwidth]{./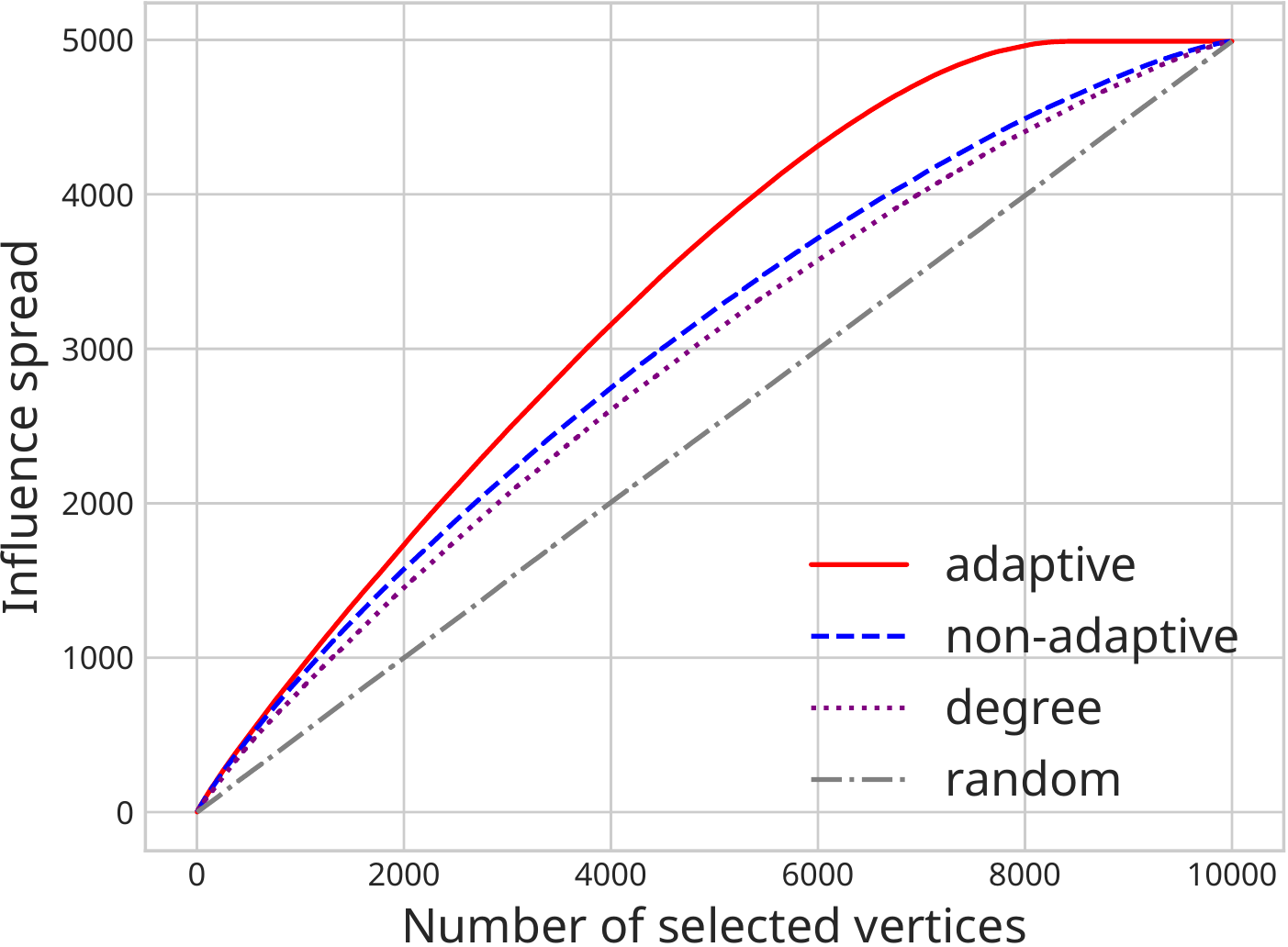}\label{fig:infmax_random_lt}
	}
	\subfigure[infmax, synth., extended linear thre.]{
		\includegraphics[width=0.29\textwidth]{./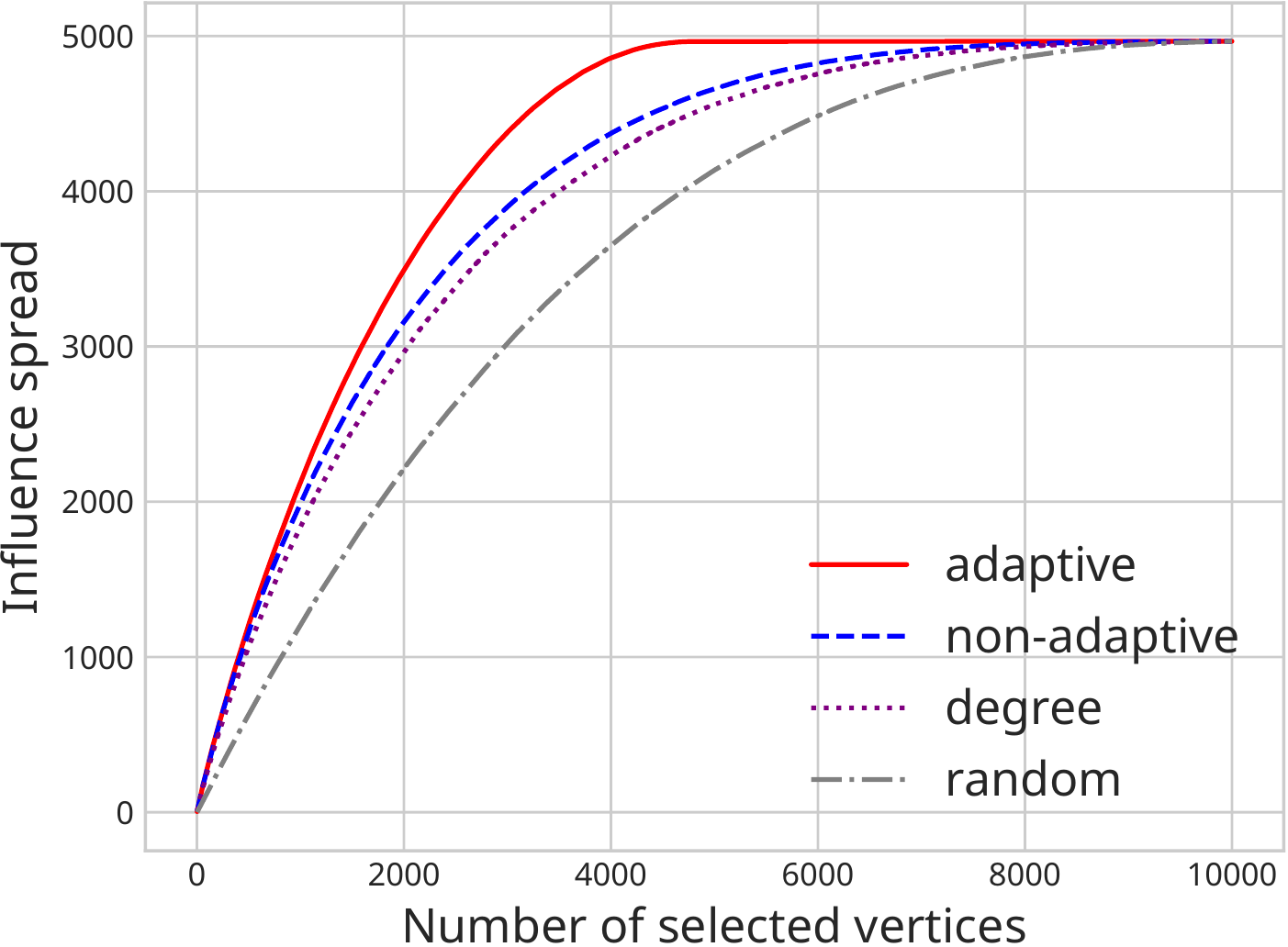}\label{fig:infmax_random_elt}
	}
	\subfigure[infmax, yahoo, linear threshold]{
		\includegraphics[width=0.29\textwidth]{./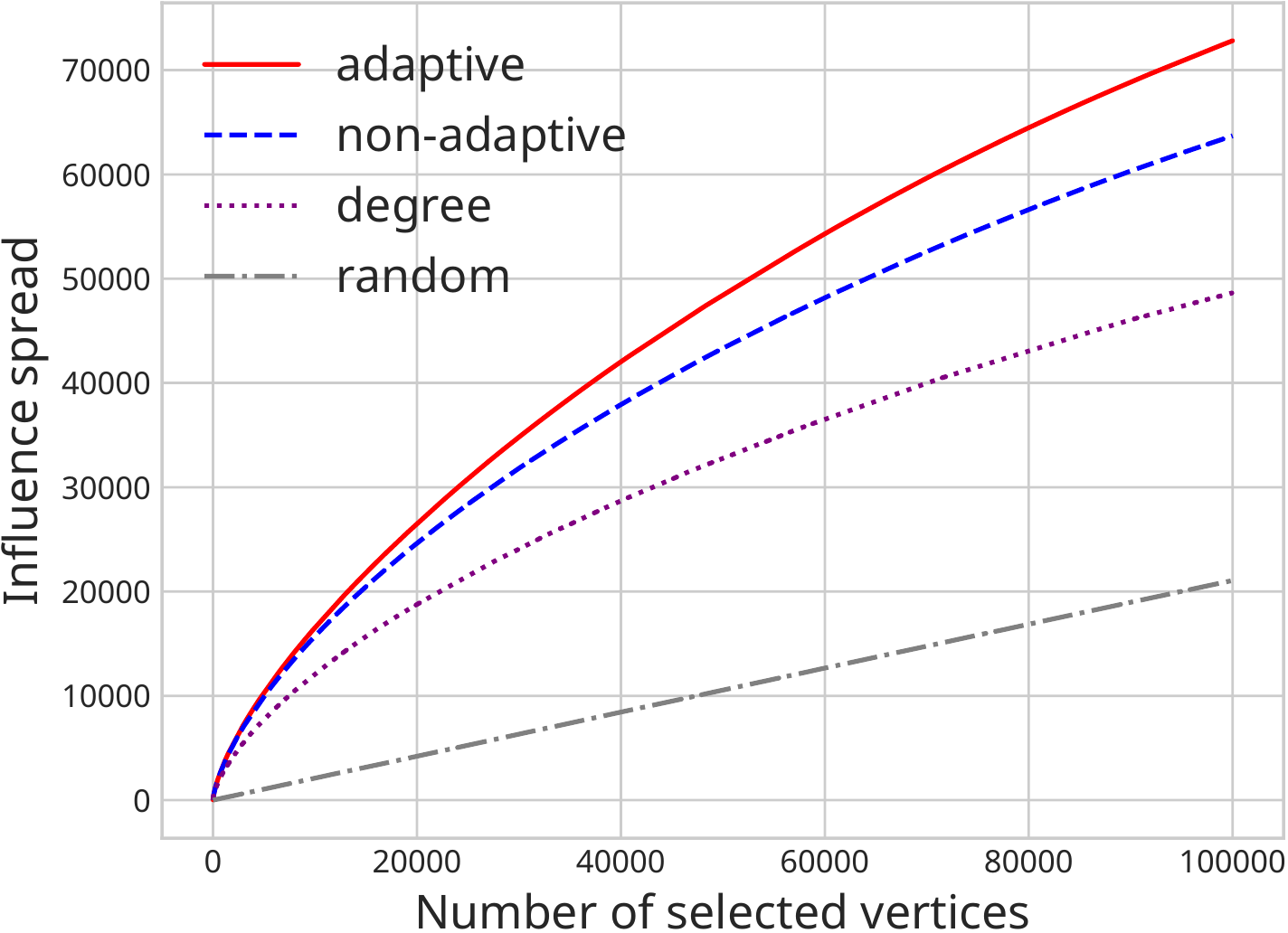}\label{fig:infmax_yahoo_lt}
	}
	\subfigure[infmax, yahoo, extended linear thre.]{
		\includegraphics[width=0.29\textwidth]{./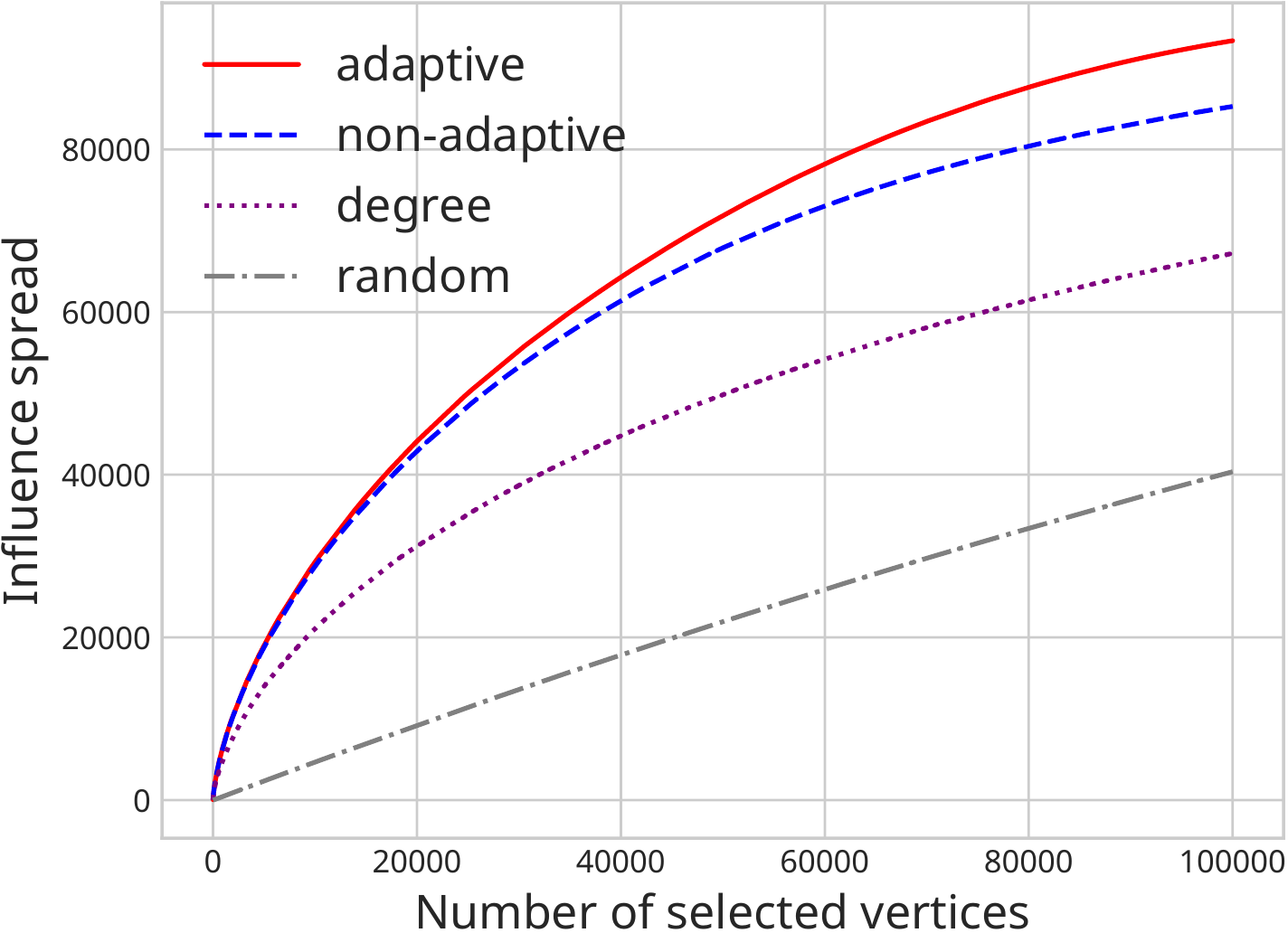}\label{fig:infmax_yahoo_elt}
	}
	\subfigure[feature, $\sigma=0.1$]{
		\includegraphics[width=0.29\textwidth]{./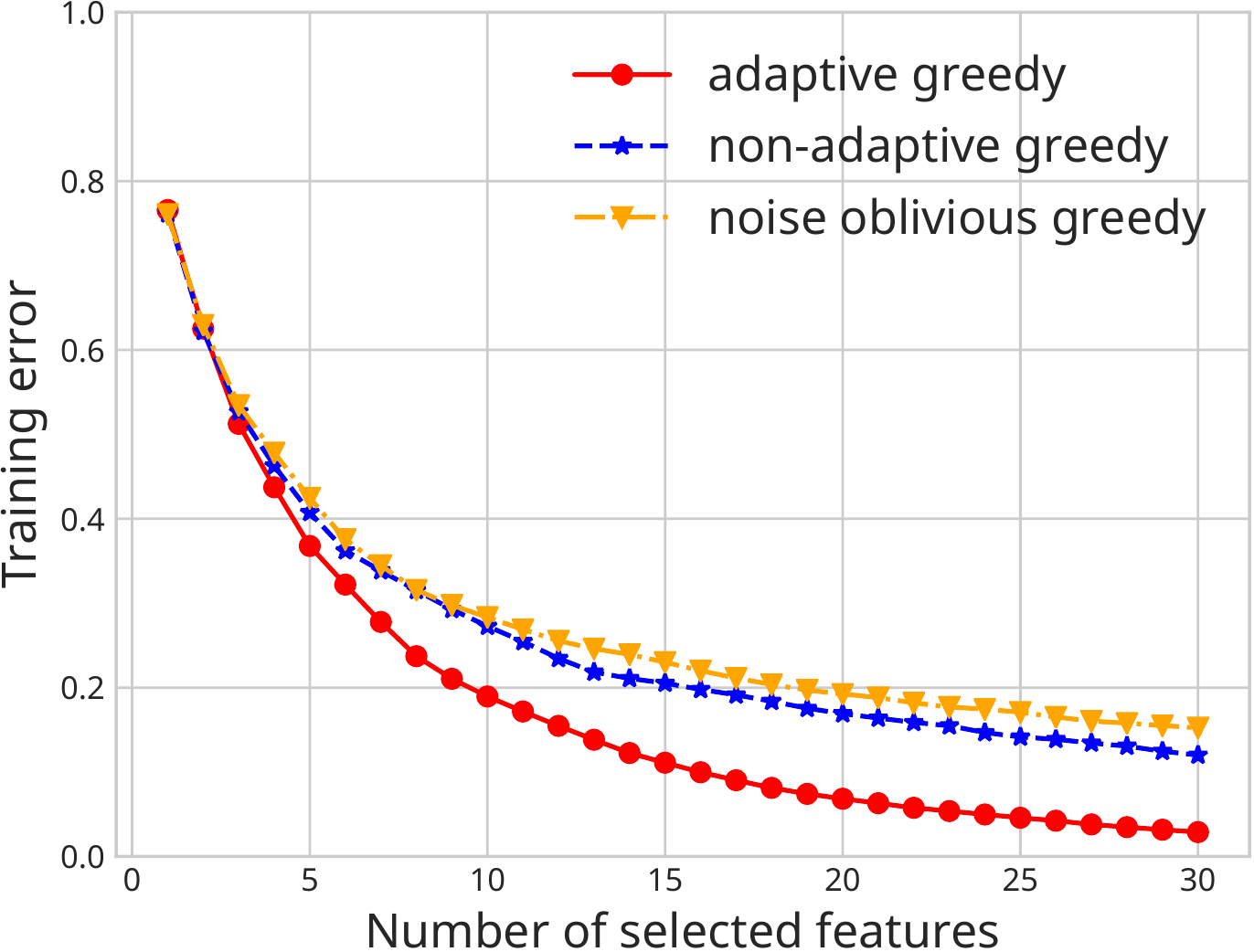}\label{fig:feature_noise01}
	}
	\subfigure[feature, $\sigma=0.2$]{
		\includegraphics[width=0.29\textwidth]{./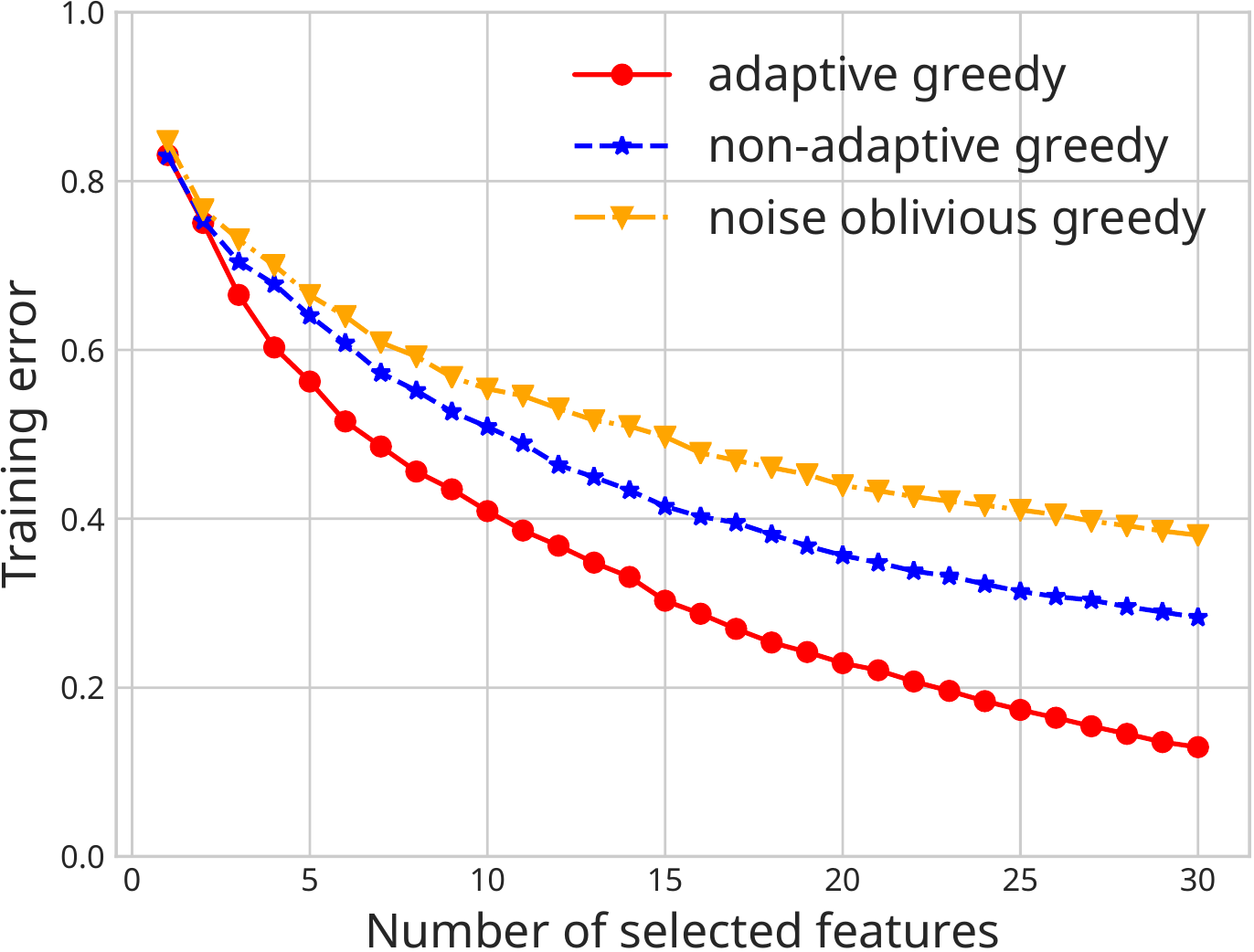}\label{fig:feature_noise02}
	}
	\caption{Experimental results on adaptive influence maximization \subref{fig:infmax_random_lt}--\subref{fig:infmax_yahoo_elt} and adaptive feature selection \subref{fig:feature_noise01}--\subref{fig:feature_noise02}. \subref{fig:infmax_random_lt} and \subref{fig:infmax_random_elt} are the results on synthetic datasets with the linear threshold model and extended linear threshold model, respectively. \subref{fig:infmax_yahoo_lt} and \subref{fig:infmax_yahoo_elt} are the results on Yahoo!\ dataset \cite{Yahoo} with the linear threshold model and extended linear threshold model, respectively. \subref{fig:feature_noise01} and \subref{fig:feature_noise02} are the results on synthetic datasets with uniform noise distribution on $[-\sigma, \sigma]$ with $\sigma = 0.1, 0.2$, respectively.}
\end{figure*}

\section{Experiments}\label{sec:experiment}
We conduct experiments on two applications: adaptive influence maximization and adaptive feature selection.
For each setting, we conduct $20$ trials and plot their mean values. 

\subsection{Adaptive Influence Maximization}
\paragraph{Datasets.}
We conduct experiments on two datasets of adaptive influence maximization. 
The first dataset is a synthetic bipartite graph generated randomly according to Erd\"{o}s--Renyi rule. 
We set the number of source and sink vertices to 10000, i.e., $|V| = |U| = 10000$.
For each pair $(v, u) \in V \times U$, we add an edge between $v$ and $u$ with probability $0.001$.
The second dataset is Yahoo! Search Marketing Advertiser--Phrase Bipartite Graph \cite{Yahoo}, which is a bipartite graph representing relationships between advertisers and search phrases; 
we have $|V| = 459678$, $|U| = 193582$, and $|A| = 2278448$.
For both datasets, the weight of each vertex in $U$ is drawn from the uniform distribution on $[0, 1]$.

\paragraph{Diffusion Model.}
We consider two diffusion models.
The first one is the linear threshold model.
The probability that each edge $(v, u) \in A$ is alive is set to the reciprocal of the degree of the sink vertex, that is, $1 / |\delta_{-}(v)|$.
As the second diffusion model, we consider an extended version of the linear threshold model, which is also a special case of the triggering model.
In this model, for each sink vertex $v$, 
the subset of incoming live edges is determined as follows. 
We sample $t$ edges with replacement from $\delta_{-}(v)$ uniformly at random, 
and an edge turns alive if it is sampled at least once. 
In our experiments, parameter $t$ is set to $3$.

\paragraph{Benchmarks.}
We compare the adaptive greedy algorithm with three non-adaptive benchmarks. The first benchmark is the non-adaptive greedy algorithm, called \textsf{non-adaptive}, which is a standard greedy algorithm \cite{NWF78} for maximizing the expected value of the objective function $\bbE_\Phi [f( \cdot , \Phi)]$. The second benchmark is \textsf{Degree}, which selects the set of vertices with the top-$k$ largest degree. The third benchmark is \textsf{Random}, which selects a random subset of size $k$.

\paragraph{Results.}
Objective values achieved by the algorithms are shown in \Cref{fig:infmax_random_lt,fig:infmax_random_elt,fig:infmax_yahoo_lt,fig:infmax_yahoo_elt}.
In all settings, the adaptive greedy algorithm outperforms all the benchmarks. 

\subsection{Adaptive Feature Selection}
\paragraph{Datasets.}
We use synthetic datasets generated randomly as follows.
First we determine the mean $\bbE_\Phi[\bfA(\Phi)] \in \bbR^{m \times n}$ according to the uniform distribution on $[0, 1]$.
After that, each column is normalized so that its mean is $0$ and its standard deviation is $1$.
We obtain $\bfA(\phi)$ by adding $\epsilon \in \bbR^{m \times n}$ to $\bbE_\Phi[\bfA(\Phi)]$, 
where each element of $\epsilon$ is 
drawn from the uniform distribution on $[- \sigma, \sigma]$.
We consider two settings: $\sigma = 0.1$ and $0.2$. 
We select a random sparse subset $S^*$ of features such that $|S^*| = 30$, 
and we let $\bfy = \bfA(\phi)_{S^*} \bfw$ 
be the response vector, where each element of $\bfw \in \bbR^{S}$ is drawn from the standard normal distribution.
In all settings, we set $n = 1000$ and $m = 100$.

\paragraph{Benchmarks.}
We compare the adaptive greedy algorithm with two benchmarks.
The first benchmark is the non-adaptive greedy algorithm.
Regarding the adaptive and non-adaptive greedy algorithms, it is hard to evaluate the exact values of the objective functions, and so we approximately evaluate them by sampling $\bfA(\Phi)$ randomly according to posterior distributions.
The second benchmark is the noise-oblivious greedy algorithm, a non-adaptive algorithm that greedily selects a subset based on the mean, $\bbE_\Phi [\bfA(\Phi)]$.

\paragraph{Results.}
The results are shown in \Cref{fig:feature_noise01,fig:feature_noise02}.
In both settings, the adaptive greedy algorithm outperforms the two benchmarks.

\section{Related Work}\label{sec:related}
\paragraph{Comparison with \citep{Kusner14}.}
To our knowledge, the first attempt to generalize submodularity ratio to the adaptive setting is \citep{Kusner14}.
They defined \textit{approximate adaptive submodularity}, 
a notion that is similar to ours, as follows: 
\begin{equation}
\gamma = \min_{S \subseteq V, \psi} \frac{\sum_{v \in S} \Delta(v | \psi)}{\Delta(S | \psi)}.
\end{equation}
The key difference is that they did not replace subset $S$ with policy $\pi$. 
In \Cref{sec:counter-kusner}, we show that the approximate adaptive submodularity is not sufficient for providing an approximation guarantee of the adaptive greedy algorithm.

\paragraph{Comparison with \citep{YGO17}.}
Another attempt to relax adaptive submodularity is presented in \cite{YGO17}. They introduced \textit{$\zeta$-weakly adaptive submodular functions} as follows:
\begin{defn}[$\zeta$-weak adaptive submodularity]
	Let $f \colon 2^V \times \calY^V \to \bbR$ be a set function and $p$ be a distribution of $\phi$.
	For any $\zeta \ge 1$, we say $f$ is adaptive submodular with respect to $p$ if for any partial realization $\psi \subseteq \psi'$ and any element $v \in V \setminus \dom(\psi')$, it holds
	$
	\zeta \Delta(v | \psi) \ge \Delta(v | \psi').
	$
	Let $\zeta^*$ be the infimum of $\zeta$ satisfying the above inequality.
\end{defn}
Analogous to our adaptive submodularity ratio, 
one can readily see that $1$-weak adaptive submodularity is equivalent to the adaptive submodularity. 
In general, however, there is a difference between the two notions; 
the adaptive submodularity ratio can be bounded from below by $1/\zeta^*$, implying that 
it is more demanding to bound the value of $\zeta^*$ 
than that of the adaptive submodularity ratio. 
\begin{prop}\label{prop:curvature-ratio-bound}
	For any set function $f \colon 2^V \times \calY^V \to \bbR$ and distribution $p$, we have
	$
	\frac{1}{\zeta^*} \le \min_{k \in \bbZ_{\ge 0}, \psi} \gamma_{\psi, k}.
	$
\end{prop}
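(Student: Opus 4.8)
The plan is to prove the equivalent pointwise statement that $\gamma_{\psi, k} \ge 1/\zeta^*$ holds for every partial realization $\psi$ and every $k \in \bbZ_{\ge 0}$; taking the minimum over $\psi$ and $k$ then yields the claim. Unfolding the definition of $\gamma_{\psi,k}$, it suffices to show that for every $\psi' \subseteq \psi$ and every $\pi \in \Pi_k$ one has
\begin{equation}
\Delta(\pi \mid \psi') \le \zeta^* \sum_{v \in V} \Pr(v \in E(\pi, \Phi) \mid \Phi \sim \psi')\, \Delta(v \mid \psi').
\end{equation}
Here I will use that $\zeta^*$ itself satisfies the defining inequality $\zeta^* \Delta(v \mid \psi) \ge \Delta(v \mid \psi'')$ for all $\psi \subseteq \psi''$ and $v \notin \dom(\psi'')$, since it is the infimum of the valid $\zeta$ and the inequality is preserved under limits. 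The degenerate case, in which some $\Delta(v\mid\psi)=0<\Delta(v\mid\psi'')$ forces $\zeta^*=\infty$, is trivial because then $1/\zeta^*=0 \le \gamma_{\psi,k}$, and the case $\Delta(\pi\mid\psi')=0$ is handled by the $0/0=1$ convention together with adaptive monotonicity (which makes the numerator nonnegative).

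Next I would decompose $\Delta(\pi \mid \psi')$ over the nodes of the policy tree. For a fixed realization $\phi \sim \psi'$, the policy selects a sequence $v_1^\phi, v_2^\phi, \dots$ of elements, and the marginal gain telescopes into single-element increments $f(\dom(\psi_i^\phi)\cup\{v_i^\phi\},\phi) - f(\dom(\psi_i^\phi),\phi)$, where $\psi_i^\phi$ is the partial realization observed just before the $i$-th selection. Grouping these increments by the node $t$ of the policy tree at which they occur, writing $v_t$ for the element chosen at $t$ and $\psi_t \supseteq \psi'$ for the partial realization associated with $t$, the tower rule gives the identity
\begin{equation}
\Delta(\pi \mid \psi') = \sum_{t} \Pr(\Phi \sim \psi_t \mid \Phi \sim \psi')\, \Delta(v_t \mid \psi_t),
\end{equation}
where the sum ranges over all nodes $t$ of $\pi$ and $\Pr(\Phi \sim \psi_t \mid \Phi \sim \psi')$ is the probability of reaching $t$.

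With this decomposition in hand, I would apply the $\zeta^*$-weak adaptive submodularity termwise: since $\psi' \subseteq \psi_t$ and $v_t \notin \dom(\psi_t)$, we have $\Delta(v_t \mid \psi_t) \le \zeta^* \Delta(v_t \mid \psi')$, whence
\begin{equation}
\Delta(\pi \mid \psi') \le \zeta^* \sum_{t} \Pr(\Phi \sim \psi_t \mid \Phi \sim \psi')\, \Delta(v_t \mid \psi').
\end{equation}
It then remains to regroup the right-hand side by element. Because along any single realization an element is selected at most once, the nodes $t$ with $v_t = v$ lie on pairwise incomparable paths, so the reaching events are disjoint and their probabilities sum to $\Pr(v \in E(\pi,\Phi) \mid \Phi \sim \psi')$. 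Substituting this identity converts the bound into exactly the target inequality, completing the argument.

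The step I expect to be the main obstacle is making the node decomposition of $\Delta(\pi \mid \psi')$ and the subsequent regrouping fully rigorous, in particular justifying the telescoping-plus-tower-rule identity and the disjointness of the reaching events when an element appears at several nodes of the tree (the \emph{duplication} issue already flagged in the proof sketch of \Cref{thm:bipartite_ratio}). Everything after that is a termwise application of the single inequality defining $\zeta^*$.
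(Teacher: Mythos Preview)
Your proposal is correct and follows essentially the same approach as the paper's proof: both decompose $\Delta(\pi\mid\psi')$ into per-node expected marginal gains, apply the defining inequality of $\zeta^*$ termwise to replace $\Delta(v_t\mid\psi_t)$ by $\zeta^*\Delta(v_t\mid\psi')$, and regroup by element. The only cosmetic difference is that the paper handles multiple occurrences of the same element via the duplication trick (so that it can index by $v\in V$ with a single $\psi_v$), whereas you sum directly over the nodes of the policy tree and then use disjointness of the reaching events to collapse the node sum into $\Pr(v\in E(\pi,\Phi)\mid\Phi\sim\psi')$; these are equivalent bookkeeping devices.
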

We provide a proof in \Cref{subsec:proof-yong}.
\citet{YGO17} studied a problem called \textit{group-based active diagnosis} and gave a bound of $\zeta$, but some vital assumptions seem to have been missed.
In \Cref{sec:counter-yong}, we provide a problem instance in which their bound does not hold. 
We also present instances of adaptive influence maximization and adaptive feature selection for which our framework provides strictly better approximation ratios than those obtained with the weak adaptive submodularity in \Cref{subsec:comparison-yong-infmax,subsec:comparison-yong-feature}.  

\paragraph{Adaptive Submodularity.}
Adaptive submodularity was proposed by \citet{GK11}. 
There are several attempts to adaptively maximize set functions that do not satisfy adaptive submodularity (e.g., \cite{Kusner14,YGO17}). 
\citet{Chen15} analyzed the greedy policy focusing on the maximization of mutual information, which does not have adaptive submodularity. 

\paragraph{Submodularity Ratio.}
Submodularity ratio was proposed by \citet{Das2011} for sparse regression with squared $\ell_2$ loss. Recently, \citet{Elenberg18} extended this result to more general loss functions with restricted strong convexity and restricted smoothness. \citet{Bogunovic18} proposed the notion of \textit{supermodularity ratio}. \citet{Bian17} provided a guarantee of the non-adaptive greedy algorithm 
for the case where the total curvature and submodularity ratio of objective functions are bounded.

\paragraph{Influence Maximization.}
Influence maximization was proposed by \citet{KKT03}. An adaptive version of influence maximization was first considered by \citet{GK11}. They showed that this objective function satisfies adaptive submodularity under the independent cascade model in general graphs. Influence maximization on a bipartite graph has been studied for applications to advertisement selection \cite{Alon12,Soma14}. This problem setting was extended to the adaptive setting by \citet{Hatano16}, but only the independent cascade model was considered. 
The curvature of its objective function was studied by \citet{Maehara17}.

\paragraph{Feature Selection.}
\citet{KKLP17} considered the problem called adaptive feature selection, but their problem setting is different from ours.
In their setting, the learner solves feature selection problems multiple times.
They studied the adaptivity among the multiple rounds, 
while we studied the adaptivity inside of a single round. 

\section*{Acknowledgements}
K.F. was supported by JSPS KAKENHI Grant Number JP 18J12405.

\nocite{langley00}

\bibliography{main}
\bibliographystyle{icml2019}


\appendix
\setcounter{equation}{0}
\setcounter{prop}{0}
\setcounter{lem}{0}
\setcounter{algorithm}{0}
\renewcommand{\theequation}{A\arabic{equation}}




\clearpage
\onecolumn
\begin{center}
	{\fontsize{18pt}{0pt}\selectfont \bf Appendices}
\end{center}

\appendix

\section{Proof for Adaptive Submodularity Ratio}\label{sec:app-ratio}
\begin{proof}[Proof of \Cref{prop:ratio_one}]
	First we deal with the ``if'' part.
	Let $\psi_v$ be the partial realization just before $v$ is selected in $\pi$.
	If there are multiple partial realizations $\psi$ such that $\pi(\psi) = v$, we can duplicate $v$ and take them to be different elements.
	From adaptive submodularity, for any partial realization $\psi$ and policy $\pi$, we have
	\begin{align}
	\Delta(\pi | \psi) 
	&= \sum_{v \in V} \Pr(v \in E(\pi, \Phi) | \Phi \sim \psi) \Delta(v | \psi \cup \psi_v) \\
	&\le \sum_{v \in V} \Pr(v \in E(\pi, \Phi) | \Phi \sim \psi) \Delta(v | \psi).
	\end{align}
	Thus we can see $\gamma_{\psi, k} \ge 1$. 
	Moreover, 
	if $\pi$ is a policy that selects a single element, 
	the above inequality holds with equality.
	These two facts imply $\gamma_{\psi, k} = 1$.
	
	Next we deal with the ``only if'' part.
	Let $\psi \subseteq \psi'$ be any partial realization such that $|\psi| + 1 = |\psi'|$ and $v \in V \setminus \dom(\psi')$ be any element.
	We define $u \in \dom(\psi') \setminus \dom(\psi)$ to be the additional element and $y$ its state in $\psi'$, i.e., $\psi' = \psi \cup \{(u, y)\}$.
	Let us consider a policy $\pi$ that first selects $u$ and, 
	if $\phi(u) = y$, proceeds to select $v$.
	From the assumption, we have $\gamma_{\psi, 2} = 1$, and thus  $\Delta(\pi|\psi) \le \sum_{v \in V} \Pr(v \in E(\pi, \Phi)) \Delta(v | \psi)$.
	We can calculate the left and right hand sides as follows:
	\begin{align}
		\text{(LHS)} &= \Delta(u|\psi) + \Pr(\Phi(u) = y | \Phi \sim \psi) \Delta(v | \psi'),\\
		\text{(RHS)} &= \Delta(u|\psi) + \Pr(\Phi(u) = y | \Phi \sim \psi) \Delta(v | \psi).
	\end{align}
	Therefore, we obtain $\Delta(v | \psi') \le \Delta(v | \psi)$. By sequentially concatenating inequalities of this type, we can show that the statement holds for any $\psi \subseteq \psi'$.
\end{proof}

\section{Proof for the Adaptive Greedy Algorithm}\label{sec:app-greedy}

To prove \Cref{thm:adaptive-greedy}, we introduce a lemma provided by \citet{GK11}.
Let $\pi' @ \pi$ be a concatenated policy, i.e., a policy that executes $\pi'$ as if from scratch after executing $\pi$.
Adaptive monotonicity is known to be equivalent to the following condition: 
\begin{lem}[{Adopted from \citep[Lemma A.8]{GK11}}]\label{lem:monotone}
	Fix $f:2^V \times \calY^V \to \mathbb{R}_{\ge 0}$.
	Then we have $\Delta(v|\psi) \ge 0$ for all $\psi$ with $p(\psi) > 0$ and all $v \in V$ if and only if for all policies $\pi$ and $\pi'$, we have $\favg(\pi) \le \favg(\pi' @ \pi)$.
\end{lem}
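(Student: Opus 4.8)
The plan is to prove the two implications separately, in both cases reducing the policy-level statement to the pointwise marginal gains through the additive decomposition of $\Delta(\pi' \mid \psi)$ already used in the proof of \Cref{prop:ratio_one}.

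For the forward direction (pointwise nonnegativity $\Rightarrow$ $\favg(\pi) \le \favg(\pi' @ \pi)$), I would first record the identity
\[
\favg(\pi' @ \pi) = \favg(\pi) + \bbE_{\Psi}\!\left[ \Delta(\pi' \mid \Psi) \right],
\]
where $\Psi$ is the random partial realization produced by running $\pi$ under $\Phi$. This follows by conditioning on $\Psi = \psi$: on that event $E(\pi' @ \pi, \Phi) = \dom(\psi) \cup E(\pi', \Phi)$, so the inner conditional expectation is exactly $\bbE[f(\dom(\psi), \Phi) \mid \Phi \sim \psi] + \Delta(\pi' \mid \psi)$ by the definition of $\Delta(\pi' \mid \psi)$, and averaging over $\psi$ recovers $\favg(\pi)$ in the first term. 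It then suffices to show $\Delta(\pi' \mid \psi) \ge 0$ for every $\psi$ arising with positive probability. Decomposing $\pi'$ node by node (duplicating repeated elements as in \Cref{prop:ratio_one}) gives $\Delta(\pi' \mid \psi) = \sum_{v \in V} \Pr(v \in E(\pi', \Phi) \mid \Phi \sim \psi)\, \Delta(v \mid \psi \cup \psi_v)$, where $\psi_v$ is the observation made along $\pi'$ just before $v$ is selected. Each $\psi \cup \psi_v$ has positive probability along a realized branch, so each factor $\Delta(v \mid \psi \cup \psi_v)$ is nonnegative by hypothesis, the sum is nonnegative, and the identity gives $\favg(\pi' @ \pi) \ge \favg(\pi)$.

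For the reverse direction I would argue directly by isolating a single marginal gain. Fix a partial realization $\psi$ with $p(\psi) > 0$ and an element $v$; the case $v \in \dom(\psi)$ is trivial since then $\Delta(v \mid \psi) = 0$, so assume $v \notin \dom(\psi)$. Let $\pi$ be the policy that selects the elements of $\dom(\psi)$ in a fixed order (regardless of the observed states), so that $\Pr(\Psi = \psi) = \Pr(\Phi \sim \psi) = p(\psi)$. Let $\pi'$ be the policy that re-queries the elements of $\dom(\psi)$ — thereby learning the states $\Phi$ assigns to them without changing the selected set — and then selects $v$ if and only if those states coincide with $\psi$. Computing $\favg(\pi' @ \pi) - \favg(\pi)$, every branch on which the observations differ from $\psi$ contributes zero because $\pi' @ \pi$ then selects the same set $\dom(\psi)$ as $\pi$, while the single branch $\Phi \sim \psi$ contributes $p(\psi)\,\Delta(v \mid \psi)$. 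The hypothesis $\favg(\pi) \le \favg(\pi' @ \pi)$ then yields $p(\psi) \Delta(v \mid \psi) \ge 0$, and dividing by $p(\psi) > 0$ gives $\Delta(v \mid \psi) \ge 0$.

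The step I expect to be the main obstacle is the bookkeeping in the reverse direction: making precise the semantics of executing $\pi'$ ``as if from scratch'' while still allowing it to condition on whether the realization matches $\psi$. The resolution is the re-querying device above — re-selecting already-chosen elements adds nothing to $E(\pi' @ \pi, \Phi)$ and yields no marginal gain because the states are deterministic given $\Phi$ — so that $\pi'$ recovers exactly the information needed to single out the event $\Phi \sim \psi$ and the contribution $p(\psi)\Delta(v \mid \psi)$. All remaining manipulations are the routine conditioning computations already set up in the forward direction.
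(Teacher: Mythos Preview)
The paper does not actually prove this lemma; it merely quotes it from \citet{GK11} and then immediately proceeds to the proof of \Cref{thm:adaptive-greedy}. So there is no in-paper argument to compare against.

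Your proposed argument is correct on both directions. The forward direction is the standard telescoping/decomposition already used in \Cref{prop:ratio_one}: once you have $\favg(\pi'@\pi)-\favg(\pi)=\bbE_\Psi[\Delta(\pi'|\Psi)]$ and expand $\Delta(\pi'|\psi)$ as a nonnegative combination of $\Delta(v|\psi\cup\psi_v)$, the hypothesis finishes it. For the reverse direction your re-querying construction is exactly the right device given the ``as if from scratch'' semantics of $\pi'@\pi$ stated in the paper: since $\pi'$ cannot see what $\pi$ observed, having $\pi'$ re-select the elements of $\dom(\psi)$ is the clean way to make it condition on the event $\Phi\sim\psi$, and because these elements are already in $E(\pi,\Phi)$ the re-selection contributes nothing to the objective. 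The resulting identity $\favg(\pi'@\pi)-\favg(\pi)=p(\psi)\,\Delta(v|\psi)$ then isolates the desired marginal gain. This is essentially the argument one finds in \citet{GK11}, so you are not missing any idea.
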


\begin{proof}[Proof of \Cref{thm:adaptive-greedy}]
	Let $\psi$ be any possible partial realization that can appear while executing the adaptive greedy policy $\pi$.
	Since $\pi$ stops after $\ell$ steps, we have $|\psi| \le \ell$.
	According to the definition of adaptive submodularity ratio, we have
	\begin{equation}\label{eq:ratio-dependent-bound}
	\gamma_{\ell, k} \Delta(\pi^* | \psi) \le \sum_{v \in V} \Pr(v \in E(\pi^*, \Phi) | \Phi \sim \psi) \Delta(v | \psi) \le k \max_{v \in V} \Delta(v | \psi)
	\end{equation}
	since $\sum_{v \in V} \Pr(v \in E(\pi^*, \Phi) | \Phi \sim \psi) = \bbE[|E(\pi^*, \Phi)|] \le k$.
	Let $\Psi$ be a random partial realization observed by executing $\pi_{[i]}$, where $\pi_{[i]}$ is a policy obtained by running $\pi$ until it terminates or it selects $i$ elements.
	Formally, $\Psi$ conforms to the distribution $p_\Psi(\psi) \coloneqq \Pr(\Psi = \psi \mid \exists \phi, ~ \psi = \{(v, \phi(v)) \mid v \in E(\pi_{[i]}, \phi)\})$.
	Then we can lower-bound the expected single step gain as follows:
	\begin{align}
	\favg(\pi_{[i+1]}) - \favg(\pi_{[i]})
	&= \bbE \left[ \max_{v \in V} \Delta(v | \Psi) \right] \tag{due to the property of the adaptive greedy algorithm}\\
	&\ge \bbE \left[ \frac{\gamma_{\ell, k}}{k} \Delta(\pi^* | \Psi) \right] \tag{due to \eqref{eq:ratio-dependent-bound}}\\
	&= \frac{\gamma_{\ell, k}}{k} \left( \favg (\pi_{[i]} @ \pi^* ) - \favg(\pi_{[i]}) \right)\\
	&\ge \frac{\gamma_{\ell, k}}{k} \left( \favg (\pi^* ) - \favg(\pi_{[i]}) \right). \tag{due to adaptive monotonicity and \Cref{lem:monotone}}
	\end{align}
	
	Let $\Delta_i \coloneqq \favg(\pi^*) - \favg(\pi_{[i]})$.
	The above inequality can be rewritten as $\Delta_i - \Delta_{i+1} \ge \gamma_{\ell, k} \Delta_i / k$, which implies $\Delta_{i+1} \le (1 - \gamma_{\ell, k} / k) \Delta_i$.
	By repeatedly using this inequality, we obtain $\Delta_\ell \le (1 - \gamma_{\ell, k} / k)^\ell \Delta_0 \le \exp(- \gamma_{\ell, k} \ell / k) \favg(\pi^*)$.
	Consequently, we have $\favg(\pi) \ge (1 - \exp(- \gamma_{\ell, k} \ell / k)) \favg(\pi^*)$.
\end{proof}

\section{Proofs for Adaptivity Gaps}\label{sec:app-gap}
\begin{proof}[Proof of \Cref{thm:gap}]
	Let $\pi^*_\non$ be an optimal non-adaptive policy and $\pi^*$ be an optimal adaptive policy.
	Since $\pi^*_\non$ is a non-adaptive policy, it selects the same subset for all $\phi$, i.e., $E(\pi^*_\non, \phi) = E(\pi^*_\non, \phi')$ for all $\phi$ and $\phi'$. 
	Let $M \in \argmax \sum_{v \in M \colon |M| \le k} \Delta(v|\emptyset)$ and $\pi^M_\non$ the non-adaptive policy that selects $M$.
	From the optimality of $\pi^*_\non$, we have
	\begin{equation}
		\favg(\pi^*_\non) \ge \favg(\pi^M_\non).
	\end{equation}
	By the definition of the supermodularity ratio, we have
	\begin{equation}
		\Delta(\pi^M_\non | \emptyset) \ge \beta_{\emptyset, k} \sum_{v \in M} \Delta(v | \emptyset).
	\end{equation}
	Note that $\sum_{v \in V} \Pr(v \in E(\pi^*, \Phi)) \le k$ and $\Pr(v \in E(\pi^*, \Phi)) \le 1$ for each $v \in V$.
	Due to the definition of $M$, we have
	\begin{equation}
	\sum_{v \in M} \Delta(v | \emptyset) \ge \sum_{v \in V} \Pr(v \in E(\pi^*, \Phi)) \Delta(v | \emptyset).
	\end{equation}
	From the definition of adaptive submodularity ratio, we have
	\begin{equation}
	\sum_{v \in V} \Pr(v \in E(\pi^*, \Phi)) \Delta(v | \emptyset) \ge \gamma_{\emptyset, k} \Delta (\pi^*| \emptyset).
	\end{equation}
	Combining these inequalities, we have
	\begin{align}
		\favg(\pi^*_\non)
		&\ge \bbE_\Phi[f(\emptyset,\Phi)] + \Delta(\pi^M_\non | \emptyset)\\
		&\ge \beta_{\emptyset, k} \gamma_{\emptyset, k} (\bbE_\Phi[f(\emptyset,\Phi)] + \Delta (\pi^*| \emptyset))\\
		&= \beta_{\emptyset, k} \gamma_{\emptyset, k} \favg(\pi^*).
	\end{align}
\end{proof}

\begin{proof}[Proof of \Cref{cor:nonadaptive-approx}]
	From the approximation ratio, we have
	\begin{equation}
	\favg(\pi_\non) \ge \alpha \favg(\pi^*_\non).
	\end{equation}
	From \Cref{thm:gap}, we have
	\begin{equation}
	\favg(\pi^*_\non) \ge \beta_{\emptyset, k} \gamma_{\emptyset, k} \favg(\pi^*). 
	\end{equation}
	The above two inequalities imply the statement.
\end{proof}

From the following example, we can see that \Cref{thm:gap} is tight, i.e., for any rationals $\beta$ and $\gamma$ in $(0,1]$, there exist $f$ and $p$ such that the equality holds.

\begin{exmp}
Let $V = \{u\} \cup \bigcup_{i = 1}^M V_i$ be the ground set, where $V_i = \{v_i^1,\cdots,v_i^k\}$.
Let $V_0 = \emptyset$.
Let $\calY = \{0, 1, \cdots, M\}$.
We define the probability distribution $p$ such that $\phi(u) = y \in \calY$ with probability $p \in [0, 1/M]$ for each $y \neq 0$ and $\phi(u) = 0$ with probability $1 - pM$.
Other elements always in state $0$, i.e., $\phi(v) = 0$ with probability $1$ for all $v \in V \setminus \{u\}$.
We define the objective function $f$ as
\begin{equation}
	f(S, \phi) =
	\begin{cases}
		1 + a |S \cap V_{\phi(u)}| & (u \in S)\\
		1 + ap (|S| - 1) & (u \not\in S ~ \text{and} ~ |S| \ge 1)\\
		0 & (S = \emptyset),
	\end{cases}
\end{equation}
where $a \in \bbR_{\ge 0}$ is the parameter specified later.
We have $\Delta(v|\emptyset) = 1$ for all $v \in V$.
The supermodularity ratio $\beta_{\emptyset,k}$ of $\bbE[f(\cdot,\Phi)]$ is
\begin{equation}
	\beta_{\emptyset,k} = \frac{1 + (k-1) ap}{k}.
\end{equation}
The adaptive submodularity ratio $\gamma_{\emptyset,k}$ is
\begin{equation}
	\gamma_{\emptyset,k} = \frac{k}{1 + (k-1) apM}.
\end{equation}
The adaptivity gap is
\begin{equation}
	\gap_k(f,p) = \frac{1 + (k-1) ap}{1 + (k-1) apM}.
\end{equation}
For any rationals $\beta \in (0, 1]$ and $\gamma \in (0,1]$, there exist some $k, a, M$ such that $\gamma_{\emptyset,k} = \gamma$ and $\beta_{\emptyset,k} = \beta$.
\end{exmp}

\section{Proof for Adaptive Influence Maximization}\label{sec:app-infmax}
In this section, we provide the full proof for \Cref{thm:bipartite_ratio}. For the readability, we first give a proof for the case of the linear threshold model, which is a special case of the triggering model. After that, we give a proof for the case of the triggering model. 
\subsection{Proof for the Linear Threshold Model}\label{sec:app-infmax-linear}
\begin{proof}[Proof of \Cref{thm:bipartite_ratio} in the case of the linear threshold model]
	Let $V$ be the source vertices, $U$ the sink vertices, 
	and $A \subseteq V \times U$ the directed edges.
	For notational simplicity, assume that $G=(V \cup U, A)$ is a complete bipartite graph, i.e., $A = V \times U$.
	By setting $b(a) = 0$ for all edges $a\in A$ 
	that originally do not exist, we can assume this without loss of generality.
	Fix any $\psi' \subseteq \psi$ and $\pi \in \Pi_k$.
	It suffices to prove
	\begin{equation}\label{eq:influence-goal-inequality}
	\Delta(\pi | \psi') \le \frac{2k}{k+1} \sum_{v \in V} \Pr(v \in E(\pi, \Phi) | \Phi \sim \psi') \Delta(v | \psi').
	\end{equation}
	Let $\Delta_u(\cdot | \psi')$ be the expected marginal gain obtained by activating $u \in U$. 
	Below we explain that the above inequality can be separated for each $u \in U$; i.e., it is enough to prove the above inequality for the case where $w(u)>0$ for just one vertex $u \in U$ and $0$ for the others.  
	The objective function is the linear sum of the one for each $u \in U$: $\Delta(\cdot | \psi') = \sum_{u \in U} \Delta_u(\cdot | \psi')$.
	Therefore, the above inequality is decomposed into the sum of  
	\begin{equation}\label{eq:influence-v-inequality}
	\Delta_u(\pi | \psi') \le \frac{2k}{k+1} \sum_{v \in V} \Pr(v \in E(\pi, \Phi) | \Phi \sim \psi') \Delta_u(v | \psi')
	\end{equation} 
	for each $u \in U$.
	Note that the states of any $(v,u)\in A$ and $(v^\prime, u^\prime)\in A$ are independent of each other if $u \neq u^\prime$.
	Since the feedback about any $u' \in U$ such that $u' \neq u$ is never correlated with the states of edges pointing to $u$, we can regard the feedback about $u'$ as an independent random factor when considering \eqref{eq:influence-v-inequality}. Thus we can see that it is sufficient to consider the case of one sink vertex.
	Note that a randomized policy can be expressed as a linear sum of deterministic policies.
	Therefore, it is enough to consider the case where $\pi$ is a deterministic policy.
	Below we fix $u\in U$ and use $\Delta$ instead of $\Delta_u$ for notational ease. 
	We can assume $w(u) = 1$ without loss of generality.
	If $u$ has been already activated in $\psi'$, both sides of \eqref{eq:influence-v-inequality} are equal to zero; thus it holds trivially. We then consider the case where $u$ is not activated in $\psi'$.

	Let $\psi_v$ be the partial realization just before $v$ is selected in $\pi$.
	If there are multiple partial realizations $\psi$ such that $\pi(\psi) = v$, we can duplicate $v$ and consider them to be different elements.
	We can decompose $\Delta(\pi | \psi')$ as 
	\begin{equation}
	\Delta(\pi | \psi') = \sum_{v \in V} \Pr(v \in E(\pi, \Phi) | \Phi \sim \psi') \Delta(v | \psi' \cup \psi_v).
	\end{equation}
	The inequality that we aim to prove can be written as
	\begin{equation}\label{eq:influence-recursive-inequality-path}
		\sum_{v \in V} \Pr(v \in E(\pi, \Phi) | \Phi \sim \psi') \left\{ \frac{2k}{k+1} \Delta(v | \psi') - \Delta(v | \psi' \cup \psi_v) \right\} \ge 0.
	\end{equation}
	Since $\pi$ is a deterministic policy that observes only states of edges pointing to $u$, 
	there exists a path in policy tree $\pi$ wherein $u$ remains inactive; 
	in \Cref{fig:compensation} such a path is colored in thin gray. 
	Let $P = \{v_1,\cdots,v_m\}\subseteq V$ be the path, 
	where $m\le k$ and policy $\pi$ selects the vertices 
	$v_1,\cdots,v_m$ in this order. 
	We consider proving the above inequality for $P$ and $V\backslash P$ separately. 
	We can easily see that $\Delta(v|\psi' \cup \psi_v) = 0$ holds for all $v \in V \setminus P$ since $u$ is already activated there.
	Therefore, it is enough to prove
	\begin{equation}\label{eq:influence-recursive-inequality}
		\sum_{v \in P} \Pr(v \in E(\pi, \Phi) | \Phi \sim \psi') \left\{ \frac{2k}{k+1} \Delta(v | \psi') - \Delta(v | \psi' \cup \psi_v) \right\} \ge 0.
	\end{equation}

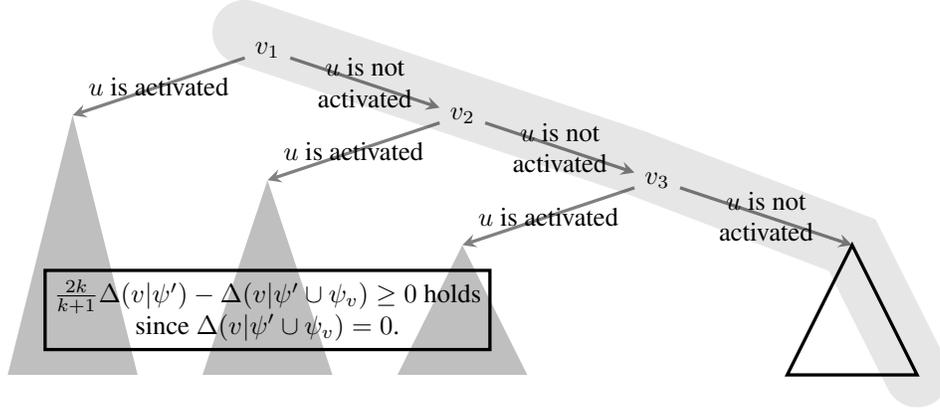
\begin{figure}
	\centering
	\newlength{\width}
	\setlength{\width}{0.4\paperwidth}
	\begin{tikzpicture}[line width=0.005\width]
		\tikzstyle{policynode}=[]
		\tikzstyle{policyedge}=[-{stealth}, gray, line width=0.005\width]
		\tikzstyle{compedge}=[-{Triangle}, line width=0.005\width, decorate, decoration={snake, amplitude=0.002\width, segment length=0.02\width}]
		\tikzstyle{triedge}=[line width=0.005\width]
		\node[policynode] (v1) at (0.0\width, 1.0\width) {$v_1$};
		\node[policynode] (v2) at (0.3\width, 0.9\width) {$v_2$};
		\node[policynode] (v3) at (0.6\width, 0.8\width) {$v_3$};
		\coordinate (c) at (-0.0\width, 0.6\width);
		\coordinate (v4) at (+0.9\width, 0.7\width);
		\coordinate (a1) at (-0.3\width, 0.9\width);
		\coordinate (a2) at (-0.0\width, 0.8\width);
		\coordinate (a3) at (+0.3\width, 0.7\width);
		\coordinate (l1) at (-0.4\width, 0.5\width);
		\coordinate (r1) at (-0.2\width, 0.5\width);
		\coordinate (l2) at (-0.1\width, 0.5\width);
		\coordinate (r2) at (+0.1\width, 0.5\width);
		\coordinate (l3) at (+0.2\width, 0.5\width);
		\coordinate (r3) at (+0.4\width, 0.5\width);
		\coordinate (l4) at (+0.8\width, 0.5\width);
		\coordinate (r4) at (+1.0\width, 0.5\width);
		\coordinate (t1) at (+0.9333\width, 0.6333\width);
		\coordinate (t2) at (+0.9667\width, 0.5667\width);
		\draw[policyedge] (v1) to node[align=center, text=black]{$u$ is activated} (a1);
		\draw[policyedge] (v2) to node[align=center, text=black]{$u$ is activated} (a2);
		\draw[policyedge] (v3) to node[align=center, text=black]{$u$ is activated} (a3);
		\draw[policyedge] (v1) to node[align=center, text=black]{$u$ is not\\activated} (v2);
		\draw[policyedge] (v2) to node[align=center, text=black]{$u$ is not\\activated} (v3);
		\draw[policyedge] (v3) to node[align=center, text=black]{$u$ is not\\activated} (v4);
		\fill[fill=gray!50] (a1) -- (l1) -- (r1) -- cycle;
		\fill[fill=gray!50] (a2) -- (l2) -- (r2) -- cycle;
		\fill[fill=gray!50] (a3) -- (l3) -- (r3) -- cycle;
		\draw[triedge] (v4) -- (l4) -- (r4) -- cycle;
		\node[align=center, draw, rectangle] at (c) {$\frac{2k}{k+1} \Delta(v | \psi') - \Delta(v | \psi' \cup \psi_v) \ge 0$ holds\\since $\Delta(v | \psi' \cup \psi_v) = 0$.};
		\draw [line width=0.1\width, opacity=0.1, line cap=round] (v1.north west) -- (v2.north west) -- (v3.north west) -- (v4.north west) -- (t1) -- (t2) -- (r4);
	\end{tikzpicture}
	\caption{A description of our proof method. We can decompose the policy tree into the path wherein $u$ is not activated and the rest.}\label{fig:compensation}
\end{figure}
	
	Now we calculate the left hand side of this inequality, which we denote by $C$.
	Since $u$ has not been activated yet in $\psi'$, all edges $(s, u)$ are dead for all $s \in \dom(\psi')$. 
	In the linear threshold model, we can define $p_i \coloneqq b({v_i u}) / (1 - \sum_{t \in V \setminus \dom(\psi')} b({tu}))$ to be the posterior probability that edge $(v, u)$ is alive under observations $\psi'$ for each $i = 1,\dots,m$.
	Now we have $\Pr( v_{i} \in E(\pi, \Phi) | \Phi \sim \psi') = \Pr(\Phi \sim \psi' \cup \psi_{v_{i}} | \Phi \sim \psi') = 1 - \sum_{j=1}^{i-1} p_j$.
	In addition, we have $\Delta(v_i | \psi') = p_i$ and $\Delta(v_i | \psi' \cup \psi_{v_i}) = p_i / (1 - \sum_{j = 1}^{i-1} p_j)$, and hence 
	\begin{equation}
		C = \sum_{i = 1}^m \left( 1 - \sum_{j = 1}^{i-1} p_j \right) \left\{ \frac{2k}{k+1} p_i - \frac{p_i}{1 - \sum_{j = 1}^{i-1} p_j} \right\}.
	\end{equation}
	In the case of $m = 1$, we have $C = (k - 1) / (k + 1) p_i \ge 0$.
	For $m \ge 2$, we obtain
	\begin{align}
	C
	&= \sum_{i = 1}^m \frac{2k}{k+1} p_i \left(1 - \sum_{j = 1}^{i-1} {p_j} \right) - \sum_{i = 1}^m p_i \\
	&= \frac{k-1}{k+1}\sum_{i = 1}^m p_i - \frac{2k}{k+1} \sum_{i = 1}^m \left( p_i\sum_{j = 1}^{i-1} {p_j} \right)\\
	&= \frac{k-1}{k+1} \left\{ \sum_{i = 1}^m p_i - \frac{2k}{k-1} \sum_{i = 1}^m \left( p_i\sum_{j = 1}^{i-1} {p_j} \right) \right\}.
	\end{align}
	The right hand side can be bounded from below as
	\begin{align}
		&\frac{k-1}{k+1} \left\{ \sum_{i = 1}^m p_i - \frac{2k}{k-1} \sum_{i = 1}^m \left( p_i\sum_{j = 1}^{i-1} {p_j} \right) \right\}\\
		&= \frac{k-1}{k+1} \left\{ {\bf 1}^\top{\bf p} - \frac{k}{k-1}{\bf p}^\top ({\bf 1}{\bf 1}^\top - \bfI){\bf p} \right\}\\
		&\ge \frac{k-1}{k+1} \left\{ {\bf 1}^\top{\bf p} - \frac{m}{m-1}{\bf p}^\top ({\bf 1}{\bf 1}^\top - \bfI){\bf p} \right\},
	\end{align}
	where $\bfp = (p_1,\dots, p_m)^\top$ and $\bfI \in \bbR^{m \times m}$ is the identity matrix.
	The inequality comes from $2 \le m \le k$ and $\bfp^\top ({\bf 1}{\bf 1}^\top - \bfI) \bfp \ge 0$.
	Since each entry of $\bfp$ represents a probability, we have ${\bf 0}\le{\bf p}\le {\bf 1}$ and $0\le{\bf 1}^\top{\bf p}\le 1$.
	From \Cref{lem:linear-threshold-inequality} proved below, we can see that this is non-negative.
	Therefore, we conclude that \eqref{eq:influence-recursive-inequality} holds. 
\end{proof}

In the above proof, we used the following lemma. 

\begin{lem}\label{lem:linear-threshold-inequality}
	Let $m \ge 2$ and $\bfp \in \bbR^m$ be an arbitrary vector such that ${\bf 0}\le{\bf p}\le {\bf 1}$ and $0\le{\bf 1}^\top{\bf p}\le 1$, then we have
	\begin{equation}
	{\bf 1}^\top{\bf p} - \frac{m}{m-1}{\bf p}^\top ({\bf 1}{\bf 1}^\top - \bfI){\bf p} \ge 0.
	\end{equation}
\end{lem}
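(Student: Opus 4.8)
The plan is to collapse this quadratic inequality to an elementary single-variable statement. First I would introduce the scalar $s \coloneqq {\bf 1}^\top {\bf p} = \sum_{i=1}^m p_i$ and expand the quadratic form. Since ${\bf 1}{\bf 1}^\top - \bfI$ has all-ones off-diagonal and zeros on the diagonal, we get ${\bf p}^\top ({\bf 1}{\bf 1}^\top - \bfI) {\bf p} = ({\bf 1}^\top {\bf p})^2 - \|{\bf p}\|_2^2 = s^2 - \sum_{i=1}^m p_i^2$. The target inequality then reads
\[
s - \frac{m}{m-1}\left(s^2 - \sum_{i=1}^m p_i^2\right) \ge 0.
\]

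The key step is to lower-bound $\sum_i p_i^2$ in terms of $s$. By the Cauchy--Schwarz inequality (equivalently the QM--AM inequality), $s^2 = \bigl(\sum_{i=1}^m p_i\bigr)^2 \le m \sum_{i=1}^m p_i^2$, so $\sum_{i=1}^m p_i^2 \ge s^2/m$, and hence $s^2 - \sum_{i=1}^m p_i^2 \le s^2 \cdot \frac{m-1}{m}$. Substituting this into the displayed expression makes the prefactor $\frac{m}{m-1}$ cancel cleanly against $\frac{m-1}{m}$, so the left-hand side is bounded below by $s - s^2 = s(1-s)$.

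It then remains to note that $s(1-s) \ge 0$ whenever $0 \le s \le 1$, which is precisely the hypothesis $0 \le {\bf 1}^\top {\bf p} \le 1$. This completes the argument.

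I expect no genuine obstacle here; the only conceptual point is recognizing that the coordinatewise bounds ${\bf 0}\le{\bf p}\le{\bf 1}$ are irrelevant to the claim, and that everything reduces to the single scalar constraint $0 \le s \le 1$ combined with the elementary bound $\sum_i p_i^2 \ge \bigl(\sum_i p_i\bigr)^2/m$. As a sanity check, equality holds at ${\bf p} = (s/m)\,{\bf 1}$ with $s \in \{0,1\}$, which also indicates that the constant $\frac{m}{m-1}$ appearing in \Cref{thm:bipartite_ratio} cannot be improved by this route.
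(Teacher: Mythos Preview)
Your proof is correct. Both your argument and the paper's reduce the expression to $s(1-s)$ plus a non-negative remainder, where $s={\bf 1}^\top{\bf p}$, and then invoke $0\le s\le 1$. The difference is purely in how that remainder is exhibited: the paper performs an orthogonal change of basis $\bfp=\bfU\bfq$ with $\bfu_1={\bf 1}/\sqrt{m}$ and writes the expression as $\sqrt{m}q_1(1-\sqrt{m}q_1)+\frac{m}{m-1}(q_2^2+\dots+q_m^2)$, whereas you expand directly and apply Cauchy--Schwarz $\sum_i p_i^2\ge s^2/m$. These are the same fact (the Cauchy--Schwarz bound is exactly the non-negativity of $q_2^2+\dots+q_m^2$), but your presentation is more elementary since it avoids introducing the auxiliary orthonormal matrix. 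Your observation that the coordinatewise bounds ${\bf 0}\le{\bf p}\le{\bf 1}$ are not needed is also correct; the paper's proof likewise uses only $0\le{\bf 1}^\top{\bf p}\le 1$.
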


\begin{proof}
	Let $\bfU = (\bfu_1 \cdots \bfu_m)\in\bbR^{m\times m}$ be an orthonormal matrix whose first column is 
	defined as ${\bf u}_1={\bf 1}/\sqrt{m}$; 
	we can write ${\bf p}={\bf U}{\bf q}$ 
	with some vector ${\bf q}=(q_1,\dots,q_m)^\top$. 
	Since $\bfu_1^\top \bfu_i = 0$ for all $i \neq 1$, we obtain $\bfU^\top {\bf 1} = (\sqrt{m}, 0,\dots, 0)^\top$.
	Hence the left hand side of the target inequality can be 
	rewritten as 
	\begin{align}
	{\bf 1}^\top{\bf p} - \frac{m}{m-1}{\bf p}^\top ({\bf 1}{\bf 1}^\top - \bfI){\bf p}
	&= {\bf 1}^\top \bfU \bfq - \frac{m}{m-1}{\bfq}^\top \bfU^\top ({\bf 1}{\bf 1}^\top - \bfI) \bfU \bfq \\
	&= \frac{m}{m-1}(\|{\bf q}\|_2^2 - mq_1^2) + \sqrt{m}q_1\\
	&= \sqrt{m}q_1 (1-\sqrt{m}q_1) + \frac{m}{m-1}(q_2^2+\dots+q_m^2).
	\end{align}
	Since we have 
	$0\le{\bf 1}^\top{\bf p} = \sqrt{m}q_1 \le 1$, 
	this value is non-negative. 
\end{proof}

\subsection{Proof for the Triggering Model}\label{sec:app-infmax-triggering}
\begin{proof}[Proof of \Cref{thm:bipartite_ratio}]
	The outline of the proof for the triggering model is the same as the one for the linear threshold model. In the case of the triggering model, we can write $C$ as follows: 
	\begin{equation}
		C = \sum_{i=1}^m \Pr \left( \bigwedge_{j=1}^{i-1} X_j = 0 \middle| \Phi \sim \psi' \right) \left\{ \frac{2k}{k+1} \Pr\left(X_i=1 | \Phi \sim \psi' \right) - \Pr\left(X_i=1 \middle| \Phi \sim \psi', \bigwedge_{j=1}^{i-1} X_j = 0 \right) \right\}, 
	\end{equation}
	where $X_i$ is an event in which edge $(v_{i}, u)$ is alive. Different from the linear threshold model, we cannot express $C$ explicitly with parameters.
	Hence we define 
	\begin{align}
		p_i &\coloneqq \Pr(X_i=1 | \Phi \sim \psi') \quad \text{for } i = 1, \cdots, m, \\
	a_i &\coloneqq \textstyle {\Pr\left(X_i=1\wedge\left\{\bigwedge_{j=1}^{i-1}X_i=0\right\} \middle| \Phi \sim \psi' \right)} \quad \text{for } i = 1, \cdots, m, \\
	\text{and} \quad 
		h_i &\coloneqq \textstyle \Pr\left(\left\{\bigwedge_{j=1}^{i}X_i=0\right\} \middle| \Phi \sim \psi' \right) \quad \text{for } i = 0,\cdots,m.
	\end{align}
	With these definitions, we can calculate $C$ as
	\begin{align}
	C
	&= \sum_{i=1}^m h_{i-1} \left( \frac{2k}{k+1} p_i - \frac{a_i}{h_{i-1}} \right)\\
	&= \frac{2k}{k+1} \sum_{i=1}^m p_i h_{i-1} - \sum_{i=1}^m a_i.
	\end{align}
	Our goal is to prove that
	\begin{equation}
	\frac{2k}{k+1} \sum_{i=1}^m p_i h_{i-1} - \sum_{i=1}^m a_i \ge 0.
	\end{equation}
	Note that we have 
	\begin{align}
	0\le a_i \le p_i \le 1
	& &
	\text{and}
	& &
	0\le h_i \le 1
	& &
	\text{for $i=1,\dots,m$}, 
	\end{align}
	where $a_1=p_1$ and $h_0=1$. 
	Therefore, if $m = 1$, we have
	\begin{align}
		\frac{2k}{k+1} \sum_{i=1}^m p_i h_{i-1} - \sum_{i=1}^m a_i
		&= \frac{2k}{k+1} p_1 h_0 - a_1\\
		&= \frac{k-1}{k+1} p_1\\
		&\ge 0. 
	\end{align}
	Furthermore, 
	it holds that 
	\[
	\textstyle
	h_i + \sum_{j=1}^i a_j = 
	\Pr\left(\left\{\bigwedge_{j=1}^{i}X_i=0\right\}\right)
	+
	\sum_{j=1}^i
	{\Pr\left(X_i=1\wedge\left\{\bigwedge_{j=1}^{i-1}X_i=0\right\}\right)}
	=1
	\]
	for $i=0,\dots,m$, 
	where $\sum_{j=1}^0a_j=0$.  
	By combining this equality with $0\le h_i \le 1$, 
	we obtain 
	\[
	0 \le \sum_{j=1}^i a_j \le 1
	\]
	for $i=0,\dots,m$. 
	With these inequalities, the LHS of the target inequality can be lower-bounded as  
	\begin{align}
	\frac{2k}{k+1} \sum_{i=1}^m p_i h_{i-1} - \sum_{i=1}^m a_i
	&=
	\frac{2k}{k+1} \sum_{i=1}^m p_i
	\left( 
	1-\sum_{j=1}^{i-1} a_j
	\right)  - \sum_{i=1}^m a_i \\
	&\ge
	\frac{2k}{k+1} \sum_{i=1}^m a_i
	\left( 
	1-\sum_{j=1}^{i-1} a_j
	\right)  - \sum_{i=1}^m a_i \\
	&=
	\frac{k-1}{k+1} \sum_{i=1}^m a_i 
	-
	\frac{2k}{k+1} \sum_{i>j} a_ia_j \\
	&=
	\frac{k-1}{k+1} \left( {\bf 1}^\top\bfa 
	- 
	\frac{k}{k-1} \bfa^\top ({\bf 1}{\bf 1}^\top - \bfI) \bfa\right)\\
	&\ge
	\frac{k-1}{k+1} \left( {\bf 1}^\top\bfa 
	- 
	\frac{m}{m-1} \bfa^\top ({\bf 1}{\bf 1}^\top - \bfI) \bfa\right),
	\end{align}
	which is non-negative from \Cref{lem:linear-threshold-inequality};
	this completes the proof as with the case of the linear threshold model. 
\end{proof}

\subsection{Example for the Case of General Graphs}\label{sec:app-infmax-general}
In this subsection, we provide a problem instance of a general graph in which the adaptive submodularity ratio can be very small. 

Before that, we briefly describe the problem setting of adaptive influence maximization in general graphs.
Let $G = (V', A)$ be a general directed graph and $V \subseteq V'$ be a set of vertices that can be selected.
At each step, the algorithm selects one vertex $v \in V$, then the influence spreads from $v$ according to some stochastic diffusion process such as the independent cascade model or the linear threshold model.
After that, the algorithm observes the diffusion from this vertex $v$ under some feedback model.
This problem includes the bipartite influence maximization as a special case where $G = (V \cup U, A)$ is a directed bipartite graph with $A \subseteq V \times U$ and $w(v) = 0$ for all $v \in V$.

There are two standard feedback models, both of which are proposed by \citet{GK11}.
Note that these two feedback models are equivalent in bipartite graphs.
In the first feedback model called the \textit{myopic feedback model}, the algorithm observes the states of all edges outgoing from $v$.
\citet{GK11} proved that the adaptive submodularity does not hold in this case by giving a simple example.
This analysis can be applied to both the independent cascade and linear threshold models.
With this example instance, we can readily see that the adaptive submodularity ratio can be very small under the myopic feedback model. 
These facts imply that the myopic feedback model is typically too harsh to deal with. 

In the second feedback model called the \textit{full-adoption feedback model}, the algorithm observes the states of all edges outgoing from any vertex $u \in R(v)$ when selecting $v$, 
where $R(v)$ is the set of all vertices reachable from $v$ only through live edges.
\citet{GK11} showed that, 
even if graphs are general (non-bipartite), 
the objective function satisfies adaptive submodularity 
under 
the independent cascade model with the full-adaption feedback. 

Below we show that, 
even under the linear threshold model with the full-adoption feedback, 
the adaptive submodularity ratio can be arbitrarily small 
if the graph is non-bipartite. 
This fact implies that the assumption of bipartiteness,  
which we imposed to obtain the bound on the adaptive submodularity ratio, is almost inevitable. 

\begin{exmp}
    Let $G$ be a directed graph with vertices $V = \{v_1,\dots,v_\ell\} \cup \{u_0,u_1,\dots,u_\ell\}$ 
    and directed edges $A = \{(u_{i-1}, u_i) \mid i = 1, \dots, \ell\} \cup \{(v_i, u_i) \mid i = 1, \dots, \ell\}$.
    Let $w$ be the vertex weight such that $w(v) = 1$ for all $v \in V$.
    We consider the following linear threshold model: 
    for each $i \in [\ell]$, only one of the two edges, $(v_i, u_i)$ and $(u_{i-1}, u_i)$, entering $u_i$ is alive with probability $\epsilon$ and $1-\epsilon$, respectively.

    Let $\pi$ be a policy defined as follows. $\pi$ first selects $u_0$. 
    Then the realized states of some edges are revealed under the full-adoption feedback model and we can observe which vertices are activated. 
    If $u_\ell$ is activated, $\pi$ stops. 
    Otherwise, there exists some $i \in [\ell]$ such that 
    $u_{i-1}$ is activated but $u_i$ is not. 
    Then $\pi$ proceeds to select $v_i$.
    Repeat this procedure until $u_\ell$ is activated.
	The graph and policy are illustrated in \Cref{fig:infmax_instance2}.

	First we consider the probability $\Pr(v_i \in E(\pi, \Phi))$ for each $i \in [\ell]$. 
    We can see that $\pi$ selects $v_i$ if and only if the edge $(u_{i-1}, u_i)$ is dead, which yields $\Pr(v_i \in E(\pi, \Phi)) = \epsilon$. 
    We can easily confirm that $\pi$ finally activates all $u_0,\dots, u_\ell$ for every realization and each $v_i$ is selected with probability $\epsilon$, therefore $\Delta(\pi|\emptyset) = \ell + 1 + \epsilon \ell$. 
    On the other hand, the numerator of the definition of the adaptive submodularity ratio can be calculated as follows.
	The expected marginal gain of $v_i$ is
    \begin{align}
        \Delta(v_i|\emptyset) = 1 + \sum_{j = i}^{\ell} \epsilon (1 - \epsilon)^{j-i} 
        = 2 - (1 - \epsilon)^{\ell-i+1}.
    \end{align}
Similarly, we have $\Delta(u_0|\emptyset) = \frac{1}{\epsilon} \{ 1 - (1 - \epsilon)^{\ell+1} \}$. Finally, we can compute the adaptive submodularity ratio as
\begin{align}
    \gamma_{\emptyset, \ell}
    &\le \frac{\sum_{v \in V} \Pr(v \in E(\pi, \Phi) ) \Delta(v | \emptyset) }{\Delta(\pi | \emptyset)}\\
    &= \frac{\frac{1}{\epsilon} \{ 1 - (1 - \epsilon)^{\ell+1} \} + \epsilon \sum_{i=1}^\ell (2 - (1 - \epsilon)^{\ell-i+1})}{\ell + \epsilon \ell + 1}\\
    &\le \frac{\frac{1}{\epsilon} + 2 \epsilon \ell}{\ell + \epsilon \ell + 1}
\end{align}
By setting $\epsilon = 1 / \sqrt{\ell}$ and taking $\ell \to \infty$, we can see $\gamma_{\emptyset, \ell} \to 0$. 
To conclude, the adaptive submodularity ratio can become arbitrarily small if the graph is non-bipartite.
\end{exmp}

\begin{figure}[t]
\centering
\begin{tabular}{cc}
	\begin{minipage}{0.35\hsize}
	\centering
	\begin{tikzpicture}
		\newlength{\basewidth}
		\setlength{\basewidth}{0.4\paperwidth}
		\tikzstyle{vertex}=[circle, fill, inner sep=0.01\basewidth]
		\tikzstyle{arc}=[-{stealth}, line width=0.005\basewidth]
		\node[vertex, label=below:$u_0$]     (u0) at (0.0\basewidth, 0.0\basewidth) {};
		\node[vertex, label=below:$u_1$]     (u1) at (0.1\basewidth, 0.0\basewidth) {};
		\node[vertex, label=below:$u_2$]     (u2) at (0.2\basewidth, 0.0\basewidth) {};
		\node[vertex, label=below:$u_{\ell-1}$] (u3) at (0.5\basewidth, 0.0\basewidth) {};
		\node[vertex, label=below:$u_{\ell}$]   (u4) at (0.6\basewidth, 0.0\basewidth) {};
		\node[vertex, label=above:$v_1$]     (v1) at (0.1\basewidth, 0.1\basewidth) {};
		\node[vertex, label=above:$v_2$]     (v2) at (0.2\basewidth, 0.1\basewidth) {};
		\node[vertex, label=above:$v_{\ell-1}$] (v3) at (0.5\basewidth, 0.1\basewidth) {};
		\node[vertex, label=above:$v_\ell$]     (v4) at (0.6\basewidth, 0.1\basewidth) {};
		\coordinate (ul) at (0.3\basewidth, 0.0\basewidth) {};
		\coordinate (ur) at (0.4\basewidth, 0.0\basewidth) {};
		\draw[arc] (v1) -- (u1);
		\draw[arc] (v2) -- (u2);
		\draw[arc] (v3) -- (u3);
		\draw[arc] (v4) -- (u4);
		\draw[arc] (u0) -- (u1);
		\draw[arc] (u1) -- (u2);
		\draw[arc] (u2) -- (ul);
		\draw[arc] (ur) -- (u3);
		\draw[arc] (u3) -- (u4);
		\draw[dotted, line width=0.005\basewidth] (ul) -- (ur);
	\end{tikzpicture}\\
		\small (a) graph $G = (V, A)$
	\end{minipage}

	\begin{minipage}{0.65\hsize}
	\centering
	\begin{tikzpicture}
		\setlength{\basewidth}{0.4\paperwidth}
		\tikzstyle{policynode}=[]
		\tikzstyle{policyedge}=[-{stealth}, line width=0.005\basewidth]
		\tikzstyle{vertex}=[circle, fill, inner sep=0.01\basewidth]
		\tikzstyle{arc}=[-{stealth}, line width=0.005\basewidth]
		\node[policynode]     (u0) at (0.1\basewidth, 0.0\basewidth) {$u_0$};
		\node[policynode]     (v1) at (0.3\basewidth, 0.0\basewidth) {$v_1$};
		\node[policynode]     (v2) at (0.6\basewidth, 0.0\basewidth) {$v_2$};
		\node[policynode]     (v3) at (0.9\basewidth, 0.0\basewidth) {$v_{\ell-1}$};
		\node[policynode]     (v4) at (1.2\basewidth, 0.0\basewidth) {$v_{\ell}$};
		\coordinate (f1) at (0.3\basewidth, 0.2\basewidth);
		\coordinate (f2) at (0.6\basewidth, 0.2\basewidth);
		\coordinate (f3) at (0.9\basewidth, 0.2\basewidth);
		\coordinate (f4) at (1.2\basewidth, 0.2\basewidth);
		\coordinate (t1) at (0.68\basewidth, 0.20\basewidth);
		\coordinate (t2) at (0.68\basewidth, 0.15\basewidth);
		\coordinate (t3) at (0.68\basewidth, 0.05\basewidth);
		\coordinate (t4) at (0.68\basewidth, 0.00\basewidth);
		\coordinate (s1) at (0.82\basewidth, 0.20\basewidth);
		\coordinate (s2) at (0.82\basewidth, 0.15\basewidth);
		\coordinate (s3) at (0.82\basewidth, 0.05\basewidth);
		\coordinate (s4) at (0.82\basewidth, 0.00\basewidth);
		\draw[policyedge] (u0) to node[align=center, below]{\tiny $(u_0,u_1)$\\[-1ex]\tiny is dead} (v1);
		\draw[policyedge] (v1) to node[align=center, below]{\tiny $(u_1,u_2)$\\[-1ex]\tiny is dead} (v2);
		\draw[policyedge] (v3) to node[align=center, below]{\tiny $(u_{\ell-1},u_\ell)$\\[-1ex]\tiny is dead} (v4);
		\draw[policyedge] (u0) to node[align=center, above]{\tiny $(u_0,u_1)$\\[-1ex]\tiny is alive} (f1);
		\draw[policyedge] (v1) to node[align=center, below, xshift=-0.13\basewidth]{\tiny $(u_1,u_2)$\\[-1ex]\tiny is alive} (f2);
		\draw[policyedge] (v3) to node[align=center, below, xshift=-0.13\basewidth]{\tiny $(u_{\ell-1},u_\ell)$\\[-1ex]\tiny is alive} (f4);
		\draw[policyedge] (f1) to node[align=center, above]{\tiny $(u_1,u_2)$\\[-1ex]\tiny is dead} (v2);
		\draw[policyedge] (f3) to node[align=center, above]{\tiny $(u_{\ell-1},u_\ell)$\\[-1ex]\tiny is dead} (v4);
		\draw[policyedge] (f1) to node[align=center, above]{\tiny $(u_1,u_2)$\\[-1ex]\tiny is alive} (f2);
		\draw[policyedge] (f3) to node[align=center, above]{\tiny $(u_{\ell-1},u_\ell)$\\[-1ex]\tiny is alive} (f4);
		\draw[dotted, line width=0.005\basewidth] (0.7\basewidth, 0.1\basewidth) -- (0.8\basewidth, 0.1\basewidth);

		\draw[line width=0.005\basewidth] (f2) -- (t1);
		\draw[line width=0.005\basewidth] (f2) -- (t2);
		\draw[line width=0.005\basewidth] (v2) -- (t3);
		\draw[line width=0.005\basewidth] (v2) -- (t4);
		\draw[policyedge] (s1) -- (f3);
		\draw[policyedge] (s2) -- (f3);
		\draw[policyedge] (s3) -- (v3);
		\draw[policyedge] (s4) -- (v3);
	\end{tikzpicture}\\
		\small (b) policy $\pi$
	\end{minipage}
\end{tabular}
	\caption{An instance with a non-bipartite graph such that the adaptive submodularity ratio can be arbitrarily small. Since the space is limited, nodes of $\pi$ that have the same subtree are indicated by a single node.}\label{fig:infmax_instance2}
\end{figure}
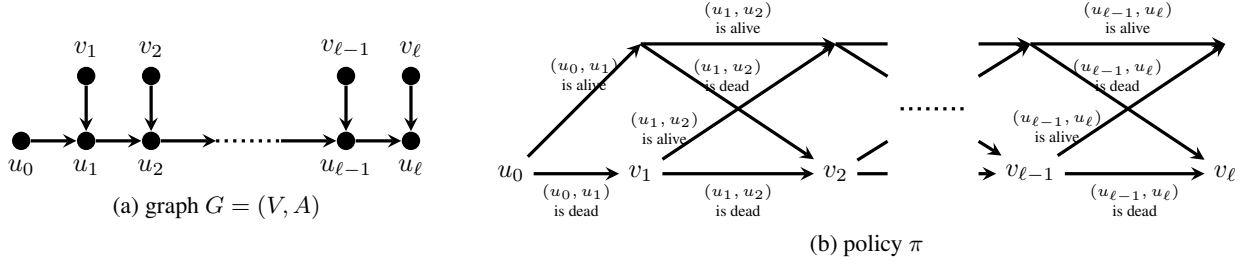

\section{Proof for Adaptive Feature Selection}\label{sec:app-feature-ratio}
\begin{proof}[Proof of \Cref{thm:stochastic-ratio}]
	Let $\psi$ be any partial realization and $\pi \in \Pi_k$ be any policy of height at most $k$.
	Fix an arbitrary subset $\psi' \subseteq \psi$.
	Note that 
	we have $f(\dom(\psi),\phi)=f(\dom(\psi),\phi')$ 
	for any $\phi,\phi'\supseteq\psi$ 
	due to the assumption that $f(S, \phi)$ depends only on $(\phi(v))_{v \in S}$; 
	considering this, 
	we abuse the notation and define 
	$f(\psi) \coloneqq f(\dom(\psi), \phi)$ for any $\phi\supseteq\psi$. 
	Let $\psi_v$ be the partial realization just before $v$ is selected in $\pi$.
	If there are multiple partial realizations $\psi$ such that $\pi(\psi) = v$, we can duplicate $v$ and consider them to be different elements.
	Now we can transform the numerator of the adaptive submodularity ratio as
	\begin{align}
	&\sum_{v \in V} \Pr(v \in E(\pi, \Phi) | \Phi \sim \psi') \Delta(v|\psi')\\
	&=\sum_{v \in V} \Pr(v \in E(\pi, \Phi) | \Phi \sim \psi') \bbE \left[ f(\psi' \cup \{(v, \Phi(v))\}) - f(\psi') \middle| \Phi \sim \psi' \right] \\
	&=\sum_{v \in V} \Pr(v \in E(\pi, \Phi) | \Phi \sim \psi') \sum_{y \in \calY} \Pr(\Phi(v) = y | \Phi \sim \psi') \Bigl\{ f(\psi' \cup \{(v, y)\}) - f(\psi') \Bigr\} \\
	&=\sum_{v \in V} \Pr(v \in E(\pi, \Phi) | \Phi \sim \psi') \sum_{y \in \calY} \Pr(\Phi(v) = y | \Phi \sim \psi' \cup \psi_v) \Bigl\{ f(\psi' \cup \{(v, y)\}) - f(\psi') \Bigr\} \tag{due to the independence of $\phi(v)$ from $(\phi(u))_{u \in \dom(\psi_v)}$} \\
	&=\sum_{v \in V} \Pr(v \in E(\pi, \Phi) | \Phi \sim \psi') \bbE \left[ f(\psi' \cup \{(v, \Phi(v))\}) - f(\psi') \middle| \Phi \sim \psi' \cup \psi_v \right] \\
	&=\sum_{v \in V} \Pr(\Phi \sim \psi' \cup \psi_v | \Phi \sim \psi') \bbE \left[ f(\dom(\psi') \cup \{v\}, \Phi\}) - f(\dom(\psi'), \Phi) \middle| \Phi \sim \psi' \cup \psi_v \right] \\
	&= \bbE \left[ \sum_{v \in E(\pi, \Phi)} \Bigl\{ f(\dom(\psi') \cup \{v\}, \Phi) - f(\dom(\psi'), \Phi) \Bigr\} \middle| \Phi \sim \psi' \right].
	\end{align}
	
	From the above equality, we get 
	\begin{align}
	&\min_{\phi \sim \psi} \gamma_{\dom(\psi), k}^\phi \Delta(\pi|\psi')\\
	&= \min_{\phi \sim \psi} \gamma_{\dom(\psi), k}^\phi \bbE \left[ f(\dom(\psi') \cup E(\pi, \Phi), \Phi) - f(\dom(\psi'), \Phi) \middle| \Phi \sim \psi' \right]\\
	&\le \bbE \left[ \gamma_{\dom(\psi), k}^\Phi \Bigl\{ f(\dom(\psi') \cup E(\pi, \Phi), \Phi) - f(\dom(\psi'), \Phi) \Bigr\} \middle| \Phi \sim \psi' \right]\\
	&\le \bbE \left[ \sum_{v \in E(\pi, \Phi)} \Bigl\{ f(\dom(\psi') \cup \{v\}, \Phi) - f(\dom(\psi'), \Phi) \Bigr\} \middle| \Phi \sim \psi' \right] \tag{From the definition of submodularity ratio}\\
	&= \sum_{v \in V} \Pr(v \in E(\pi, \phi) | \Phi \sim \psi') \Delta(v|\psi').
	\end{align}
	This inequality holds for any $\psi$ and $\pi \in \Pi_k$. To conclude, we obtain $\gamma_{\psi, k} \ge \min_{\phi \sim \psi} \gamma^\phi_{\dom(\psi), k}$.
\end{proof}

To prove \Cref{cor:stochastic}, we use the following bound on the submodularity ratio provided by \citet{Das2011}.
\begin{thm}[Adopted from {\citep[Lemma 2.4]{Das2011}}]\label{thm:daskempe}
	Assume each column of $\bfA(\phi)$ is normalized. Then
	\begin{equation}
	\gamma_{U, k} \ge \min_{S \subseteq V \colon |S| \le k + |U|} \lambda_{\min}(\bfA(\phi)_S^\top \bfA(\phi)_S). 
	\end{equation}
\end{thm}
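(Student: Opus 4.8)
The plan is to reduce the claim to a single Loewner (matrix) inequality about residualized feature vectors, following the projection-based analysis of \citet{Das2011}. Fix a realization $\phi$ and write $\bfA=\bfA(\phi)$. It suffices to fix an arbitrary $L\subseteq U$ and $S$ with $|S|\le k$ and prove $\sum_{v\in S} f(v\mid L)\ge \lambda_{\min}(\bfA_{L\cup S}^\top \bfA_{L\cup S})\,f(S\mid L)$, since taking the minimum over $L$ and $S$, together with $|L\cup S|\le |U|+k$, then yields the stated lower bound on $\gamma_{U,k}$. First I would rewrite the objective through orthogonal projections: letting $P_X$ denote the projection onto the column span of $\bfA_X$, one checks $f(X)=\|P_X\bfb\|_2^2$, so $f(S\mid L)=\|P_{L\cup S}\bfb\|_2^2-\|P_L\bfb\|_2^2$ equals the squared length of the projection of the residual $\mathbf{r}=\bfb-P_L\bfb$ onto the span of the residualized features $\mathbf{t}_v=\bfa_v-P_L\bfa_v$ for $v\in S$.

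Collecting these residual features into a matrix $\mathbf{T}=(\mathbf{t}_v)_{v\in S}$ and setting $\mathbf{z}=\mathbf{T}^\top\mathbf{r}$, the standard projection formula gives $f(S\mid L)=\mathbf{z}^\top(\mathbf{T}^\top\mathbf{T})^{-1}\mathbf{z}$, while each single-element gain satisfies $f(v\mid L)=(\mathbf{t}_v^\top\mathbf{r})^2/\|\mathbf{t}_v\|_2^2$, so $\sum_{v\in S}f(v\mid L)=\mathbf{z}^\top \mathbf{D}^{-1}\mathbf{z}$ with $\mathbf{D}=\mathrm{diag}(\|\mathbf{t}_v\|_2^2)$. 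Thus the target reduces to the Loewner inequality $\mathbf{D}^{-1}\succeq \lambda_{\min}(\bfA_{L\cup S}^\top\bfA_{L\cup S})\,(\mathbf{T}^\top\mathbf{T})^{-1}$, which (inverting, so that the order reverses) is equivalent to $\mathbf{T}^\top\mathbf{T}\succeq \lambda_{\min}(\bfA_{L\cup S}^\top\bfA_{L\cup S})\,\mathbf{D}$. Proving this stronger uniform inequality is what licenses the scalar inequality for the particular $\mathbf{z}$ at hand.

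To establish it I would use two facts. Since each column of $\bfA$ is normalized, $\|\mathbf{t}_v\|_2\le\|\bfa_v\|_2=1$, hence $\mathbf{D}\preceq \bfI$; it therefore suffices to show $\mathbf{T}^\top\mathbf{T}\succeq \lambda_{\min}(\bfA_{L\cup S}^\top\bfA_{L\cup S})\,\bfI$. The crucial observation is that $\mathbf{T}^\top\mathbf{T}$ is exactly the Schur complement of the $L$-block in the Gram matrix $\bfA_{L\cup S}^\top\bfA_{L\cup S}$, and the key lemma is the Schur-complement eigenvalue bound $\lambda_{\min}(\mathbf{T}^\top\mathbf{T})\ge \lambda_{\min}(\bfA_{L\cup S}^\top\bfA_{L\cup S})$. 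I would prove this variationally, by writing the Schur-complement quadratic form $\mathbf{y}^\top(\mathbf{T}^\top\mathbf{T})\mathbf{y}$ as the partial minimization of the full quadratic form over the $L$-coordinates and bounding it below by $\lambda_{\min}(\bfA_{L\cup S}^\top\bfA_{L\cup S})\|\mathbf{y}\|_2^2$, avoiding any explicit block inversion.

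The main obstacle is the bookkeeping around the Loewner order: one must track that inverting a positive-definite inequality reverses its direction, and confirm that passing from the scalar bound to the matrix bound is legitimate (handled by proving the uniform inequality). The one genuinely substantive step is the clean verification of the Schur-complement eigenvalue bound via the variational characterization; the rest is the projection algebra recorded above, and the normalization hypothesis is precisely what lets me replace $\mathbf{D}$ by $\bfI$. A minor technical point to dispatch is the degenerate case $\mathbf{t}_v=\mathbf{0}$ (equivalently $\bfa_v\in\mathrm{span}(L)$), where the convention $0/0=1$ and the positive-definiteness implied by $\lambda_{\min}>0$ make the formulas well defined.
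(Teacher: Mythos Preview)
Your proposal is correct and is essentially the standard argument of \citet{Das2011}; the paper itself does not reprove this statement but merely cites it, so there is no alternative proof in the paper to compare against. Your reduction to the Loewner inequality $\mathbf{T}^\top\mathbf{T}\succeq\lambda_{\min}(\bfA_{L\cup S}^\top\bfA_{L\cup S})\,\mathbf{D}$, together with the Schur-complement eigenvalue bound and the normalization step $\mathbf{D}\preceq\bfI$, is exactly the Das--Kempe route.
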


\begin{proof}[Proof of \Cref{cor:stochastic}]
	From the definition,
	we can see $f(S, \phi)$ depends only on selected columns $(\phi(v))_{v \in S}$ and not on the other columns $(\phi(v))_{v \in V \setminus S}$.
	
	We can show
	\begin{align}
	f(S, \phi) 
	&= \| \bfb - \bfzero \|^2 - \min_{\supp(\bfw) \subseteq S} \| \bfb - \bfA(\phi) \bfw \|^2
	\\
	&\le \| \bfb - \bfzero \|^2 - \min_{\supp(\bfw) \subseteq T} \| \bfb - \bfA(\phi) \bfw \|^2 = f(T, \phi)
	\end{align}
	for all $S \subseteq T$.
	From this property, called strong adaptive monotonicity, for any partial realization $\psi$ and $v \in V \setminus \dom(\psi)$, we obtain
	\begin{align}
	\Delta(v | \psi) 
	={}& \bbE[f(\dom(\psi) \cup \{v\}, \Phi) - f(\dom(\psi), \Phi) | \Phi \sim \psi] \\
	&\ge 0,
	\end{align}
	from which the adaptive monotonicity of $f$ w.r.t.\ $p$ follows.
	
	By applying \Cref{thm:stochastic-ratio}, we obtain
	\begin{equation}
	\gamma_{\psi, k} \ge \min_{\phi \sim \psi} \gamma^\phi_{\dom(\psi), k}, 
	\end{equation}
	where $\gamma^\phi_{X, k}$ is the submodularity ratio of $f( \cdot, \phi)$ for realization $\phi$. From \Cref{thm:daskempe}, we obtain the following lower bound: 
	\begin{equation}
	\gamma^\phi_{\dom(\psi), k} \ge \min_{S \subseteq V \colon |S| \le k + |\psi|} \lambda_{\min}(\bfA(\phi)_S^\top \bfA(\phi)_S).
	\end{equation}
	Finally, we have
	\begin{align}
	\gamma_{\ell, k}
	&= \min_{\psi \colon |\psi| \le \ell} \gamma_{\psi, k}\\
	&\ge \min_{\psi \colon |\psi| \le \ell} \min_{\phi} \gamma^\phi_{\dom(\psi), k}\\
	&\ge \min_{\psi \colon |\psi| \le \ell} \min_{\phi} \min_{S \subseteq V \colon |S| \le k + |\psi|} \lambda_{\min}(\bfA(\phi)_S^\top \bfA(\phi)_S)\\
	&= \min_{\phi} \min_{S \subseteq V \colon |S| \le k + \ell} \lambda_{\min}(\bfA(\phi)_S^\top \bfA(\phi)_S).
	\end{align}
\end{proof}

\subsection{Proof for the Adaptivity Gap}\label{sec:app-feature-gap}
\begin{proof}[Proof of \Cref{prop:feature-gap}]
We can readily confirm that the objective function can be rewritten as follows:  
\begin{equation}
f(S, \phi) = (\bfA(\phi)_S^\top \bfb)^\top (\bfA(\phi)_S^\top \bfA(\phi)_S)^{+} (\bfA(\phi)_S^\top \bfb).
\end{equation}
For any $S \subseteq V$ such that $|S| \le k$, we have
\begin{align}
	\bbE [f(S, \Phi)]
	&= \bbE \left[ (\bfA(\Phi)_S^\top \bfb)^\top (\bfA(\Phi)_S^\top \bfA(\Phi)_S)^+ (\bfA(\Phi)_S^\top \bfb) \right]\\
	&\ge \bbE \left[ \lambda_{\min}((\bfA(\Phi)_S^\top \bfA(\Phi)_S)^+) \|\bfA(\Phi)_S^\top \bfb\|_2^2 \right]\\
	&\ge \bbE \left[ \frac{ \|\bfA(\Phi)_S^\top \bfb\|_2^2}{ \max_{\phi} \max_{T \subseteq V \colon |T| \le k} \lambda_{\max}(\bfA(\phi)_T^\top \bfA(\phi)_T)} \right]\\
	&= \frac{\bbE \left[  \|\bfA(\Phi)_S^\top \bfb\|_2^2 \right]}{ \max_{\phi} \max_{T \subseteq V \colon |T| \le k} \lambda_{\max}(\bfA(\phi)_T^\top \bfA(\phi)_T)}\\
	&= \frac{\sum_{v \in S} \bbE \left[  (\bfA(\Phi)_v^\top \bfb)^2 \right]}{ \max_{\phi} \max_{T \subseteq V \colon |T| \le k} \lambda_{\max}(\bfA(\phi)_T^\top \bfA(\phi)_T)}\\
	&= \frac{\sum_{v \in S} \bbE[ f(\{v\}, \Phi) ]}{ \max_{\phi} \max_{T \subseteq V \colon |T| \le k} \lambda_{\max}(\bfA(\phi)_T^\top \bfA(\phi)_T)}.
\end{align}
From this inequality, we can bound the supermodularity ratio $\beta_{\emptyset, k}$ of $\bbE_\Phi[f(\cdot, \Phi)]$ as
\begin{equation}
	\beta_{\emptyset, k} \ge \frac{1}{ \max_{\phi} \max_{S \subseteq V \colon |S| \le k} \lambda_{\max}(\bfA(\phi)_S^\top \bfA(\phi)_S)}.
\end{equation}
Plugging it and the inequality of \Cref{cor:stochastic} into \Cref{thm:gap}, we obtain
\begin{equation}
	\gap_k \ge \frac{\min_{\phi} \min_{S \subseteq V \colon |S| \le k} \lambda_{\min}(\bfA(\phi)_S^\top \bfA(\phi)_S)}{ \max_{\phi} \max_{S \subseteq V \colon |S| \le k} \lambda_{\max}(\bfA(\phi)_S^\top \bfA(\phi)_S)}.
\end{equation}
\end{proof}

\section{Counterexample to the Statement of \citep{Kusner14}}\label{sec:counter-kusner}
\citet{Kusner14} has defined \textit{approximate adaptive submodularity} as follows:
\begin{defn}[{Adopted from \citep[Definition 2]{Kusner14}}]
	A set function $f \colon 2^V \times \mathcal{Y}^V \to \mathbb{R}$ and a distribution $p$ on $\mathcal{Y}^V$ is \textit{approximately adaptive submodular} if for any subrealization $\psi$ such that $p(\psi) > 0$ and any $S \subseteq V \setminus \mathrm{range}(\psi)$, we have 
	\begin{equation}
	\sum_{v \in S} \Delta(v | \psi) \ge \gamma \Delta(S | \psi), 
	\end{equation}
	where $\gamma \in [0, 1]$ represents the submodularity ratio of the non-adaptive function. 
\end{defn}
Below we present a counterexample to the statement of \citep{Kusner14}, 
which says that a bounded $\gamma$ yields a bounded approximation ratio of the adaptive greedy algorithm. 

Let $\mathcal{Y} = \{0, 1,\dots, M-1\}$ be the set of all possible states and $V = \{u\} \cup \{z_i \mid i \in [k]\} \cup \{v^y_i \mid i \in [k-1], ~ y \in \mathcal{Y} \}$ be the ground set.
We define $f \colon 2^V \times \mathcal{Y}^V \to \mathbb{R}$ as follows:
\begin{equation}
	f(S, \phi) = |S \cap \{u\}| + (1+\epsilon)|S \cap \{z_1,\dots,z_k\}| + M \sum_{y \in \mathcal{Y}, i \in [k-1]} \mathbf{1}_{\{ \phi(v_i^y) = 1 ~ \text{and} ~ v_i^y \in S\}}, 
\end{equation}
where $\epsilon > 0$ is any small constant.
Note that this function is normalized and adaptive monotone. 
For each $y \in \calY$, we define $\phi_y$ as $\phi_y(u) = y$, $\phi_y(z_i) = 0$ for each $i \in [k]$, $\phi_y(v_i^y) = 1$ for each $i \in [k-1]$, and $\phi_y(v_i^{y'}) = 0$ for each $y' \in \calY \setminus \{y\}$ and $i \in [k-1]$.
Let $p$ be a distribution defined as 
\begin{equation}
p(\phi) = 
\begin{cases}
	\frac{1}{|\mathcal{Y}|} & \text{if} ~ \phi = \phi_y ~ \text{for some $y \in \calY$} \\
0 & \text{otherwise}.
\end{cases}
\end{equation}
It is easy to see that $f$ is approximately adaptive submodular with $\gamma=1$ w.r.t. $p$ because $\Delta( \cdot | \psi)$ is a linear function for any subrealization $\psi$.
Note that $f$ is not adaptive submodular w.r.t.\ $p$ because $\Delta(v_1^1 | \emptyset) = 1 < M = \Delta(v_1^1 | \{(u, 1)\})$. 

\citet{Kusner14} stated that the adaptive greedy algorithm achieves $(1 - \mathrm{e}^{- \gamma})$-approximation for any normalized, adaptive monotone, and approximately adaptive submodular function.
However, the adaptive greedy algorithm achieves only $(1+\epsilon)/M$-approximation for the above $f$ and $p$ as is explained below. The adaptive greedy algorithm selects $z_1,\dots,z_k$ since their expected marginal gain is $1 + \epsilon$ and the expected marginal gain of other elements is $1$.
On the other hand, the optimal policy first selects $u$ and proceeds to select $\{v^{\phi(u)}_1,\dots,v^{\phi(u)}_{k-1}\}$ according to the observed $\phi(u)$.
The adaptive greedy policy achieves $k(1+\epsilon)$ and the optimal policy achieves $1 + (k-1)M$.  
Thus the approximation ratio gets close to $(1+\epsilon)/M$ as $k$ increases, 
and it can be arbitrarily small since the number of possible states, $M$, is not bounded.   
Namely, 
even if $\gamma$ is bounded by a constant, 
the approximation guarantee of the adaptive greedy algorithm can become arbitrarily bad in general, 
which contradicts the statement of~\citep{Kusner14}. 

\section{About Comparison with \citep{YGO17}}
\subsection{Proof for Comparison with \citep{YGO17}}\label{subsec:proof-yong}
\begin{proof}[Proof of \Cref{prop:curvature-ratio-bound}]
	From the definition of $\zeta^*$, we have $\zeta^* \Delta(v|\psi) \ge \Delta(v|\psi')$ for any $\psi \subseteq \psi'$ and $v \in V \setminus \dom(\psi')$.
	It is enough to show $\frac{1}{\zeta^*} \Delta(\pi | \psi) \le \sum_{v \in V} \Pr(v \in E(\pi, \phi)) \Delta(v | \psi)$ for arbitrary $\psi \subseteq \psi'$ and $\pi$.
	Let $\psi_v$ be the partial realization just before $v$ is selected in $\pi$.
	If there are multiple partial realizations $\psi$ such that $\pi(\psi) = v$, we can duplicate $v$ and take them to be different elements.
	Then we can write $\Delta(\pi | \psi) = \sum_{v \in V} \Pr(v \in E(\pi, \phi)) \Delta(v | \psi_v)$.
	By applying the bound of weak adaptive submodularity, we have
	\begin{align}
	\Delta(\pi | \psi) &= \sum_{v \in V} \Pr(v \in E(\pi, \phi)) \Delta(v | \psi \cup \psi_v)\\
	&\le \zeta^* \sum_{v \in V} \Pr(v \in E(\pi, \phi)) \Delta(v | \psi),
	\end{align}
	which implies the statement.
\end{proof}

From this proposition, we can see that \Cref{thm:adaptive-greedy} is stronger than the result of \citep{YGO17} as follows. 
They showed that the adaptive greedy algorithm 
is guaranteed to achieve $(1 - \exp(- \ell / (\zeta^* k)))$-approximation in \citep[{Theorem 1}]{YGO17}.
From \Cref{prop:curvature-ratio-bound}, we always have $(1 - \exp(- \ell / (\zeta^* k))) \le (1 - \exp(- \gamma_{\psi, k}\ell /  k))$.

\subsection{Counterexample to \citep[{Proposition 2}]{YGO17}}\label{sec:counter-yong}
In this subsection we provide an instance of group-based active diagnosis in which the weak adaptive submodularity cannot give a bound of the approximation ratio of the adaptive greedy algorithm.

The formal problem statement of group-based active diagnosis can be described as follows. 
We have set $V$ of tests and set $\calY$ of their possible outcomes. 
There are two random variables that uniquely specify the outcome of each test: the state $x$ and the mode $q$.
Let $\calX$ be the set of all possible states and $\calQ$ the set of all possible modes. 
We know the prior joint distribution $p(x, q)$ of $x$ and $q$, but does not know their true values.
Let $\mu(v, x, q) \in \calY$ be the unique outcome of test $v$ when the true state is $x$ and the true mode is $q$.
We aim to determine $x$ by sequentially conducting several tests out of $V$.

\citet{YGO17} formulated this problem as the problem of maximizing the following objective function: 
\begin{equation}
	f(S, (x,q)) = 1 - \sum_{x' \in \calX \colon \exists q' \in \calQ, ~ \forall v \in S, ~ \mu(v, x', q') = \mu(v, x, q')} \sum_{q'' \in \calQ} p(x', q''), 
\end{equation}
where the first summation is about all possible $x' \in \calX$ under the outcomes of tests $S$ made so far.
Proposition 2 of \citep{YGO17} claims that this objective function is $\zeta$-weakly adaptive submodular for 
\begin{equation}
	\zeta \le \frac{|\calQ|}{\min_{x \in \calX, q \in \calQ} p(x,q)}. 
\end{equation}
However, it does not hold in the following example.

\begin{exmp}
	Let $\calX = \{x_1, x_2\}$ be the set of states and $\calQ = \{q_1, q_2, q_3\}$ the set of modes. 
	For each $x \in \calX$ and $q \in \calQ$, we assume $p(x, q) = \frac{1}{6}$. 
	We consider two actions $v_1$ and $v_2$, which yield the unique outcome out of $\calY = \{+1, -1\}$ indicated in \Cref{table:outcome} for each state $x \in \calX$ and mode $q \in \calQ$.
	
	\begin{table}[htbp]
		\centering
		\caption{Outcome}
		\label{table:outcome}
		\begin{tabular}{|c|c|c|}
			\hline
			$(x, q)$ & $\mu(v_1, x, q)$ & $\mu(v_2, x, q)$\\
			\hline
			$(x_1, q_1)$ & $+1$ & $+1$\\
			$(x_1, q_2)$ & $+1$ & $-1$\\
			$(x_1, q_3)$ & $-1$ & $+1$\\
			$(x_2, q_1)$ & $+1$ & $+1$\\
			$(x_2, q_2)$ & $+1$ & $-1$\\
			$(x_2, q_3)$ & $-1$ & $-1$\\
			\hline
		\end{tabular}
	\end{table}
	
	We first consider the expected marginal gain obtained by performing action $v_2$ at the beginning. In this situation, performing $v_2$ yields outcome $+1$ or $-1$ with probability $1/2$. If the outcome is $+1$, we can reject neither $x_1$ nor $x_2$. This is the case for outcome $-1$. Thus we have $\Delta(v_2 | \emptyset) = 0$.
	
	Next we assume the algorithm performs $v_1$ at the beginning and obtains the outcome of $-1$, i.e., $\psi = \{(v_1, -1)\}$. Now the possible pairs of the state and the mode are only $(x_1, q_3)$ and $(x_2, q_3)$. By performing action $v_2$, we obtain the outcome $+1$ or $-1$ with probability $\frac{1}{2}$ and reject $x_2$ or $x_1$, respectively. Thus the expected marginal gain is $\Delta(v_2 | \psi) = \frac{1}{2} \bbP[x_2] + \frac{1}{2} \bbP[x_1] = \frac{1}{2}$.
	
	From the definition of $\zeta$, we must have $\Delta(v_2 | \psi) \le \zeta \Delta(v_2 | \emptyset)$, but no finite $\zeta$ satisfies this inequality. This contradicts Proposition 2 in \citep{YGO17}, which claims $\zeta$ is finite.
\end{exmp}

\subsection{Comparison in Adaptive Influence Maximization}\label{subsec:comparison-yong-infmax}
We provide an instance of adaptive influence maximization such that 
the adaptive submodularity ratio yields an approximation ratio significantly better 
than that obtained with the weak adaptive submodularity~\citep{YGO17}. 

\begin{exmp}
We use the same problem instance as \Cref{exmp:infmax-star}.
At the beginning, the expected marginal gain of $v_k$ is $\Delta(v_k | \emptyset) = 1 / k$.
Let $\psi$ be the observations obtained when $v_1,\dots, v_{k-1}$ are selected and all edges are turned out to be dead.
In this case, since the edge $(v_k, u)$ must be alive, the expected marginal gain is $\Delta(v_k | \psi) = 1$.
The weak adaptive submodularity constant is lower-bounded as $\zeta \ge \Delta(v_k | \psi) / \Delta(v_k | \emptyset) = k$.
	This implies that the weak adaptive submodularity constant cannot yield a lower bound of the approximation ratio better than $1 - \exp(- \frac{1}{k}) = \rmO(\frac{1}{k})$, while the adaptive submodularity ratio provides a lower bound $1 - \exp(- (k + 1) / 2k) = \Omega(1)$.
\end{exmp}

\subsection{Comparison in Adaptive Feature Selection}\label{subsec:comparison-yong-feature}
	Regarding adaptive feature selection, we describe an advantage of the adaptive submodularity ratio 
	in comparison with the weak adaptive submodularity \cite{YGO17}. 
	As detailed below, there exists an instance with the following condition: 
	the approximation ratio obtained with the adaptive 
	submodularity ratio is bounded, 
	while that obtained with the weak adaptive submodularity is $0$. 
\begin{exmp}
	We can make such an instance even if $\phi$ is deterministic.
	Let $\bfA(\phi) = (\phi(1) \cdots \phi(n))$ be the realized feature matrix under realization $\phi$.
	The objective function is defined as 
	\begin{equation}
	f(S, \phi) = \| \bfb \|^2_2 - \min_{\bfw \in \bbR^S} \| \bfb - \bfA(\phi)_S \bfw \|^2_2.
	\end{equation}
	We here let 
	\begin{align}
	\bfA(\phi) 
	= 
	\begin{bmatrix}
	1 & 1/\sqrt{2} & 0 &  \cdots & 0\\
	0 & 1/\sqrt{2} & 0 &  \cdots & 0 \\
	0 & 0 & 1 &   &   &   \\
	\vdots & \vdots &    & \ddots &   \\
	0 & 0 &    &   &  1 \\
	\end{bmatrix}
	& &\text{and}& &
	\bfb
	= 
	\begin{bmatrix}
	0 \\
	a \\
	a \\
	\vdots \\
	a
	\end{bmatrix},  
	\end{align}
	where $a>0$ is an any positive real value. 
	Let $S=\{3,\dots,n\}$ and $T=\{2,\dots,n\}$, 
	which satisfy $S\subseteq T$. 
	Then, we have
	\begin{align}
	\min_{\bfw \in \bbR^S} \| \bfb - \bfA(\phi)_S \bfw \|^2_2
	=
	\min_{\bfw \in \bbR^{S\cup\{1\}}} \| \bfb - \bfA(\phi)_{S\cup\{1\}} \bfw \|^2_2
	=
	a^2
	\end{align}
	and 
	\begin{align}
	\min_{\bfw \in \bbR^T} \| \bfb - \bfA(\phi)_S \bfw \|^2_2
	=
	\frac{a^2}{2}
	>
	\min_{\bfw \in \bbR^{T\cup\{1\}}} \| \bfb - \bfA(\phi)_{T\cup\{1\}} \bfw \|^2_2
	=
	0.
	\end{align}
	Therefore, we obtain
	\begin{align}
	f(S\cup\{1\},\phi) - f(S,\phi) = a^2 - a^2 = 0 
	& & \text{and} & & 
	f(T\cup\{1\},\phi) - f(T,\phi) = 
	\frac{a^2}{2} - 0 
	=\frac{a^2}{2},  
	\end{align} 
	which implies that $\zeta$ cannot be bounded from above in general. 
	On the other hand, 
	the largest and smallest eigenvalues of the Hessian, $\bfA(\phi)^\top \bfA(\phi)$, 
	are $1+1/\sqrt{2}$ and $1-1/\sqrt{2}$, 
	respectively. 
	Therefore, 
	the condition number is bounded from above by 
	$3+2\sqrt{2}$, 
	which means the adaptive submodularity ratio is bounded 
	from below by $1/(3+2\sqrt{2})$.  
\end{exmp}


\end{document}